\def\eqref#1{equation~\ref{#1}}
\def\1{\bm{1}}
\def\rmI{{\mathbf{I}}}
\DeclareMathAlphabet{\mathsfit}{\encodingdefault}{\sfdefault}{m}{sl}
\SetMathAlphabet{\mathsfit}{bold}{\encodingdefault}{\sfdefault}{bx}{n}
\def\sR{{\mathbb{R}}}
\newcommand{\E}{\mathbb{E}}
\newcommand{\Var}{\mathrm{Var}}
\newcommand{\Cov}{\mathrm{Cov}}
\DeclareMathOperator{\Tr}{Tr}
\def\paref#1{(\ref{#1})}
\newcommand{\diff}{\mathrm{d}}
\newtheorem{assumption}{Assumption}
\numberwithin{equation}{section}
\begin{document}

\title{Gaussian Interpolation Flows}

\author{\name Yuan Gao \email yuan0.gao@connect.polyu.hk \\
        \addr Department of Applied Mathematics \\
        The Hong Kong Polytechnic University \\
        Hong Kong SAR, China \\
        \AND
        \name Jian Huang \email j.huang@polyu.edu.hk \\
        \addr Departments of Data Science and AI, and  Applied Mathematics \\
        The Hong Kong Polytechnic University \\
        Hong Kong SAR, China \\
        \AND
        \name Yuling Jiao \email yulingjiaomath@whu.edu.cn \\
        \addr School of Mathematics and Statistics \\
        and Hubei Key Laboratory of Computational Science \\
        Wuhan University, Wuhan, China
}

\editor{} 

\maketitle

\begin{abstract}
Gaussian denoising has emerged as a powerful method for constructing simulation-free continuous normalizing flows for generative modeling. Despite their empirical successes, theoretical properties of these flows and the regularizing effect of Gaussian denoising have remained largely unexplored. In this work, we aim to address this gap by investigating the well-posedness of simulation-free continuous normalizing flows built on Gaussian denoising. Through a unified framework termed Gaussian interpolation flow, we establish the Lipschitz regularity of the flow velocity field, the existence and uniqueness of the flow, and the Lipschitz continuity of the flow map and the time-reversed flow map for several rich classes of target distributions. This analysis also sheds light on the auto-encoding and cycle consistency properties of Gaussian interpolation flows. Additionally, we study the stability of these flows in source distributions and perturbations of the velocity field, using the quadratic Wasserstein distance as a metric. Our findings offer valuable insights into the learning techniques employed in Gaussian interpolation flows for generative modeling, providing a solid theoretical foundation for end-to-end error analyses of learning Gaussian interpolation flows with empirical observations.
\end{abstract}

\begin{keywords}
Continuous normalizing flows,
Gaussian denoising,
generative modeling,
Lipschitz transport maps,
stochastic interpolation.
\end{keywords}

\section{Introduction}

Generative modeling, which aims to learn the underlying data generating distribution from a finite sample, is a fundamental task in the field of machine learning and statistics \citep{salakhutdinov2015learning}.
Deep generative models (DGMs) find wide-ranging applications across diverse domains such as computer vision, natural language processing, drug discovery, and recommendation systems.
The core objective of DGMs is to learn a nonlinear mapping, either deterministic or stochastic (with outsourcing randomness), which transforms latent samples drawn from a simple reference distribution into samples that closely resemble the target distribution.

Generative adversarial networks (GANs) have emerged as a prominent class of DGMs \citep{goodfellow2014generative, arjovsky2017wasserstein, goodfellow2020generative}.
Through an adversarial training process, GANs learn to approximately generate samples from the data distribution. Variational auto-encoders (VAEs) are another category of DGMs \citep{kingma2014auto, rezende2014stochastic, kingma2019introduction}. In VAEs, the encoding and decoding procedures produce a compressed and structured latent representation, enabling efficient sampling and interpolation. Score-based diffusion models are a promising approach to deep generative modeling that has evolved rapidly since its emergence \citep{song2019generative, song2020improved, ho2020denoising, song2021scorebased, song2021denoising}. The basis of score-based diffusion models lies in the notion of the score function, which characterizes the gradient of the log-density function of a given distribution.

In addition, normalizing flows have gained attention as another powerful class of DGMs \citep{tabak2010density, tabak2013family, kobyzev2020normalizing, papamakarios2021normalizing}.
In normalizing flows, an invertible mapping is learned to transform a simple source distribution into a more complex target distribution by a composition of a series of parameterized, invertible and differentiable intermediate transformations. This framework allows for efficient sampling and training by maximum likelihood estimation \citep{dinh2014nice, rezende2015variational}. Continuous normalizing flows (CNFs) pursue this idea further by performing the transformation over continuous time, enabling fine-grained modeling of dynamic systems from the source distribution to the target distribution. The essence of CNFs lies in defining ordinary differential equations (ODEs) that govern the evolution of CNFs in terms of continuous trajectories. Inspired by the Gaussian denoising approach, which learns a target distribution by denoising its Gaussian smoothed counterpart,
many authors have considered simulation-free estimation methods that have shown great potential in large-scale applications \citep{song2021denoising, liu2023flow, albergo2023building, lipman2023flow, neklyudov2023action, tong2023conditional, chen2023riemannian, albergo2023stochastic, shaul2023kinetic, pooladian2023multisample}. However, despite the empirical success of simulation-free CNFs based on Gaussian denoising, rigorous theoretical analysis of these CNFs have received limited attention thus far.

In this work, we explore an ODE flow-based approach for generative modeling, which we refer to as Gaussian Interpolation Flows (GIFs). This method is derived from the Gaussian stochastic interpolation detailed in Section \ref{sec:gif}. GIFs represent a straightforward extension of the stochastic interpolation method \citep{albergo2023building, liu2023flow, lipman2023flow}. They can be considered a class of CNFs and encompass various ODE flows as special cases.
According to the classical Cauchy-Lipschitz theorem, also known as the Picard-Lindelöf theorem \cite[Theorem 1.1]{hartman2002existence}, a unique solution to the initial value problem for an ODE flow exists if the velocity field is continuous in the time variable and uniformly Lipschitz continuous in the space variable. In the case of GIFs, the velocity field depends on the score function of the push-forward measure. Therefore, it remains to be shown that this velocity field satisfies the regularity conditions stipulated by the Cauchy-Lipschitz theorem.
These regularity conditions are commonly assumed in the literature when analyzing the convergence properties of CNFs or general neural ODEs \citep{chen2018neural, bilovs2021neural, marion2023implicit, marion2023generalization, marzouk2023distribution}. However, there is a theoretical gap in understanding how to translate these regularity conditions on velocity fields into conditions on target distributions.

The main focus of this work is to study and establish the theoretical properties of Gaussian interpolation flow and its corresponding flow map. We show that the regularity conditions of the Cauchy-Lipschitz theorem are satisfied for several rich classes of probability distributions using variance inequalities. Based on the obtained regularity results, we further expose the well-posedness of GIFs, the Lipschitz continuity of flow mappings, and applications to generative modeling. The well-posedness results are crucial for studying the approximation and convergence properties of GIFs learned with the flow or score matching method. When applied to generative modeling, our results further elucidate the auto-encoding and cycle consistency properties exhibited by GIFs.

\bigskip
\begin{figure}[H]
\centering
\begin{tikzpicture}[node distance=1.1cm, auto]
\centering
\tikzset{
    mynode/.style={rectangle, rounded corners, draw=black, top color=white, 
    bottom color=white, 
    very thick, inner sep=0.5em, minimum size=1em, text centered},
    myarrow/.style={-, >=latex', shorten >=1pt, thick},
    mylabel/.style={text width=7em, text centered}
}

\node[] (dummy1) {};

\node[mynode, left=of dummy1] (box-1-1) {\footnotesize{Geometric regularity (Assumption \ref{assump:geom-prop})}};

\node[mynode, right=of dummy1] (box-1-2) {\footnotesize Gaussian interpolation flows};

\node[mynode, below= 2.7cm of box-1-1.west,anchor=west] (box-2-1) {\footnotesize Lipschitz velocity fields (Proposition \ref{prop:vf-bd})};

\node[mynode, below= 2.7cm of box-1-2.east,anchor=east] (box-2-2) {\footnotesize Well-posedness (Theorem \ref{thm:well-posed})};

\node[mynode, below= 2.7cm of box-2-1.west,anchor=west] (box-3-1) {\footnotesize Lipschitz flow maps (Propositions \ref{prop:lip-map}, \ref{prop:lip-map-mog})};

\node[mynode, below= 2.7cm of box-2-2.east,anchor=east] (box-3-2) {\footnotesize Auto-encoding, cycle consistency};

\node[mynode, below= 2.7cm of box-3-1.west,anchor=west] (box-4-1) {\footnotesize Stability in source distributions, stability in velocity fields (Propositions \ref{prop:stab-iv}, \ref{prop:stab-vf})};

\draw[->, >=latex', shorten >=2pt, shorten <=2pt, thick](box-2-1.east) to node[auto, swap, below, text width=6em, text centered] {} (box-2-2.west);

\draw[->, >=latex', shorten >=2pt, shorten <=2pt, thick](box-3-1.east) to node[auto, swap, above, text width=6em, text centered] {} (box-3-2.west);

\draw[->, >=latex', shorten >=2pt, shorten <=2pt, thick](box-1-1.south) to node[auto, swap, left, text width=6em, text centered] {\footnotesize Lemma \ref{lm:cond-cov} Lemma \ref{lm:log-lip} Lemma \ref{lm:vf-op}} (box-1-1.south |- box-2-1.north);

\draw[->, >=latex', shorten >=2pt, shorten <=2pt, thick](box-2-1.south) to node[auto, swap, left, text width=6em, text centered] {\footnotesize Lemma \ref{lm:diff-eq-flow} Lemma \ref{lm:flow-map-Lip-bd}} (box-2-1.south |- box-3-1.north);

\draw[->, >=latex', shorten >=2pt, shorten <=2pt, thick](box-2-2.south) to node[auto, swap, right, text width=6em, text centered] {\footnotesize Corollary \ref{cor:time-reve-flow}} (box-2-2.south |- box-3-2.north);

\draw[->, >=latex', shorten >=2pt, shorten <=2pt, thick](box-1-2.south) to node[auto, swap, right, text width=6em, text centered] {} (box-1-2.south |- box-2-2.north);

\draw[->, >=latex', shorten >=2pt, shorten <=2pt, thick](box-3-1.south) to node[auto, swap, left, text width=6em, text centered] {\footnotesize Lemma \ref{lm:ag-formula} Corollary \ref{cor:prepare-bd} Corollary \ref{cor:change-var}} (box-3-1.south |- box-4-1.north);

\end{tikzpicture}
\caption{Roadmap of the main results.}
\label{fig:roadmap}
\end{figure}
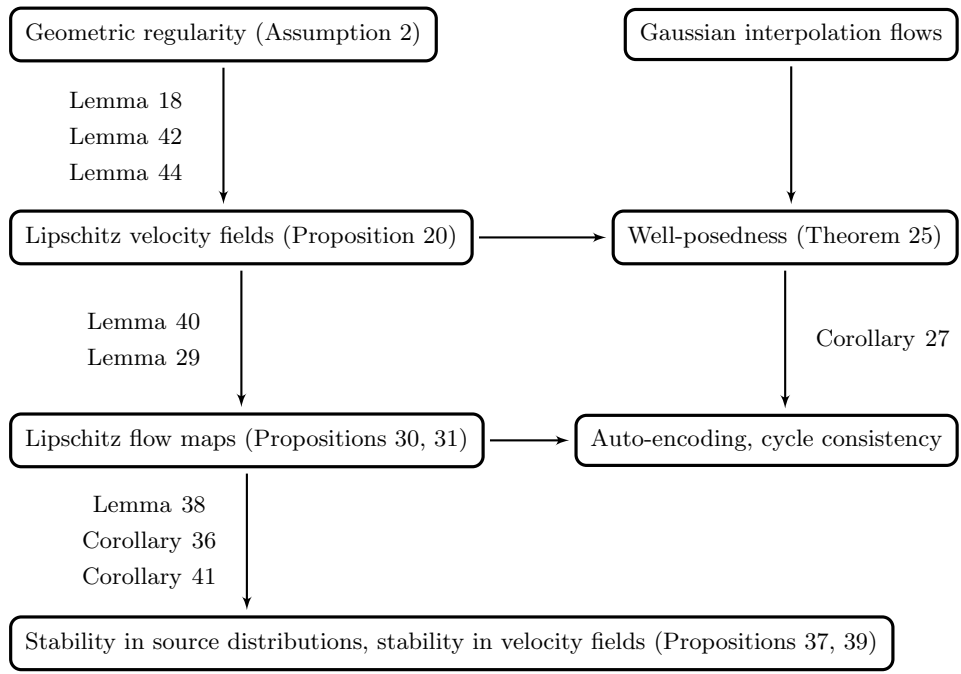

\subsection{Our main contributions}
We provide an overview of the main results in Figure \ref{fig:roadmap}, in which we indicate the assumptions used in our analysis and the relationship between the results. We also summarize our main contributions below.

\begin{itemize}
\item
In Section \ref{sec:gif}, we extend the framework of stochastic interpolation proposed in \citet{albergo2023building}. Various ODE flows can be considered special cases of the extended framework. We prove that the marginal distributions of GIFs satisfy the continuity equation converging to the target distribution in the weak sense. Several explicit formulas of the velocity field and its derivatives are derived, which can facilitate computation and regularity estimation.

\item
In Sections \ref{sec:spatial-lip} and \ref{sec:well-posed}, we establish the spatial Lipschitz regularity of the velocity field for a range of target measures with rich structures, which is sufficient to guarantee the well-posedness of GIFs. Additionally, we deduce the Lipschitz regularity of both the flow map and its time-reversed counterpart. The well-posedness of GIFs is an essential attribute, serving as a foundational requirement for investigating numerical solutions of GIFs. It is important to note that while the flow maps are demonstrated to be Lipschitz continuous transport maps for generative modeling, the Lipschitz regularity for optimal transport maps has only been partially established to date.

\item
In Section \ref{sec:app}, we show that the auto-encoding and cycle consistency properties of GIFs are inherently satisfied when the flow maps exhibit Lipschitz continuity with respect to the spatial variable. This demonstrates that exact auto-encoding and cycle consistency are intrinsic characteristics of GIFs. Our findings lend theoretical support to the findings made by \citet{su2023dual}, as illustrated in Figures \ref{fig:auto-encoding} and \ref{fig:cycle-consistency}.

\item In Section \ref{sec:app}, we conduct the stability analysis of GIFs, examining how they respond
to changes in source distributions and to perturbations in the velocity field. This analysis, conducted in terms of the quadratic Wasserstein distance, provides valuable insights that justify the use of learning techniques such as Gaussian initialization and flow or score matching.
\end{itemize}

\section{Preliminaries}
In this section, we include several preliminary setups to show notations, basic assumptions, and several useful variance inequalities.

\noindent \textbf{Notation.}
Here we summarize the notation.
The space $\sR^d$ is endowed with the Euclidean metric and we denote by $\Vert \cdot \Vert$ and $\langle \cdot, \cdot \rangle$ the corresponding norm and inner product.
Let $\mathbb{S}^{d-1} := \{x \in \sR^d : \Vert x \Vert = 1\}$.
For a matrix $A \in \sR^{k \times d}$, we use $A^{\top}$ for the transpose, and the spectral norm is denoted by $\Vert A \Vert_{2, 2} := \sup_{x \in \mathbb{S}^{d-1}} \Vert Ax \Vert$.
For a square matrix $A \in \sR^{d \times d}$, we use $\det(A)$ for the determinant and $\Tr(A)$ for the trace.
We use $\rmI_d$ to denote the $d \times d$ identity matrix.
For two symmetric matrices $A, B \in \sR^{d \times d}$, we denote $A \succeq B$ or $B \preceq A$ if $A - B$ is positive semi-definite.
For two vectors $x, y \in \sR^d$, we denote $x \otimes y := x y^{\top}$.
For $\Omega_1 \subset \sR^k, \Omega_2 \subset \sR^d, n \ge 1$, we denote by $C^n(\Omega_1; \Omega_2)$ the space of continuous functions $f: \Omega_1 \to \Omega_2$ that are $n$ times differentiable and whose partial derivatives of order $n$ are continuous.
If $\Omega_2 \subset \sR$, we simply write $C^n(\Omega_1)$.
For any $f(x) \in C^2(\sR^d)$, let $\nabla_x f, \nabla^2_x f, \nabla_x \cdot f$, and $\Delta_x f$ denote its gradient, Hessian, divergence, and Laplacian, respectively.
We use $X \lesssim Y$
to denote $X \le C Y$ for some constant $C > 0$.
The function composition operation is marked as $g \circ f := g(f(x))$ for functions $f$ and $g$.

The Borel $\sigma$-algebra of $\sR^d$ is denoted by $\mathcal{B}(\sR^d)$.
The space of probability measures defined on $(\sR^d, \mathcal B(\sR^d))$ is denoted as $\mathcal{P}(\sR^d)$.
For any $\sR^d$-valued random variable $\mathsf{X}$, we use $\E[\mathsf{X}]$ and $\Cov(\mathsf{X})$ to denote its expectation and covariance matrix, respectively.
We use $\mu * \nu$ to denote the convolution for any two probability measures $\mu$ and $\nu$, and we use $\overset{d}{=}$ to indicate two random variables have the same probability distribution.
For a random variable $\mathsf{X}$, let $\mathrm{Law}(\mathsf{X})$ denote its probability distribution.
Let $g: \sR^k \to \sR^d$ be a measurable mapping and $\mu$ be a probability measure on $\sR^k$. The push-forward measure $f_{\#} \mu$ of a measurable set $A$ is defined as $f_{\#} \mu := \mu(f^{-1} (A))$.
Let $N (m, \Sigma)$ denote a $d$-dimensional Gaussian random variable with mean vector $m \in \sR^d$ and covariance matrix $\Sigma \in \sR^{d \times d}$.
For simplicity, let $\gamma_{d, \sigma^2} := N(0, \sigma^2 \rmI_d)$,
and let $\varphi_{m, \sigma^2}(x)$ denote the probability density function of $N (m, \sigma^2 \rmI_d)$ with respect to the Lebesgue measure.
If $m = 0, \sigma = 1$, we abbreviate these as $\gamma_d$ and $\varphi(x)$.
Let $L^p(\sR^d; \sR^{\ell}, \mu)$ denote the $L^p$ space with the $L^p$ norm for $p \in [1, \infty]$ w.r.t. a measure $\mu$.
To simplify the notation, we write $L^p(\sR^d, \mu)$ if $\ell = 1$, $L^p(\sR^d; \sR^{\ell})$ if the Lebesgue measure is used, and $L^p(\sR^d)$ if both hold.

\subsection{Assumptions}

We focus on the probability distributions satisfying several types of assumptions of weak convexity, which offer a geometric notion of regularity that is dimension-free in the study of high-dimensional distributions \citep{klartag2010high}. On one hand, weak-convexity regularity conditions are useful in deriving dimension-free guarantees for generative modeling and sampling from high-dimensional distributions. On the other hand, they accommodate distributions with complex shapes, including those with multiple modes.

\begin{definition} [\citealp{cattiaux2014semi}]
\label{def:semi-log-concave}
A probability measure $\mu (\diff x) = \exp(-U) \diff x$ is $\kappa$-semi-log-concave for some $\kappa \in \sR$ if its support $\Omega \subseteq \sR^d$ is convex and its potential function $U \in C^2(\Omega)$ satisfies
\begin{equation*}
\nabla^2_x U(x) \succeq \kappa \mathbf{I}_d, \quad \forall x \in \Omega.
\end{equation*}
\end{definition}

The $\kappa$-semi-log-concavity condition is a relaxed notion of log-concavity, since here $\kappa < 0$ is allowed. When $\kappa \ge 0$, we are considering a log-concave probability measure that is proved to be unimodal \citep{saumard2014log}. However, when $\kappa < 0$, a $\kappa$-semi-log-concave probability measure can be multimodal.

\begin{definition} [\citealp{eldan2018regularization}]
\label{def:semi-log-convex}
A probability measure $\mu (\diff x) = \exp(-U) \diff x$ is $\beta$-semi-log-convex for some $\beta > 0$ if its support $\Omega \subseteq \sR^d$ is convex and its potential function $U \in C^2(\Omega)$ satisfies
\begin{equation*}
\nabla^2_x U(x) \preceq \beta \mathbf{I}_d, \quad \forall x \in \Omega.
\end{equation*}
\end{definition}

The following definition of $L$-log-Lipschitz continuity is a variant of $L$-Lipschitz continuity. It characterizes a first-order condition on the target function rather than a second-order condition such as $\kappa$-semi-log-concavity and $\beta$-semi-log-convexity in Definitions \ref{def:semi-log-concave} and \ref{def:semi-log-convex}.

\begin{definition}
\label{def:log-lip}
A function $f: \sR^d \to \sR_+$ is $L$-log-Lipschitz continuous if its logarithm is $L$-Lipschitz continuous for some $L \ge 0$.
\end{definition}

Based on the definitions, we present two assumptions on the target distribution.
Assumption \ref{assump:well-defined} concerns the absolute continuity and the moment condition.
Assumption \ref{assump:geom-prop} imposes geometric regularity conditions.

\begin{assumption} \label{assump:well-defined}
    The probability measure $\nu$ is absolutely continuous with respect to the Lebesgue measure and has a finite second moment.
\end{assumption}

\begin{assumption}
 \label{assump:geom-prop}
    Let $D := (1 / \sqrt{2}) \mathrm{diam} (\mathrm{supp}(\nu))$. The probability measure $\nu$ satisfies one or more of the following conditions:
    \begin{itemize}
        \item[(i)]    $\nu$ is $\beta$-semi-log-convex for some $\beta >0$ and $\kappa$-semi-log-concave for some $\kappa > 0$ with $\mathrm{supp}(\nu) = \sR^d$;
        \item[(ii)]   $\nu$ is $\kappa$-semi-log-concave for some $\kappa \in \sR$ with $D \in (0, \infty)$;
        \item[(iii)]  $\nu = \gamma_{d, \sigma^2} * \rho$  where $\rho$ is a probability measure supported on a Euclidean ball of radius $R$ on $\mathbb R^d$;
        \item[(iv)]   $\nu$ is $\beta$-semi-log-convex for some $\beta >0$, $\kappa$-semi-log-concave for some $\kappa \le 0$, and $\frac{\diff \nu}{\diff \gamma_d}(x)$ is $L$-log-Lipschitz in $x$ for some $L \ge 0$ with $\mathrm{supp}(\nu) = \sR^d$.
    \end{itemize}
\end{assumption}

    \textbf{Multimodal distributions.}
    Assumption \ref{assump:geom-prop} enumerates scenarios where probability distributions are endowed with geometric regularity.
    We examine the scenarios and clarify whether they cover multimodal distributions.
    Scenario (i) is referred to as the classical strong log-concavity case ($\kappa > 0$), and thus, describes unimodal distributions.
    Scenario (ii) allows $\kappa \le 0$ and requires that the support is bounded.
    Mixtures of Gaussian distributions are considered in Scenario (iii), and typically are multimodal distributions.
    Scenario (iv) also allows $\kappa \le 0$ when considering a log-Lipschitz perturbation of the standard Gaussian distribution.
    Both Scenario (ii) and Scenario (iv) incorporate multimodal distributions due to the potential
     negative lower bound $\kappa$.

\textbf{Lipschitz score.}
Lipschitz continuity of the score function is a basic regularity assumption on target distributions in the study of sampling algorithms based on Langevin and Hamiltonian dynamics. Even for high-dimensional distributions, this assumption endows a great source of regularity.
For an $L$-Lipschitz score function, its corresponding distribution is both $L$-semi-log-convex and $(-L)$-semi-log-concave for some $L \ge 0.$

\subsection{Variance inequalities}
Variance inequalities like the Brascamp-Lieb inequality and the Cram{\'e}r-Rao inequality are fundamental inequalities for explaining the regularizing effect of Gaussian denoising. Combined with $\kappa$-semi-log-concavity and $\beta$-semi-log-convexity, these inequalities are crucial for deducing the Lipschitz regularity of the velocity fields of GIFs in
Proposition \ref{prop:vf-bd}-(b) and (c).

\begin{lemma} [Brascamp-Lieb inequality]
    \label{lm:bli}
    Let $\mu(\diff x) = \exp(-U(x)) \diff x$ be a probability measure on a convex set $\Omega \subseteq \sR^d$ whose potential function $U: \Omega \to \sR$ is of class $C^2$ and strictly convex.
    Then for every locally Lipschitz function $f \in L^2(\Omega, \mu)$,
    \begin{equation}
       \label{eq:bli-gene}
       \Var_{\mu} (f) \le \E_{\mu} \left[ \langle \nabla_x f, (\nabla^2_x U)^{-1} \nabla_x f \rangle \right].
    \end{equation}
\end{lemma}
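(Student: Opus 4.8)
The plan is to prove Lemma \ref{lm:bli} by the $L^2$ (Helffer--Sjöstrand / Hörmander) method: reduce the variance bound to the Poisson equation for the weighted Laplacian of $\mu$, differentiate that equation once, and close the estimate by Cauchy--Schwarz. After subtracting a constant we may assume $\E_\mu[f] = 0$, so the target becomes $\E_\mu[f^2] \le \E_\mu[\langle \nabla_x f, (\nabla^2_x U)^{-1} \nabla_x f \rangle]$. By a density argument it suffices to prove this for smooth $f$ with enough decay (e.g.\ $f \in C_c^\infty$ if $\mathrm{supp}(\mu) = \sR^d$, or the analogous core otherwise) and then extend by approximation in the Dirichlet-form norm; we also assume $\nabla^2_x U$ is everywhere invertible, the degenerate case being handled by interpreting the right-hand side suitably or by a limiting argument.

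\textbf{Poisson equation.} Let $L := \Delta_x - \langle \nabla_x U, \nabla_x \cdot \rangle$ be the generator of the overdamped Langevin diffusion with invariant measure $\mu$; integration by parts on $\Omega$ (with no boundary contribution, using the decay of $\mu$ or a Neumann-type condition at $\partial \Omega$) gives the Dirichlet-form identity $\E_\mu[\, g \,(-Lh)\,] = \E_\mu[\langle \nabla_x g, \nabla_x h\rangle]$ for admissible $g, h$. Since $f$ has $\mu$-mean zero, solve $-L u = f$; rigorously one works with the regularized primitive $u_t := \int_0^t (P^L_s f)\, \diff s$, where $P^L_s$ is the Langevin semigroup, so that $-L u_t = f - P^L_t f \to f$. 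Granting the solution $u$, we obtain
\[
\Var_\mu(f) \;=\; \E_\mu[\, f \,(-L u)\,] \;=\; \E_\mu[\langle \nabla_x f, \nabla_x u\rangle].
\]

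\textbf{Differentiation and closing the estimate.} Differentiating $-Lu = f$ coordinatewise and using $\partial_{x_i}(Lu) = L(\partial_{x_i} u) - (\nabla^2_x U\, \nabla_x u)_i$, we get, with $v := \nabla_x u$ and $L$ acting entrywise, the vector identity $-L v + (\nabla^2_x U)\, v = \nabla_x f$. Pairing this with $v$ in $L^2(\mu)$, summing over coordinates, and using $\sum_i \E_\mu[(-L v_i)\, v_i] = \sum_i \E_\mu[\Vert \nabla_x v_i \Vert^2] \ge 0$ yields $\E_\mu[\langle (\nabla^2_x U) v, v\rangle] \le \E_\mu[\langle \nabla_x f, v\rangle] = \Var_\mu(f)$. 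Finally, by Cauchy--Schwarz with the weight $\nabla^2_x U$,
\[
\Var_\mu(f) = \E_\mu\big[\langle (\nabla^2_x U)^{-1/2}\nabla_x f,\ (\nabla^2_x U)^{1/2} v\rangle\big]
\le \sqrt{\E_\mu[\langle \nabla_x f, (\nabla^2_x U)^{-1}\nabla_x f\rangle]}\ \sqrt{\E_\mu[\langle (\nabla^2_x U) v, v\rangle]}.
\]
Combining the last two displays gives $\sqrt{\Var_\mu(f)} \le \sqrt{\E_\mu[\langle \nabla_x f, (\nabla^2_x U)^{-1}\nabla_x f\rangle]}$ (the case $\Var_\mu(f) = 0$ being trivial), and squaring yields the claim.

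\textbf{Main obstacle.} The algebraic heart of the argument is short; the real work is the functional-analytic layer: existence, uniqueness and $C^2$-regularity of the Poisson solution $u$ on a possibly unbounded convex domain $\Omega$, the justification of every integration by parts without boundary terms, and the passage from smooth test functions to a general locally Lipschitz $f \in L^2(\Omega,\mu)$. I expect the cleanest route is to carry Steps 2--3 through with the regularized primitive $u_t$ (so all objects are well defined and the identities exact up to terms involving $P^L_t f$ and $\nabla_x P^L_t f$ that vanish as $t \to \infty$ by the decay of the semigroup on mean-zero functions), then extend to general $f$ by density; handling $\partial\Omega$ when $\mathrm{supp}(\nu) \ne \sR^d$ is the most delicate point and is where strict convexity of $U$ is used to force the requisite decay of $\mu$.
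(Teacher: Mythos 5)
The paper does not prove this lemma at all: immediately after stating it, the text remarks that the Brascamp--Lieb inequality ``originally appears in \cite[Theorem 4.1]{brascamp1976extensions}'' and that ``alternative proofs are provided in \citet{bobkov2000brunn, bakry2014analysis, cordero2017transport},'' and the appendix contains no proof of Lemma~\ref{lm:bli}. So there is no in-paper argument to compare against; the authors invoke it as a known result.

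Your proposal is a correct instance of the Helffer--Sj\"ostrand / H\"ormander $L^2$ method, which is precisely one of the standard modern routes to \eqref{eq:bli-gene} (it is essentially the argument in \citet{bakry2014analysis}; H\"ormander's $\bar\partial$-estimate argument is the complex-analytic ancestor, and \citet{cordero2017transport} gives yet another, transport-based route). The algebraic skeleton is sound: the commutation identity $\partial_{x_i}(Lu)=L(\partial_{x_i}u)-(\nabla^2_x U\,\nabla_x u)_i$ is correct, pairing $-Lv+(\nabla^2_x U)v=\nabla_x f$ with $v=\nabla_x u$ and discarding the nonnegative $\sum_i\E_\mu[\Vert\nabla_x v_i\Vert^2]$ term gives $\E_\mu[\langle(\nabla^2_x U)v,v\rangle]\le\Var_\mu(f)$, and the weighted Cauchy--Schwarz closes the estimate. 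One small correction to the ``main obstacle'' paragraph: for a \emph{bounded} convex $\Omega$ strict convexity of $U$ does \emph{not} force $\exp(-U)$ to decay at $\partial\Omega$ --- the density is typically bounded away from zero there. The integrations by parts are justified not by decay of $\mu$ but by imposing Neumann boundary conditions on the Poisson solution $u$ (and its differentiated version), so that the flux terms $\int_{\partial\Omega}\langle\nabla_x u,n\rangle\,g\,e^{-U}$ vanish; this is how the cited references handle the general-domain case. With that adjustment, your proof sketch is correct and genuinely supplies an argument the paper itself omits.
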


When applied to functions of the form $f: x \mapsto \langle x, e \rangle$ for any $e \in \mathbb{S}^{d-1}$, the Brascamp-Lieb inequality yields an upper bound of the covariance matrix
 \begin{equation}
        \label{eq:bli-cov}
        \Cov_{\mu} (\mathsf{X}) \preceq \E_{\mu} \left[ (\nabla^2_x U(x))^{-1} \right]
 \end{equation}
 with equality if $\mathsf{X} \sim N(m, \Sigma)$ with $\Sigma$ positive deﬁnite.

 Under the strong log-concavity condition, that is,  $\mu$ is $\kappa$-semi-log-concave with $\kappa > 0$ and the Euclidean Bakry-{\'E}mery criterion is satisfied \citep{bakry1985diffusions}, the Brascamp-Lieb inequality instantly recovers the Poincar{\'e} inequality (see Definition \ref{def:pi}).

 The Brascamp-Lieb inequality originally appears in \cite[Theorem 4.1]{brascamp1976extensions}.
 Alternative proofs are provided in \citet{bobkov2000brunn, bakry2014analysis, cordero2017transport}. The dimension-free inequality \paref{eq:bli-gene} can be further strengthened to obtain several variants with dimensional improvement.

\begin{lemma} [Cram{\'e}r-Rao inequality]
    \label{lm:cri}
    Let $\mu(\diff x) = \exp(-U(x)) \diff x$ be a probability measure on $\sR^d$ whose potential function $U: \sR^d \to \sR$ is of class $C^2$.
    Then for every $f \in C^1(\sR^d)$,
    \begin{equation}
       \label{eq:cri-gene}
       \Var_{\mu} (f) \ge \langle \E_{\mu }[\nabla_x f], \left( \E_{\mu}[\nabla^2_x U] \right)^{-1} \E_{\mu}[\nabla_x f] \rangle.
    \end{equation}
\end{lemma}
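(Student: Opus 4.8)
The plan is to derive \paref{eq:cri-gene} from the ordinary Cauchy--Schwarz (covariance) inequality applied to the pair consisting of $f$ and a directional score $x \mapsto \langle v, \nabla_x U(x)\rangle$, together with two integration-by-parts identities. Throughout I work under the mild decay of $f e^{-U}$ and $\nabla_x U\, e^{-U}$ at infinity needed to discard boundary terms; if $\E_\mu[\nabla^2_x U]$ is not finite and positive definite, the asserted bound is either vacuous or read through a pseudoinverse, so I may assume it is invertible.

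First I would record the integration-by-parts facts. Since $\langle v, \nabla_x U\rangle e^{-U} = -\langle v, \nabla_x e^{-U}\rangle$, integrating by parts gives $\E_\mu[\nabla_x U] = 0$ and, for every $v \in \sR^d$,
\[
\E_\mu\!\left[ f\, \langle v, \nabla_x U\rangle \right] = \langle v, \E_\mu[\nabla_x f]\rangle .
\]
A second, entrywise integration by parts on $\partial_i\partial_j U$ yields
\[
\E_\mu[\nabla^2_x U] = \E_\mu[\nabla_x U \otimes \nabla_x U] =: \mathcal{I},
\]
the Fisher information matrix of the location family $\{U(\cdot - \theta)\}_\theta$.

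Next I would apply the covariance inequality to $f$ and $\langle v, \nabla_x U\rangle$. Because $\E_\mu[\langle v, \nabla_x U\rangle] = 0$, the identities above give, for every $v \in \sR^d$,
\[
\langle v, \E_\mu[\nabla_x f]\rangle^2 = \Cov_\mu\!\big(f,\langle v,\nabla_x U\rangle\big)^2 \le \Var_\mu(f)\, \Var_\mu\!\big(\langle v,\nabla_x U\rangle\big) = \Var_\mu(f)\, \langle v, \mathcal{I} v\rangle .
\]
Finally I would optimize the left-hand side over $v$; equivalently, choose $v = \mathcal{I}^{-1}\E_\mu[\nabla_x f]$, so the inequality becomes
\[
\langle \E_\mu[\nabla_x f], \mathcal{I}^{-1}\E_\mu[\nabla_x f]\rangle^2 \le \Var_\mu(f)\, \langle \E_\mu[\nabla_x f], \mathcal{I}^{-1}\E_\mu[\nabla_x f]\rangle,
\]
and dividing through recovers \paref{eq:cri-gene} with $\mathcal{I} = \E_\mu[\nabla^2_x U]$.

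The only genuine obstacle is the analytic bookkeeping: justifying that the boundary terms in both integrations by parts vanish, and ensuring $\langle v, \nabla_x U\rangle \in L^2(\mu)$ so that the covariance inequality is meaningful (if it fails, $\langle v, \mathcal{I} v\rangle = \infty$ and the bound is trivial). Everything else is the one-line Cauchy--Schwarz estimate plus the elementary fact $\sup_{v \neq 0} \langle a, v\rangle^2 / \langle v, \mathcal{I} v\rangle = \langle a, \mathcal{I}^{-1} a\rangle$ for positive definite $\mathcal{I}$.
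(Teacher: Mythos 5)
Your proof is correct, and it follows the standard route for this variance lower bound. Note that the paper itself does not supply a proof of Lemma \ref{lm:cri}; it defers to \citet{chewi2022entropic} and \citet{dai2023lipschitz}. The argument you give --- two integrations by parts producing $\E_\mu[f\,\langle v,\nabla_x U\rangle]=\langle v,\E_\mu[\nabla_x f]\rangle$ and the Bartlett-type identity $\E_\mu[\nabla^2_x U]=\E_\mu[\nabla_x U\otimes\nabla_x U]$, then Cauchy--Schwarz against the centered directional score followed by optimization over $v$ --- is exactly the classical Cram\'er--Rao derivation, and it matches what is done in those references. It also makes explicit the point the paper only alludes to, namely that \paref{eq:cri-cov} ``has an alternative derivation from the Cram\'er--Rao bound for the location parameter'': your matrix $\mathcal{I}=\E_\mu[\nabla_x U\otimes\nabla_x U]$ is precisely the Fisher information of the translation family. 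The caveats you flag (vanishing boundary terms, $\nabla_x U\in L^2(\mu)$, invertibility or pseudoinverse of $\mathcal{I}$) are the right ones and are the reason the paper states the lemma at the level of $U\in C^2$ without dwelling on the analytic hypotheses. One tiny cosmetic remark: in the optimization step you could skip the ``square and divide'' maneuver by simply invoking $\sup_{v\ne 0}\langle a,v\rangle^2/\langle v,\mathcal{I}v\rangle=\langle a,\mathcal{I}^{-1}a\rangle$ directly, which you in fact state at the end; either way the conclusion is the same.
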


When applied to functions of the form $f: x \mapsto \langle x, e \rangle$ for any $e \in \mathbb{S}^{d-1}$, the Cram{\'e}r-Rao inequality yields a lower bound of the covariance matrix
    \begin{equation}
        \label{eq:cri-cov}
        \Cov_{\mu} (\mathsf{X}) \succeq \left( \E_{\mu} [\nabla^2_x U(x)] \right)^{-1}
    \end{equation}
with equality as well if $\mathsf{X} \sim N(m, \Sigma)$ with $\Sigma$ positive definite.

The Cram{\'e}r-Rao inequality plays a central role in asymptotic statistics as well as in information theory. The inequality \paref{eq:cri-cov} has an alternative derivation from the Cram{\'e}r-Rao bound for the location parameter. For detailed proofs of the Cram{\'e}r-Rao inequality, readers are referred to \citet{chewi2022entropic, dai2023lipschitz}, and the references therein.

\section{Gaussian interpolation flows} \label{sec:gif}

Simulation-free CNFs represent a potent class of generative models based on ODE flows. \citet{albergo2023building} and \citet{albergo2023stochastic} introduce an innovative CNF that is constructed using stochastic interpolation techniques, such as Gaussian denoising. They conduct a thorough investigation of this flow, particularly examining its applications and effectiveness in  generative modeling.

We study the ODE flow and its associated flow map as defined by the Gaussian denoising process. This process has been explored from various perspectives, including diffusion models and stochastic interpolants. Building upon the work of \citet{albergo2023building} and \citet{albergo2023stochastic}, we expand the stochastic interpolant framework by relaxing certain conditions on the functions $a_t$ and $b_t$, offering a more comprehensive perspective on the Gaussian denoising process.

In our generalization, we introduce an adaptive starting point to the stochastic interpolation framework, which allows for greater flexibility in the modeling process. By examining this modified framework, we aim to demonstrate that the Gaussian denoising principle is effectively implemented within the context of stochastic interpolation.

\begin{definition} [Vector interpolation] \label{def:vec-interp}
Let $z \in \sR^d,$  $x_1 \in \sR^d$ be two vectors in the Euclidean space and let $x_0 := a_0 z + b_0 x_1$ with $a_0 > 0, b_0 \ge 0$.
Then we construct an interpolant between $x_0$ and $x_1$ over time $t \in [0, 1]$ through $I_t(x_0, x_1),$ defined by
\begin{equation}
    \label{eq:vec-interp}
    I_t (x_0, x_1) = a_t z + b_t x_1,
\end{equation}
where $a_t, b_t$ satisfy
\begin{align}
\label{eq:abt}
\begin{aligned}
& \dot{a}_t \le 0, \quad \dot{b}_t \ge 0, \quad a_0 > 0, \quad b_0 \ge 0, \quad a_1 = 0, \quad b_1 = 1,\\
& a_t > 0 \ \ \text{for any $t \in (0,1)$}, \quad b_t >0 \ \ \text{for any $t \in (0,1)$}, \\
& a_t, b_t \in C^2([0,1)), \quad a_t^2 \in C^1([0, 1]), \quad b_t \in C^1([0, 1]).
\end{aligned}
\end{align}
\end{definition}

\begin{remark}
Compared with the vector interpolant defined by \citet{albergo2023building} (a.k.a. one-sided interpolant in \citet{albergo2023stochastic}), we extend its definition by relaxing the requirements that $a_0 = 1, b_0 = 0$ with $a_0 > 0, b_0 \ge 0$. This consideration is largely motivated by analyzing the probability flow ODEs of the variance-exploding (VE) SDE and the variance-preserving (VP) SDE \citep{song2021scorebased}.
We illustrate examples of interpolants incorporated by Definition \ref{def:vec-interp} in Table \ref{tab:vec-interp}.
\end{remark}

\begin{remark}
We have eased the smoothness conditions for the functions $a_t$ and $b_t$ required in
\citet{albergo2023building}.
 Specifically, we consider the case where $a_t, b_t \in C^2([0,1))$, $a_t^2 \in C^1([0, 1])$, and $b_t \in C^1([0, 1])$. This relaxation enables us to include the Föllmer flow into our framework, characterized by $a_t = \sqrt{1-t^2}$ and $b_t = t$. It is evident that $a_t = \sqrt{1-t^2}$ does not fulfill the condition $a_t \in C^2([0,1])$, but it does meet the requirements $a_t \in C^2([0,1))$ and $a_t^2 \in C^1([0, 1])$.
\end{remark}

\begin{remark}
    The $C^2$ regularity of $a_t, b_t$ is necessary to derive the regularity of the velocity field $v(t, x)$ in Eq. \paref{eq:vf-expect} concerning the time variable $t$. In addition, the $C^1$ regularity of $a_t^2, b_t$ is sufficient to ensure the Lipschitz regularity of the velocity field $v(t, x)$ in Eq. \paref{eq:vf-expect} concerning the space variable $x$.
\end{remark}

{\color{black}
    A natural generalization of the vector interpolant \paref{eq:vec-interp} is to construct a set interpolant between two convex sets through Minkowski sum, which is common in convex geometry. A set interpolant stimulates the construction of a measure interpolant between a structured source measure and a target measure.

As noted, we can construct a measure interpolation using a Gaussian convolution path.
The measure interpolation is particularly relevant to Gaussian denoising and Gaussian channels in information theory as elucidated in Remark \ref{rm:gauss-channel}. 
Because of this connection with Gaussian denoising,
we call the measure interpolation a Gaussian stochastic interpolation.
The Gaussian stochastic interpolation can be understood as a collection of linear combinations of a standard Gaussian random variable and the target random variable.
The coefficients of the linear combinations vary with time $t \in [0, 1]$ as shown in Definition \ref{def:vec-interp}.
Later in this section, we will show this Gaussian stochastic interpolation can be transformed into a deterministic ODE flow.

Gaussian stochastic interpolation has been investigated from several perspectives in the literature.
The rectified flow has been proposed in \citet{liu2023flow}, and its theoretical connection with optimal transport has been investigated in \citet{liu2022rectified}.
The formulation of the rectified flow is to learn the ODE flow defined by stochastic interpolation with linear time coefficients.
In Section 2.3 of \citet{liu2023flow}, there is a nonlinear extension of the rectified flow in which the linear coefficients are replaced by general nonlinear coefficients.
\citet{albergo2023stochastic} extends the stochastic interpolant framework proposed in \citep{albergo2023building} by considering a linear combination among three random variables.
In Section 3 of \citet{albergo2023stochastic}, the original stochastic interpolant framework is recovered as a one-sided interpolant between the Gaussian distribution and the target distribution.
Moreover, \citet{lipman2023flow} propose a flow matching method which directly learns a Gaussian conditional probability path with a neural ODE.
In Section 4.1 of \citep{lipman2023flow}, the velocity fields of the variance exploding and  variance preserving probability flows are shown as special instances of the flow matching framework.
We summarize these formulations as Gaussian stochastic interpolation by slightly extending the original stochastic interpolant framework.
}

\begin{table}[thb]
    \centering
    \begin{tabular}{cccccc}
    \toprule
    Type & VE & VP & Linear & F{\"o}llmer & Trigonometric \\
    \midrule
    $a_t$ & $\alpha_t$ & $\alpha_t$ & $1-t$ & $\sqrt{1 - t^2}$ & $\cos(\tfrac{\pi}{2}t)$ \\
    $b_t$ & $1$   & $\sqrt{1 - \alpha_t^2}$ & $t$ & $t$   & $\sin(\tfrac{\pi}{2}t)$ \\
    $a_0$ & $\alpha_0$ & $\alpha_0$              & $1$ & $1$   & $1$ \\
    $b_0$ & $1$   & $\sqrt{1 - \alpha_0^2}$ & $0$ & $0$   & $0$ \\
    Source & Convolution & Convolution & $\gamma_d$ & $\gamma_d$ & $\gamma_d$ \\
    \bottomrule
    \end{tabular}
    \caption{Summary of various measure interpolants including VE interpolant \citep{song2021scorebased}, VP interpolant \citep{song2021scorebased}, linear interpolant \citep{liu2023flow}, F{\"o}llmer interpolant \citep{dai2023lipschitz}, and trigonometric interpolant \citep{albergo2023building}. There are two types of source measures including a standard Gaussian distribution $\gamma_d$ and a convoluted distribution consisting of the target distribution and $\gamma_d$.}
    \label{tab:vec-interp}
\end{table}

\begin{definition} [Measure interpolation] \label{def:measure-interp}
Let $\mu = \mathrm{Law}(\mathsf{X}_0)$ and $\nu = \mathrm{Law}(\mathsf{X}_1)$ be two probability measures satisfying $\mathsf{X}_0 = a_0 \mathsf{Z} + b_0 \mathsf{X}_1$ where $\mathsf{Z} \sim \gamma_d:=N(0, \rmI_d)$ is independent from $\mathsf{X}_1$.
We call $(\mathsf{X}_t)_{t \in [0, 1]}$ a Gaussian stochastic interpolation from the source measure $\mu$ to the target measure $\nu$, which is defined through $I_t$ over time interval $[0, 1]$ as follows
\begin{equation}
    \label{eq:stoc-interpolation}
    \mathsf{X}_t = I_t (\mathsf{X}_0, \mathsf{X}_1), \quad \mathsf{X}_0 = a_0 \mathsf{Z} + b_0 \mathsf{X}_1, \quad \mathsf{Z} \sim \gamma_d, \quad \mathsf{X}_1 \sim \nu.
\end{equation}
\end{definition}

\begin{remark}
    It is obvious that the marginal distribution of $\mathsf{X}_t$ satisfies $\mathsf{X}_t \overset{d}{=} a_t \mathsf{Z} + b_t \mathsf{X}_1$ with $\mathsf{Z} \sim \gamma_d, \mathsf{X}_1 \sim \nu$.
\end{remark}

Motivated by the time-varying properties of the Gaussian stochastic interpolation, we derive that its marginal flow satisfies the continuity equation. This result characterizes the dynamics of the marginal density flow of the Gaussian stochastic interpolation.

\begin{theorem} \label{thm:flow-gif}
Suppose that Assumption \ref{assump:well-defined} holds.
Then the marginal flow $(p_t)_{t \in [0, 1]}$ of the Gaussian stochastic interpolation $(\mathsf{X}_t)_{t \in [0, 1]}$ between $\mu$ and $\nu$ satisfies the continuity equation
\begin{equation}
    \label{eq:cont-eq}
    \partial_t p_t + \nabla_x \cdot (p_t v(t, x)) = 0, \quad (t, x) \in [0, 1] \times \sR^d, \quad p_0(x) = \tfrac{\diff \mu}{\diff x}(x), \quad p_1(x) = \tfrac{\diff \nu}{\diff x}(x)
\end{equation}
in the weak sense with the velocity field
\begin{align}
    v(t, x) &:= \E[ \dot{a}_t \mathsf{Z} + \dot{b}_t \mathsf{X}_1 | \mathsf{X}_t = x], \quad t \in (0, 1), \label{eq:vf-expect} \\
    v(0, x) &:= \lim_{t \downarrow 0} v(t, x), ~~~~ v(1, x) := \lim_{t \uparrow 1} v(t, x) \label{eq:vf-boundary}.
\end{align}
\end{theorem}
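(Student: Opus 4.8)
# Proof Plan for Theorem~\ref{thm:flow-gif}

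The plan is to verify the continuity equation in the weak sense, i.e., to show that for every test function $\phi \in C_c^\infty(\sR^d)$ the map $t \mapsto \int_{\sR^d} \phi(x) p_t(x)\,\diff x$ is differentiable on $(0,1)$ with derivative $\int_{\sR^d} \langle \nabla_x \phi(x), v(t,x)\rangle\, p_t(x)\,\diff x$, together with the stated boundary conditions $p_0 = \diff\mu/\diff x$ and $p_1 = \diff\nu/\diff x$. The natural starting point is the representation $\mathsf{X}_t \overset{d}{=} a_t \mathsf{Z} + b_t \mathsf{X}_1$ noted in the remark after Definition~\ref{def:measure-interp}, so that $\int \phi(x) p_t(x)\,\diff x = \E[\phi(a_t \mathsf{Z} + b_t \mathsf{X}_1)]$.

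First I would differentiate this expectation in $t$. Since $a_t, b_t \in C^2([0,1))$ and $\phi$ is smooth with compact support (hence $\nabla\phi$ bounded), a dominated-convergence / differentiation-under-the-integral argument gives, for $t \in (0,1)$,
\begin{equation*}
\frac{\diff}{\diff t} \E[\phi(\mathsf{X}_t)] = \E\big[ \langle \nabla\phi(\mathsf{X}_t),\, \dot a_t \mathsf{Z} + \dot b_t \mathsf{X}_1 \rangle \big].
\end{equation*}
The domination is straightforward: $|\langle \nabla\phi(\mathsf X_t), \dot a_t \mathsf Z + \dot b_t \mathsf X_1\rangle| \le \|\nabla\phi\|_\infty (|\dot a_t|\,\|\mathsf Z\| + |\dot b_t|\,\|\mathsf X_1\|)$, which is integrable by Assumption~\ref{assump:well-defined} (finite second moment of $\nu$, hence finite first moment) and local boundedness of $\dot a_t, \dot b_t$ on compact subintervals of $(0,1)$. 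Next I would condition on $\mathsf{X}_t$: by the tower property,
\begin{equation*}
\E\big[ \langle \nabla\phi(\mathsf{X}_t),\, \dot a_t \mathsf{Z} + \dot b_t \mathsf{X}_1 \rangle \big] = \E\big[ \langle \nabla\phi(\mathsf{X}_t),\, \E[\dot a_t \mathsf{Z} + \dot b_t \mathsf{X}_1 \mid \mathsf{X}_t] \rangle \big] = \int_{\sR^d} \langle \nabla_x\phi(x), v(t,x)\rangle\, p_t(x)\,\diff x,
\end{equation*}
using the definition~\paref{eq:vf-expect} of $v$. This is exactly the weak form of $\partial_t p_t + \nabla_x\cdot(p_t v) = 0$ after an integration by parts (formally; the weak formulation avoids ever differentiating $p_t v$ directly). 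One should also check $t \mapsto \E[\phi(\mathsf X_t)]$ is continuous on $[0,1]$ and the $t$-derivative extends continuously, so the distributional identity holds on the closed interval with the boundary velocities defined by the limits in~\paref{eq:vf-boundary}; this needs the hypotheses $a_t^2 \in C^1([0,1])$ and $b_t \in C^1([0,1])$, which control behavior near $t=0,1$ where $a_t$ itself may fail to be $C^1$ (as in the Föllmer case).

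For the boundary conditions: at $t=1$, $a_1 = 0$ and $b_1 = 1$ give $\mathsf X_1 \overset{d}{=} \mathsf X_1 \sim \nu$, so $p_1 = \diff\nu/\diff x$; for this to be a genuine density I would invoke that $a_t\mathsf Z + b_t\mathsf X_1$ has a Lebesgue density for $t \in (0,1)$ (Gaussian convolution) and that $\nu \ll \mathrm{Leb}$ by Assumption~\ref{assump:well-defined}. At $t=0$, $\mathsf X_0 = a_0\mathsf Z + b_0\mathsf X_1$ by construction in Definition~\ref{def:measure-interp}, so $p_0 = \diff\mu/\diff x$ (and since $a_0 > 0$ the convolution with a nondegenerate Gaussian guarantees $\mu$ has a density).

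The main obstacle I anticipate is the behavior at the temporal endpoints, particularly $t \uparrow 1$ where $a_t \downarrow 0$: the conditional expectation $v(t,x)$ can blow up, and the reduced smoothness $a_t^2 \in C^1([0,1])$ (rather than $a_t \in C^2([0,1])$) must be used carefully to make sense of $\E[\phi(\mathsf X_t)]$ as a $C^1$ function of $t$ up to the boundary and to justify that the limit $v(1,x) = \lim_{t\uparrow 1} v(t,x)$ in~\paref{eq:vf-boundary} exists and integrates against $p_1$ appropriately. A clean way around this is to establish the continuity equation first on every compact subinterval $[\delta, 1-\delta] \subset (0,1)$ by the argument above, then pass to the limit $\delta \to 0$ using continuity of $t \mapsto \E[\phi(\mathsf X_t)]$ on $[0,1]$ (which follows from $L^2$-continuity of $t \mapsto a_t\mathsf Z + b_t\mathsf X_1$, itself a consequence of $a_t^2, b_t \in C^0([0,1])$) — thereby obtaining the weak continuity equation on the full interval $[0,1]\times\sR^d$ as claimed, with the endpoint velocity fields understood via~\paref{eq:vf-boundary}.
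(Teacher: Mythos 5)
Your proposal is correct and proves the same result, but takes a genuinely different technical route from the paper. The paper proves Theorem~\ref{thm:flow-gif} via characteristic functions: it defines $\Psi(t,\omega) = \E[\exp(i\langle\omega,\mathsf{X}_t\rangle)]$, differentiates in $t$ to obtain $\partial_t\Psi(t,\omega) = i\langle\omega,\psi(t,\omega)\rangle$ with $\psi(t,\omega) = \E[\exp(i\langle\omega,\mathsf{X}_t\rangle)(\dot a_t\mathsf{Z}+\dot b_t\mathsf{X}_1)]$, applies the tower property to rewrite $\psi(t,\omega) = \E[\exp(i\langle\omega,\mathsf{X}_t\rangle)v(t,\mathsf{X}_t)]$, and then inverts the Fourier transform to identify the flux $j(t,x) = p_t(x)v(t,x)$ and obtain the continuity equation. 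You instead pair $p_t$ directly with a compactly supported smooth test function $\phi$, differentiate $\E[\phi(a_t\mathsf{Z}+b_t\mathsf{X}_1)]$ under the integral by dominated convergence, and condition on $\mathsf{X}_t$ to produce $v(t,\cdot)$. Both approaches hinge on exactly the same two ideas --- differentiation of $\E[g(\mathsf{X}_t)]$ along the interpolation path and a tower-property step to define $v$ --- but yours is more elementary: it works with the textbook definition of weak solution and avoids the Fourier-inversion step, whereas the paper inherits the characteristic-function technique from \citet{albergo2023building}. The paper's route has the minor advantage of producing the pointwise flux $j(t,x)$ as an intermediate object, while yours gives the distributional pairing directly; these are interchangeable.

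Your handling of the endpoint $t\uparrow 1$ (where $a_t\downarrow 0$ and only $a_t^2\in C^1([0,1])$ is guaranteed) is sensible and matches the spirit of the paper: the paper deals with the boundary by rewriting $v(t,x)$ via Tweedie's formula as $v(t,x)=\tfrac{\dot b_t}{b_t}x+(\tfrac{\dot b_t}{b_t}a_t^2-\dot a_ta_t)s(t,x)$, so that only $\dot a_ta_t = \tfrac12\partial_t(a_t^2)$ and $\dot b_t$ need to be continuous up to $t=1$. Your proposed workaround --- establish the equation on $[\delta,1-\delta]$ and pass to the limit using $L^2$-continuity of the interpolation path --- achieves the same effect and is a clean alternative way to make $v(1,x)$ rigorous. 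In sum, the proposal is sound; the difference is one of technique (test-function duality vs.\ Fourier inversion), not substance.
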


\begin{remark}
We notice that $x = a_t \E[\mathsf{Z} | \mathsf{X}_t = x] + b_t \E[\mathsf{X}_1 | \mathsf{X}_t = x]$ due to Eq. \paref{eq:stoc-interpolation}.
Then it holds that
\begin{equation}
    \label{eq:vf-expect-single-rmk}
    v(t, x) = \tfrac{\dot{a}_t}{a_t} x + \left( \dot{b}_t - \tfrac{\dot{a}_t}{a_t}b_t \right) \E[\mathsf{X}_1 | \mathsf{X}_t = x], \quad t \in (0, 1).
\end{equation}
We also notice that, according to Tweedie's formula (cf. Lemma \ref{lm:tw-formula} in the Appendix), it holds that
\begin{equation}
    \label{eq:tw-formula-rmk}
    s(t, x) = \tfrac{b_t}{a_t^2} \E \left[ \mathsf{X}_1 | \mathsf{X}_t = x \right] - \tfrac{1}{a_t^2} x, \quad t \in (0, 1),
\end{equation}
where $s(t, x)$ is the score function of the marginal distribution of $\mathsf{X}_t \sim p_t$.

Combining \paref{eq:vf-expect-single-rmk} and \paref{eq:tw-formula-rmk}, it follows that the velocity field is a gradient field and its nonlinear term is the score function $s(t, x)$, namely, for any $t \in (0, 1)$,
\begin{align}
    \label{eq:gene-vf-interpolation-rmk}
    v(t, x) = \tfrac{\dot{b}_t}{b_t} x + \left( \tfrac{\dot{b}_t}{b_t} a_t^2 - \dot{a}_t a_t \right) s(t, x).
\end{align}
\end{remark}

\begin{remark}
A relevant result has been provided in the proof of \cite[Proposition 4]{albergo2023building} in a restricted case that $a_0 = 1, b_0 = 0$.
In this case, if $\dot{a}_0, \dot{a}_1, \dot{b}_0, \dot{b}_1$ are well-defined, the velocity field reads
\begin{align*}
    v(0, x) = \dot{a}_0 x + \dot{b}_0 \E_{\nu} [\mathsf{X}_1], \quad
    v(1, x) = \dot{b}_1 x + \dot{a}_1 \E_{\gamma_d} [\mathsf{Z}]
\end{align*}
at time $0$ and $1$.
Otherwise, if any one of $\dot{a}_0, \dot{a}_1, \dot{b}_0, \dot{b}_1$ is not well-defined, the velocity field $v(0, x)$ or $v(1, x)$ should be considered on a case-by-case basis.
In addition, we provide an alternative viewpoint of the relationship between the velocity field associated with stochastic interpolation and the score function of its marginal flow
using Tweedie's formula in Lemma \ref{lm:tw-formula}.
\end{remark}

\begin{remark} [Diffusion process]
The marginal flow of the Gaussian stochastic interpolation \paref{eq:stoc-interpolation} coincides with the time-reversed marginal flow of a diffusion process $(\overline{X}_t)_{t \in [0, 1)}$ \cite[Theorem 3.5]{albergo2023stochastic} defined by
\begin{equation*}
    \diff \overline{X}_t = -\tfrac{\dot{b}_{1-t}}{b_{1-t}} \overline{X}_t + \sqrt{2 \left( \tfrac{\dot{b}_{1-t}}{b_{1-t}} a_{1-t}^2 - \dot{a}_{1-t} a_{1-t} \right)}\diff \overline{W}_t.
\end{equation*}
\end{remark}

\begin{remark} [Gaussian denoising] \label{rm:gauss-channel}
    The Gaussian stochastic interpolation has an information-theoretic interpretation as a time-varying Gaussian channel. Here $a_t^2$ and $b_t^2 / a_t^2$ stand for the noise level and signal-to-noise ratio (SNR) for time $t \in [0, 1]$, respectively. As time $t \to 1$, we are approaching the high-SNR regime, that is, the SNR $b_t^2 / a_t^2$ grows to $\infty$. Moreover, the SNR $b_t^2 / a_t^2$ is monotonically increasing in time $t$ over $[0, 1]$. The Gaussian noise level gets reduced through this Gaussian denoising process.
\end{remark}

\begin{figure}[t!]
\centering
\includegraphics[width=1.5 in, height=1.5 in]{./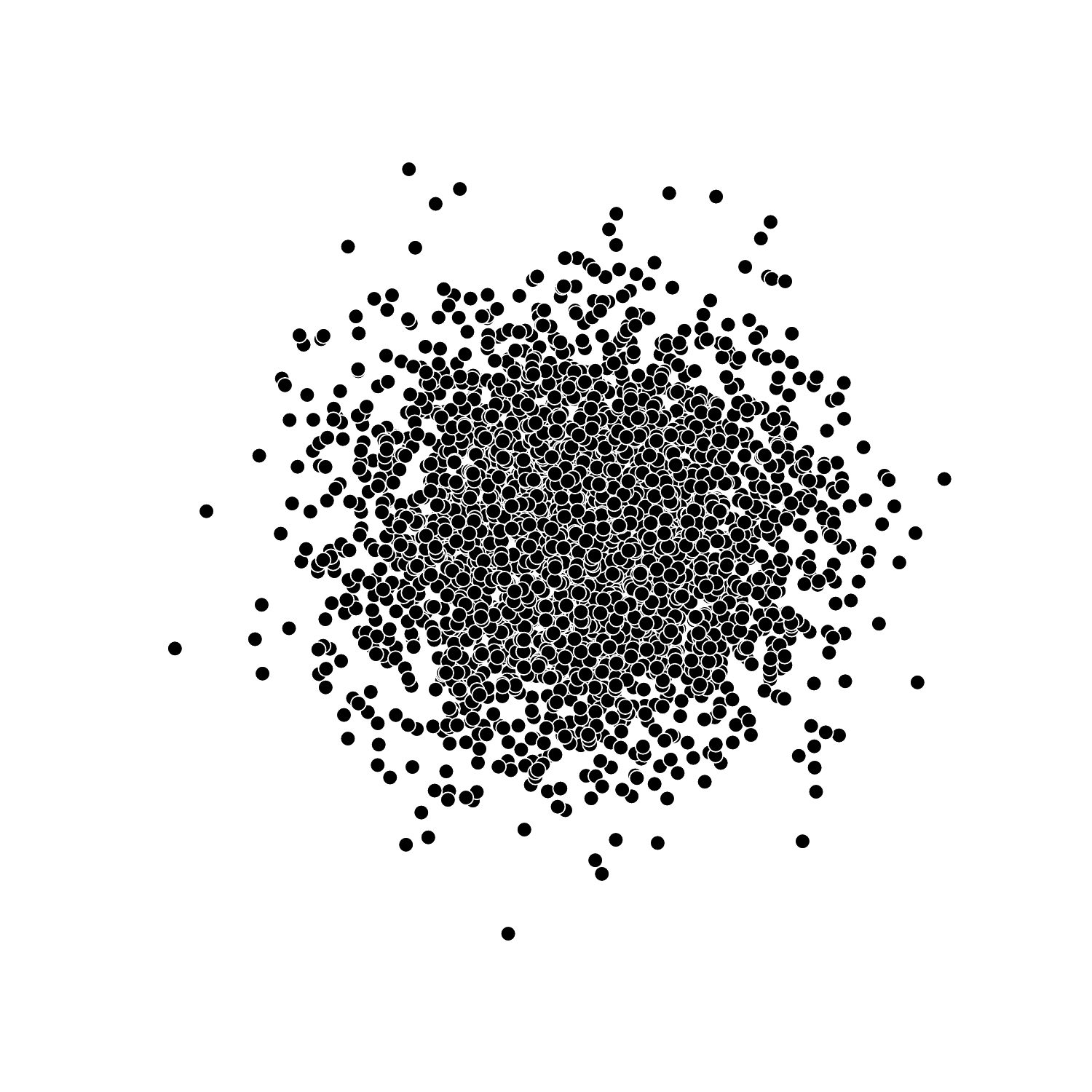}
\includegraphics[width=1.5 in, height=1.5 in]{./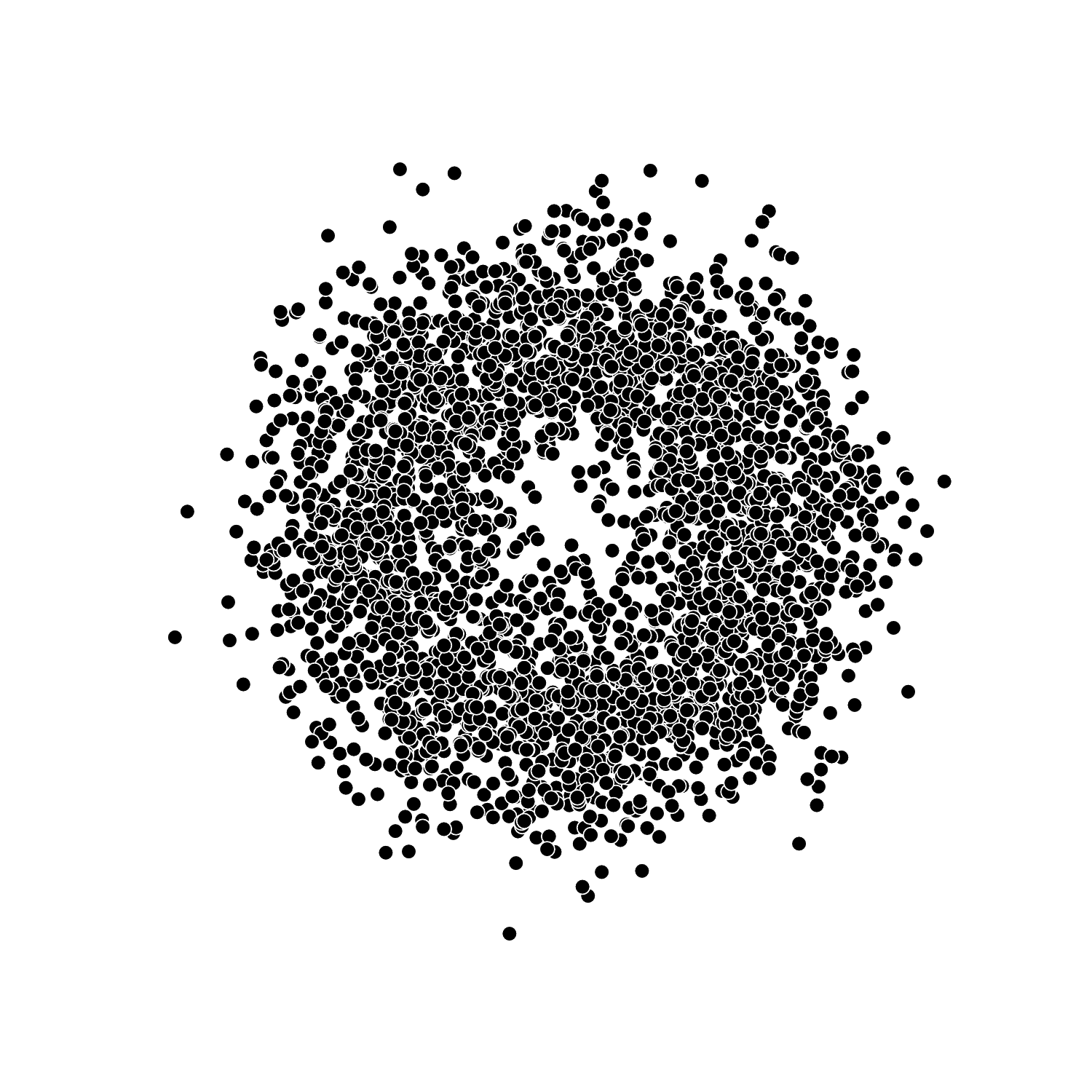}
\includegraphics[width=1.5 in, height=1.5 in]{./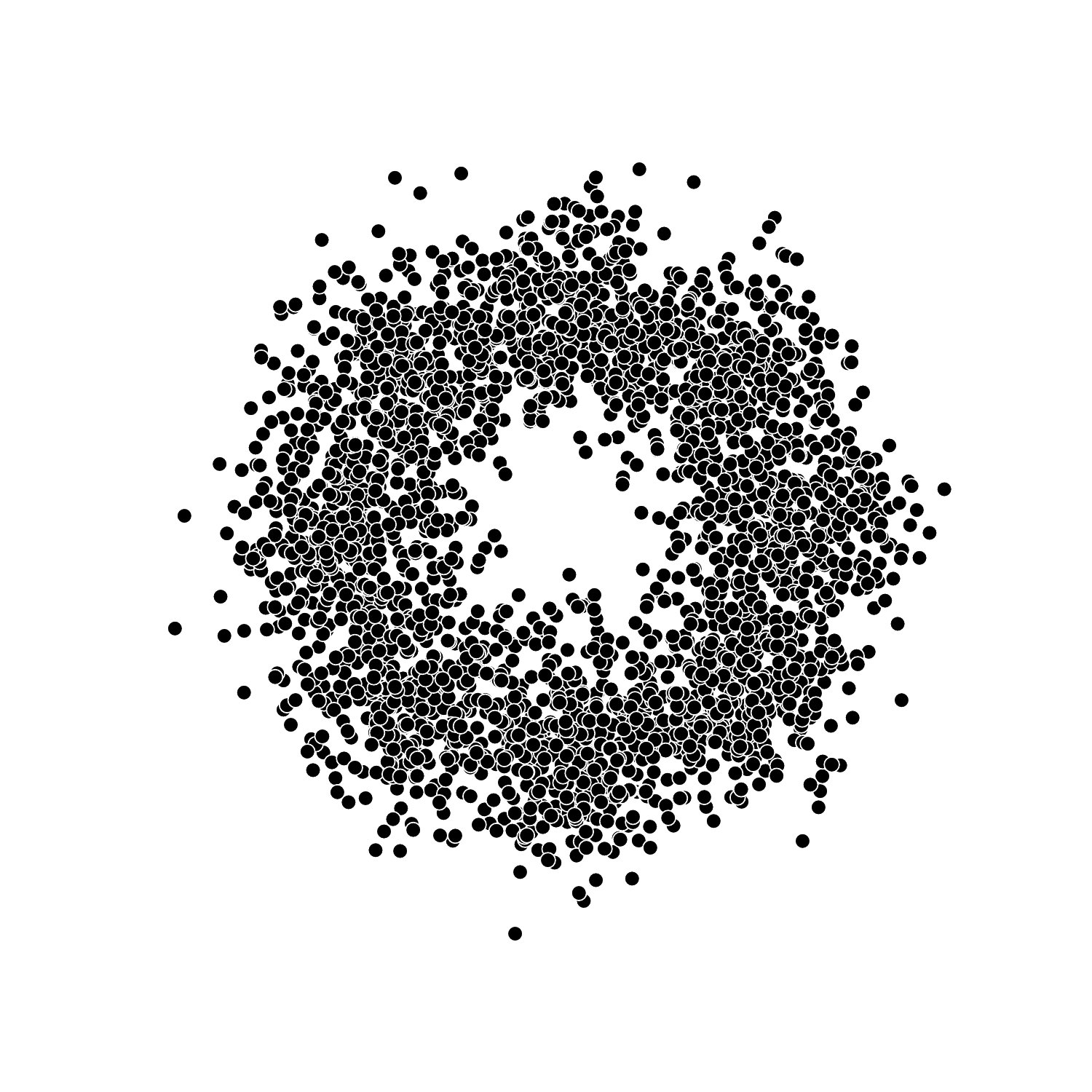}\\
\includegraphics[width=1.5 in, height=1.5 in]{./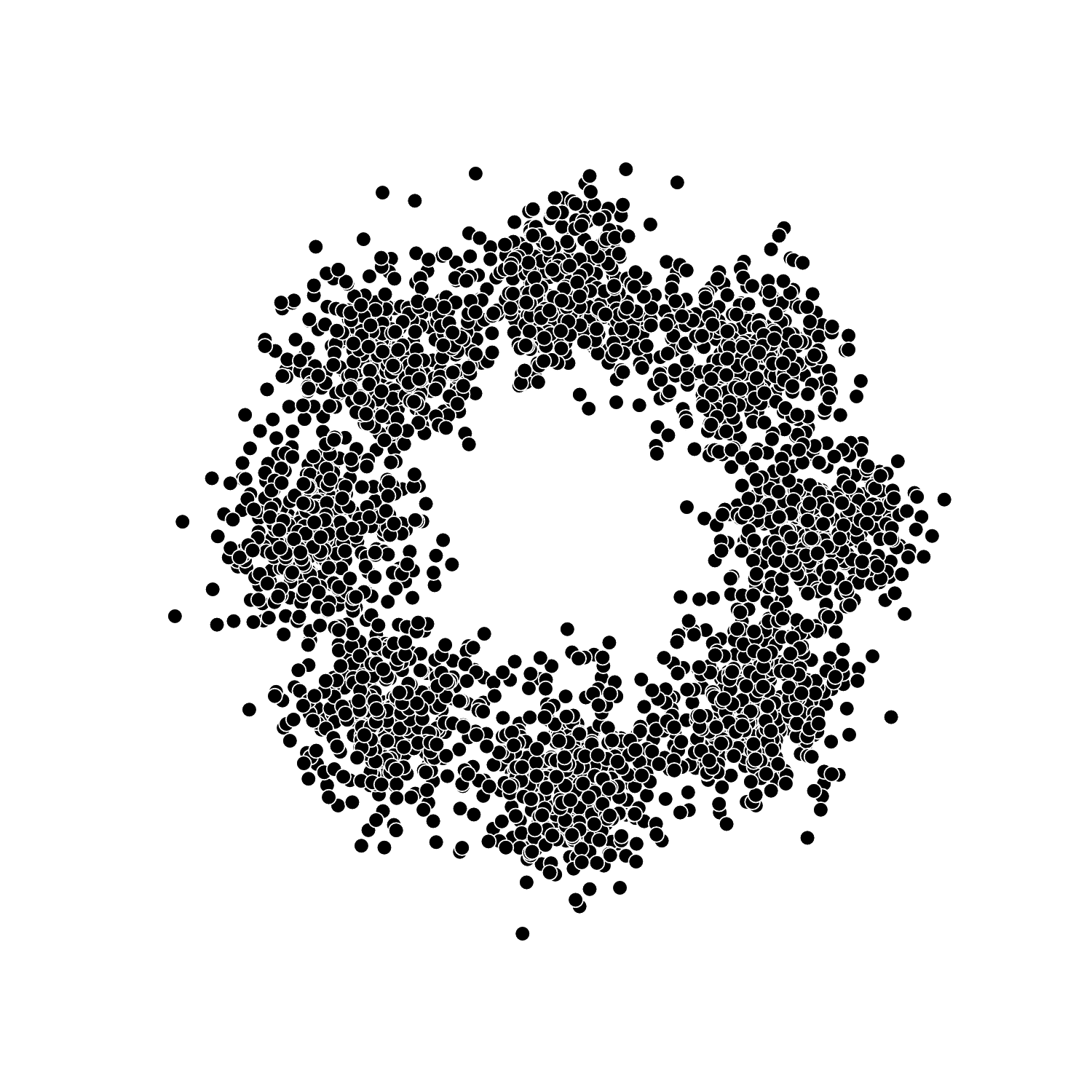}
\includegraphics[width=1.5 in, height=1.5 in]{./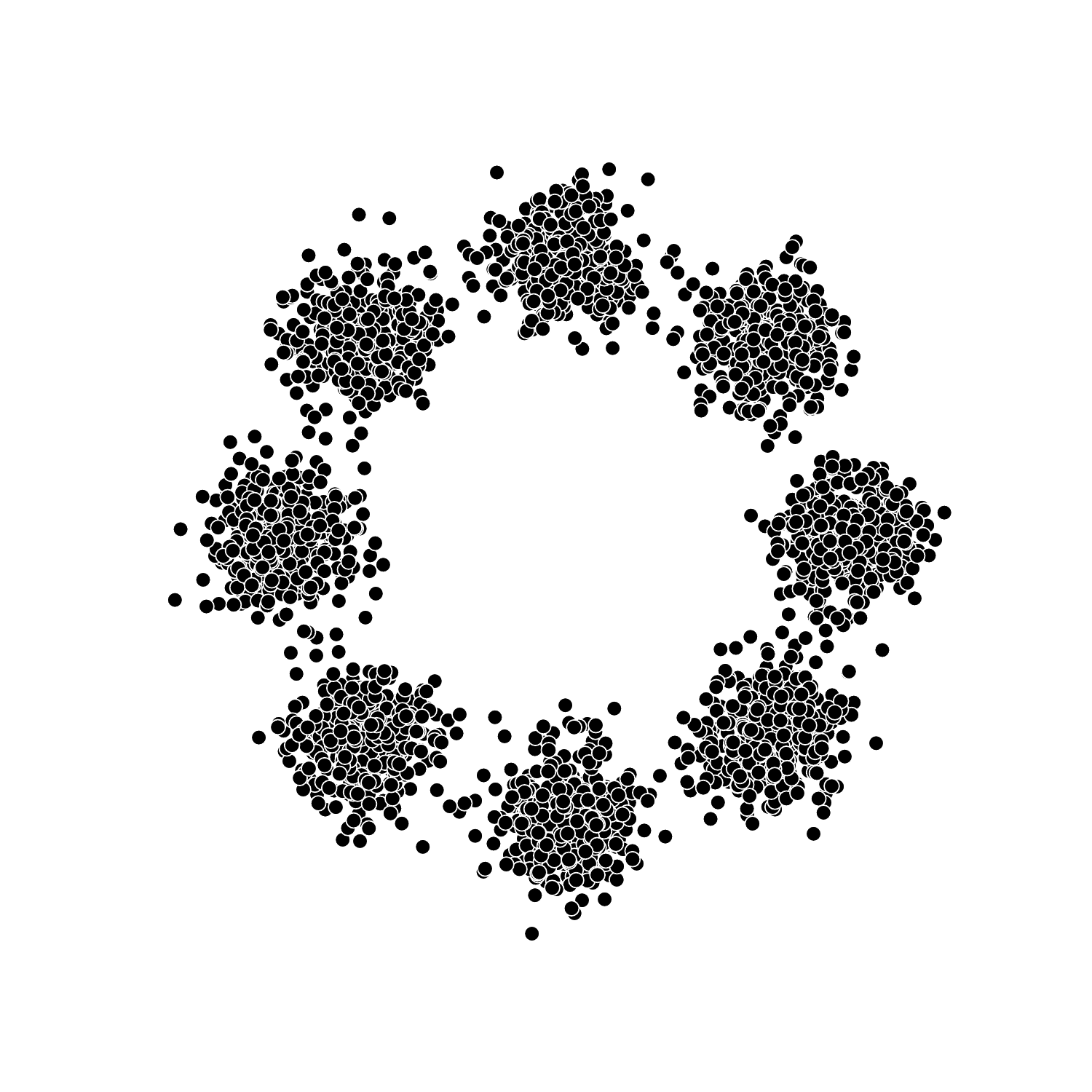}
\includegraphics[width=1.45 in, height=1.45 in]{./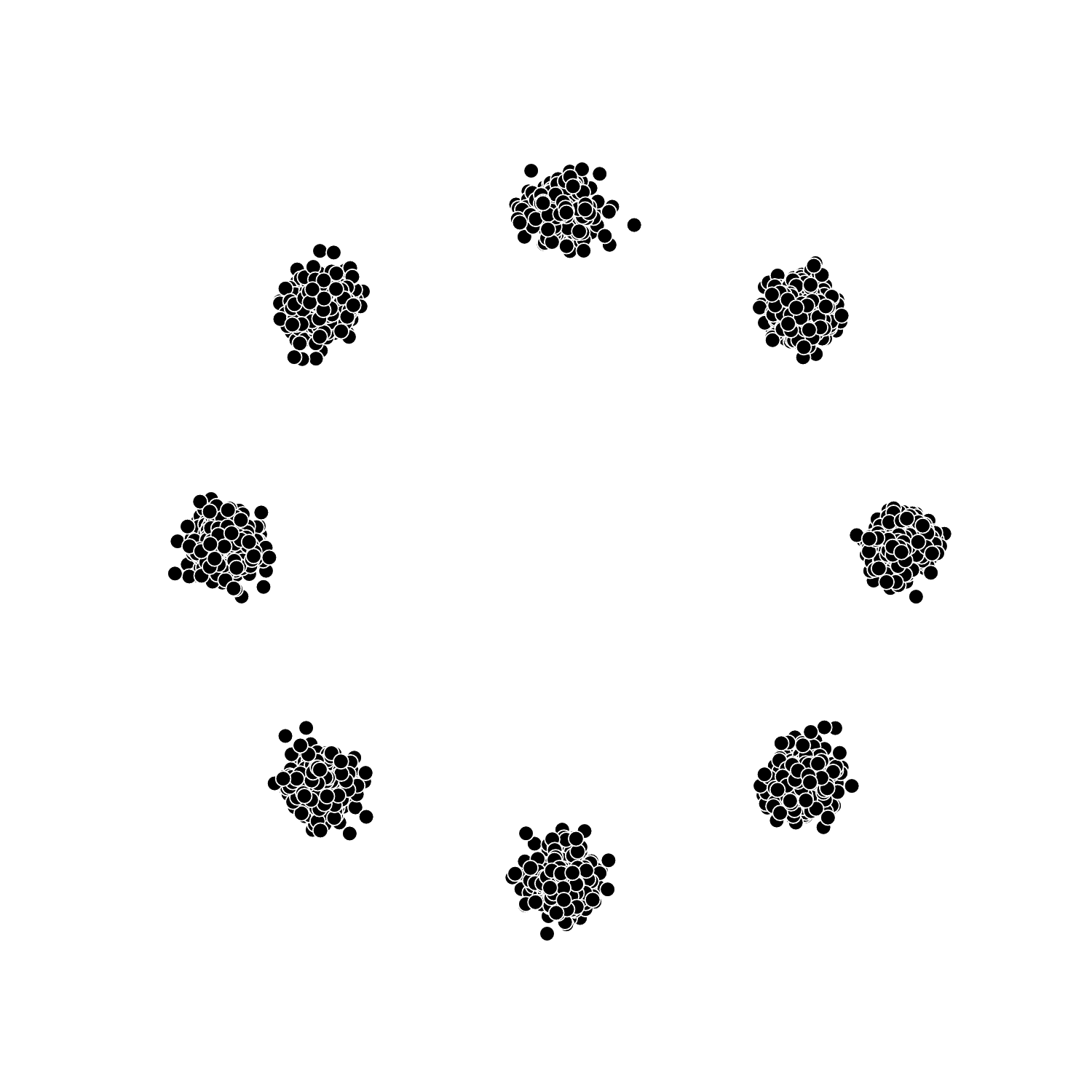}
\caption{Snapshots of a Gaussian interpolation flow based on the F{\"o}llmer interpolant. The source distribution is the standard two-dimensional Gaussian distribution $\gamma_2$, and the target distribution is a mixture of six two-dimensional Gaussian distributions as the shape of a circle. The image panels are placed sequentially from time $t = 0$ to time $t = 1$.}
\label{fig:interp-process}
\end{figure}

We are now ready to define Gaussian interpolation flows by representing the continuity equation \paref{eq:cont-eq} with Lagrangian coordinates \citep{ambrosio2014continuity}.
A basic observation is that GIFs share the same marginal density flow with Gaussian stochastic interpolations.
The continuity equation \paref{eq:cont-eq} plays a central role in the derandomization procedure from Gaussian stochastic interpolations to GIFs.
We additionally illustrate GIFs using a two-dimensional example as in Figure \ref{fig:interp-process}.

\begin{definition} [Gaussian interpolation flow] \label{def:gif}
    Suppose that probability measure $\nu$ satisfies Assumption \ref{assump:well-defined}.
    If $(X_t)_{t \in [0, 1]}$ solves the initial value problem (IVP)
    \begin{equation} \label{eq:ode-ivp}
    \frac{\diff X_t}{\diff t}(x) = v(t, X_t(x)), \quad X_0(x) \sim \mu, \quad t \in [0, 1],
    \end{equation}
    where $\mu$ is defined in Definition \ref{def:measure-interp} and the velocity field $v$ is given by Eq. \paref{eq:vf-expect} and \paref{eq:vf-boundary},
    we call $(X_t)_{t \in [0, 1]}$ a Gaussian interpolation flow associated with the target measure $\nu$.
\end{definition}

\section{Spatial Lipschitz estimates for the velocity field} \label{sec:spatial-lip}

We have explicated the idea of Gaussian denoising with the procedure of Gaussian stochastic interpolation or a Gaussian channel with increasing SNR w.r.t. time.
By interpreting the process as an ODE flow, we derive the framework of Gaussian interpolation flows.
First and foremost, an intuition is that the regularizing effect of Gaussian denoising would ensure the Lipschitz smoothness of the velocity field.
Since the standard Gaussian distribution is both $1$-semi-log-concave and $1$-semi-log-convex, its convolution with a target distribution will maintain its high regularity as long as the target distribution satisfies the regularity conditions.
We rigorously justify this intuition by establishing spatial Lipschitz estimates for the velocity field.
These estimates are established based on
the upper bounds and lower bounds regarding the Jacobian matrix of the velocity field $v(t, x)$ according to the Cauchy-Lipschitz theorem, which are given in Proposition \ref{prop:vf-bd} below.
To deal with the Jacobian matrix $\nabla_x v(t, x)$, we introduce a covariance expression of it and present the associated upper bounds and lower bounds.

The velocity field $v(t, x)$ is decomposed into a linear term and a nonlinear term, the score function $s(t, x)$.
To analyze the Jacobian $\nabla_x v(t, x)$, we only need to focus on $\nabla_x s(t, x)$, that is, $\nabla^2_x \log p_t(x)$. To ease the notation, we would henceforth use $\mathsf{Y}$ 
for $\mathsf{X}_1$.
Correspondingly, we replace $p_1(x)$ with $p_1(y)$
for the density function of $\mathsf{Y}$.

According to  Bayes' theorem, the marginal density $p_t$ of $\mathsf{X}_t$ satisfies
$$p_t (x) = \int p(t, x | y) p_1(y) \diff y$$
where $\mathsf{Y} \sim p_1(y)$ and $p(t, x | y) = \varphi_{b_t y, a_t^2} (x)$ is a conditional distribution induced by the Gaussian noise.
Due to the factorization $p_t (x) p(y | t, x) = p(t, x | y) p_1(y)$,
the score function $s(t, x)$ and its derivative $\nabla_x s(t, x)$ have the following expressions
\begin{align*}
    s(t, x) = - \nabla_x \log p(y | t, x) - \tfrac{x - b_t y}{a_t^2} , \quad
    \nabla_x s(t, x) = - \nabla^2_x \log p(y | t, x) - \tfrac{1}{a_t^2}\rmI_d.
\end{align*}
Thanks to the expressions above, a covariance matrix expression of $\nabla_x s(t, x)$ is endowed by the exponential family property of $p(y | t, x)$.

\begin{lemma} \label{lm:cond-cov}
The conditional distribution $p(y | t, x)$ is an exponential family distribution and a covariance matrix expression of the log-Hessian matrix $\nabla^2_x \log p(y | t, x)$ for any $t \in (0, 1)$ is given by
\begin{equation}
    \label{eq:cov-cond-score-claim}
    \nabla^2_x \log p(y | t, x) = - \tfrac{b_t^2} {a_t^4} \Cov (\mathsf{Y} | \mathsf{X}_t = x),
\end{equation}
where $\Cov (\mathsf{Y} | \mathsf{X}_t = x)$ is the covariance matrix of $\mathsf{Y} | \mathsf{X}_t = x \sim p(y | t, x)$.
Moreover, for any $t \in (0, 1)$, it holds that
\begin{equation}
    \label{eq:cov-gene-score-claim}
    \nabla_x s(t, x) = \tfrac{b_t^2} {a_t^4} \Cov (\mathsf{Y} | \mathsf{X}_t = x) - \tfrac{1}{a_t^2} \rmI_d,
\end{equation}
and that
\begin{equation}
    \label{eq:cov-gene-vf}
    \nabla_x v(t, x) = \tfrac{b_t^2}{a_t^2} \left( \tfrac{\dot{b}_t}{b_t} - \tfrac{\dot{a}_t}{a_t} \right) \Cov (\mathsf{Y} | \mathsf{X}_t = x) + \tfrac{\dot{a}_t} {a_t} \rmI_d.
\end{equation}
\end{lemma}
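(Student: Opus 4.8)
The plan is to recognize $p(y\mid t,x)$ as an exponential family in $y$ in which the spatial variable $x$ enters only through a linear reparametrization of the natural parameter, read off $\nabla_x^2\log p(y\mid t,x)$ from the standard cumulant identities, and then obtain \paref{eq:cov-gene-score-claim} and \paref{eq:cov-gene-vf} by direct substitution.

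First I would insert the Gaussian kernel $p(t,x\mid y)=\varphi_{b_t y,\,a_t^2}(x)$ into the factorization $p_t(x)\,p(y\mid t,x)=p(t,x\mid y)\,p_1(y)$ and expand $\|x-b_t y\|^2=\|x\|^2-2b_t\langle x,y\rangle+b_t^2\|y\|^2$. Collecting every purely $x$-dependent factor into the normalizer rewrites the conditional law as
\begin{equation*}
p(y\mid t,x)=h_t(y)\,\exp\!\big(\langle\eta,y\rangle-A_t(\eta)\big),\qquad \eta=\eta_t(x):=\tfrac{b_t}{a_t^2}\,x,
\end{equation*}
with base density $h_t(y)=\exp\!\big(-\tfrac{b_t^2}{2a_t^2}\|y\|^2\big)p_1(y)$ and log-partition function $A_t(\eta)=\log\int h_t(y)e^{\langle\eta,y\rangle}\,\diff y$. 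Since $a_t,b_t>0$ on $(0,1)$, completing the square gives $A_t(\eta)\le\tfrac{a_t^2}{2b_t^2}\|\eta\|^2<\infty$ for every $\eta\in\sR^d$, so the natural parameter domain is all of $\sR^d$ and $A_t$ is smooth (indeed real-analytic) there; differentiation under the integral sign is thus legitimate and yields the textbook identities $\nabla_\eta A_t(\eta_t(x))=\E[\mathsf{Y}\mid\mathsf{X}_t=x]$ and $\nabla^2_\eta A_t(\eta_t(x))=\Cov(\mathsf{Y}\mid\mathsf{X}_t=x)$. (Matching normalizers also records $A_t(\eta_t(x))=\log p_t(x)+\tfrac d2\log(2\pi a_t^2)+\tfrac1{2a_t^2}\|x\|^2$, which is just Tweedie's formula in disguise.)

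Next I would differentiate $\log p(y\mid t,x)=\log h_t(y)+\langle\eta_t(x),y\rangle-A_t(\eta_t(x))$ in $x$. Because $\eta_t$ is linear with constant Jacobian $\tfrac{b_t}{a_t^2}\rmI_d$, the chain rule gives $\nabla_x\log p(y\mid t,x)=\tfrac{b_t}{a_t^2}\big(y-\E[\mathsf{Y}\mid\mathsf{X}_t=x]\big)$, and differentiating once more,
\begin{equation*}
\nabla^2_x\log p(y\mid t,x)=-\tfrac{b_t}{a_t^2}\nabla_x\E[\mathsf{Y}\mid\mathsf{X}_t=x]=-\tfrac{b_t^2}{a_t^4}\nabla^2_\eta A_t(\eta_t(x))=-\tfrac{b_t^2}{a_t^4}\Cov(\mathsf{Y}\mid\mathsf{X}_t=x),
\end{equation*}
which is \paref{eq:cov-cond-score-claim}. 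Substituting this into the identity $\nabla_x s(t,x)=-\nabla^2_x\log p(y\mid t,x)-a_t^{-2}\rmI_d$ displayed just before the lemma yields \paref{eq:cov-gene-score-claim}; feeding \paref{eq:cov-gene-score-claim} into the gradient-field representation $v(t,x)=\tfrac{\dot b_t}{b_t}x+\big(\tfrac{\dot b_t}{b_t}a_t^2-\dot a_t a_t\big)s(t,x)$ from \paref{eq:gene-vf-interpolation-rmk} and collecting terms — the coefficient of $\Cov(\mathsf{Y}\mid\mathsf{X}_t=x)$ simplifies to $\tfrac{b_t^2}{a_t^2}\big(\tfrac{\dot b_t}{b_t}-\tfrac{\dot a_t}{a_t}\big)$ and the coefficient of $\rmI_d$ collapses to $\tfrac{\dot a_t}{a_t}$ — gives \paref{eq:cov-gene-vf}.

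Everything after the exponential-family identification is routine bookkeeping. The one step deserving care is verifying that $A_t$ is finite and smooth on all of $\sR^d$, so that the exponential-family moment formulas and the interchange of differentiation and integration are valid; this is exactly where the Gaussian weight $\exp(-\tfrac{b_t^2}{2a_t^2}\|y\|^2)$ in $h_t$ — present precisely because $b_t>0$ for $t\in(0,1)$, which is why the statement is restricted to that range — does the work, whereas Assumption \ref{assump:well-defined} is used only to ensure $p_1$ is a genuine density with finite second moment so that $p_t$ and the conditional law are well defined in the first place.
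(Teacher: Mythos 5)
Your proof is correct and follows essentially the same exponential-family route as the paper: both complete the square in the Gaussian kernel to write $p(y\mid t,x)=h_t(y)\exp(\langle\eta_t(x),y\rangle-A_t(\eta_t(x)))$ with natural parameter $\eta_t(x)=\tfrac{b_t}{a_t^2}x$, then invoke $\nabla^2_\eta A_t=\Cov(\mathsf{Y}\mid\mathsf{X}_t=x)$ and the constant Jacobian of $\eta_t$ to land on \paref{eq:cov-cond-score-claim}. The only differences are cosmetic refinements on your side: you verify that $A_t$ is finite on all of $\sR^d$ (justifying differentiation under the integral, which the paper leaves tacit) and you explicitly carry out the substitutions yielding \paref{eq:cov-gene-score-claim} and \paref{eq:cov-gene-vf}, which the paper's proof leaves to the display immediately preceding the lemma.
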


\begin{remark}
Since $\partial_t \left( \tfrac{b_t^2}{a_t^2} \right) = \tfrac{2 b_t^2}{a_t^2} \left( \tfrac{\dot{b}_t}{b_t} - \tfrac{\dot{a}_t}{a_t} \right),$
it follows from (\ref{eq:cov-gene-vf}) that the derivative of the SNR with respect to  time $t$ controls the dependence of $\nabla_x v(t, x)$ on $\Cov (\mathsf{Y} | \mathsf{X}_t = x)$.
\end{remark}

The representation \paref{eq:cov-gene-vf} can be used to upper bound and lower bound $\nabla_x v(t, x)$. This technique has been widely used to deduce the regularity of the score function concerning the space variable \citep{mikulincer2021brownian, mikulincer2023lipschitz, chen2023sampling, lee2023convergence, chen2023improved}. The covariance matrix expression \paref{eq:cov-gene-score-claim} of the score function has a close connection with the Hatsell-Nolte identity in information theory \citep{hatsell1971some, palomar2005gradient, wu2011functional, cai2014optimal, wibisono2017information, wibisono2018convexity-heat, wibisono2018convexity-ou, dytso2023meta, dytso2023conditional}.

Employing the covariance expression in Lemma \ref{lm:cond-cov}, we establish several bounds on $\nabla_x v(t, x)$ in the following proposition.

\begin{proposition} \label{prop:vf-bd}
Let $\nu(\diff y) = p_1(y) \diff y$ be a probability measure on $\sR^d$ with\\
 $D := (1/\sqrt{2}) \mathrm{diam} (\mathrm{supp}(\nu))$.
\begin{itemize}
    \item[(a)] For any $t\in (0,1)$,
    \begin{equation*}
        \frac{\dot{a}_t}{a_t} \rmI_d \preceq \nabla_x v(t, x) \preceq \left\{ \frac{b_t (a_t \dot{b}_t - \dot{a}_t b_t)}{a_t^3} D^2 + \frac{\dot{a}_t} {a_t} \right\} \rmI_d.
    \end{equation*}

    \item[(b)] Suppose that $p_1$ is $\beta$-semi-log-convex with $\beta > 0$ and $\mathrm{supp}(p_1) = \sR^d$. Then for any $t\in (0, 1]$,
    \begin{equation*}
        \nabla_x v(t, x) \succeq \frac{\beta a_t \dot{a}_t + b_t \dot{b}_t} {\beta a_t^2 + b_t^2} \rmI_d.
    \end{equation*}

    \item[(c)] Suppose that $p_1$ is $\kappa$-semi-log-concave with $\kappa \in \sR$. Then for any $t \in (t_0, 1]$,
    \begin{align*}
        \nabla_x v(t, x) \preceq \frac{\kappa a_t \dot{a}_t + b_t \dot{b}_t} {\kappa a_t^2 + b_t^2} \rmI_d,
    \end{align*}
    where $t_0$ is the root of the equation $\kappa + \frac{b_t^2}{a_t^2} = 0$ over $t \in (0, 1)$ if $\kappa < 0$ and $t_0 = 0$ if $\kappa \ge 0$.

    \item[(d)] Fix a probability measure $\rho$ on $\sR^d$ supported on a Euclidean ball of radius $R$, and let $\nu := \gamma_{d, \sigma^2} * \rho$ with $\sigma > 0$. Then for any $t \in (0, 1)$,
    \begin{equation*}
        \frac{\dot{a}_t a_t + \sigma^2 \dot{b}_t b_t}{a_t^2 + \sigma^2 b_t^2} \rmI_d
        \preceq \nabla_x v(t, x) \preceq
        \left\{ \frac{a_t b_t (a_t \dot{b}_t - \dot{a}_t b_t)}{(a_t^2 + \sigma^2 b_t^2)^2} R^2 + \frac{\dot{a}_t a_t + \sigma^2 \dot{b}_t b_t}{a_t^2 + \sigma^2 b_t^2} \right\} \rmI_d.
    \end{equation*}

    \item[(e)] Suppose that $\frac{\diff \nu}{\diff \gamma_d}(x)$ is $L$-log-Lipschitz for some $L \ge 0$. Then for any $t \in (0, 1)$,
    \begin{align*}
        & \left\{ \left( \tfrac{\dot{b}_t}{b_t} a_t^2 - \dot{a}_t a_t \right) \left( -B_t - L^2 \left( \tfrac{b_t}{a_t^2 + b_t^2} \right)^2 \right) + \tfrac{\dot{a}_t a_t + \dot{b}_t b_t}{a_t^2 + b_t^2} \right\} \rmI_d \\
        & \preceq \nabla_x v(t, x)
          \preceq \left\{ \left( \tfrac{\dot{b}_t}{b_t} a_t^2 - \dot{a}_t a_t \right) B_t + \tfrac{\dot{a}_t a_t + \dot{b}_t b_t}{a_t^2 + b_t^2} \right\} \rmI_d,
    \end{align*}
    where $B_t := 5L b_t (a_t^2 + b_t^2)^{-\frac32} (L + (\log (\sqrt{a_t^2 + b_t^2}/b_t))^{-\frac12})$.
\end{itemize}
\end{proposition}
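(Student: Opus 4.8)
The plan is to deduce all five bounds from the posterior–covariance representation of the velocity Jacobian. By Lemma~\ref{lm:cond-cov}, for $t\in(0,1)$ one has $\nabla_x v(t,x)=\tfrac{b_t^2}{a_t^2}(\tfrac{\dot b_t}{b_t}-\tfrac{\dot a_t}{a_t})\,\Cov(\mathsf{Y}\mid\mathsf{X}_t=x)+\tfrac{\dot a_t}{a_t}\rmI_d$, and equivalently $\nabla_x v(t,x)=\tfrac{\dot b_t}{b_t}\rmI_d+(\tfrac{\dot b_t}{b_t}a_t^2-\dot a_ta_t)\,\nabla_x s(t,x)$ via \paref{eq:cov-gene-score-claim} and \paref{eq:gene-vf-interpolation-rmk}. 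The one structural fact used throughout is that the scalar prefactor $\tfrac{b_t^2}{a_t^2}(\tfrac{\dot b_t}{b_t}-\tfrac{\dot a_t}{a_t})$, and likewise $\tfrac{\dot b_t}{b_t}a_t^2-\dot a_ta_t$, is nonnegative on $(0,1)$ since $\dot b_t\ge 0\ge\dot a_t$ and $a_t,b_t>0$; hence any spectral sandwich $\lambda_-\rmI_d\preceq\Cov(\mathsf{Y}\mid\mathsf{X}_t=x)\preceq\lambda_+\rmI_d$ (or the corresponding bounds on $\nabla_x s$) transfers monotonically, and each part reduces to producing such a sandwich for the relevant class and then simplifying a scalar expression over a common denominator. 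The common starting point is that, by Bayes' rule, the posterior density is $p(y\mid t,x)\propto p_1(y)\exp(-\|x-b_ty\|^2/(2a_t^2))$, i.e.\ $\nu$ tilted by a Gaussian — a tilt that in every case makes the posterior sub-Gaussian with smooth density, supplying the integrability and $C^2$ hypotheses needed by Lemmas~\ref{lm:bli} and \ref{lm:cri}.

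For (a), this posterior is supported on $\mathrm{supp}(\nu)$, a set of diameter $\sqrt2 D$, so $\Var(\langle\mathsf{Y},e\rangle\mid\mathsf{X}_t=x)=\tfrac12\E[\langle\mathsf{Y}-\mathsf{Y}',e\rangle^2]\le\tfrac12(\sqrt2 D)^2=D^2$ with $\mathsf{Y}'$ an independent copy, giving $0\preceq\Cov(\mathsf{Y}\mid\mathsf{X}_t=x)\preceq D^2\rmI_d$. For (b), the posterior potential $U_1(y)+\|x-b_ty\|^2/(2a_t^2)$ has Hessian $\nabla^2_yU_1(y)+\tfrac{b_t^2}{a_t^2}\rmI_d\preceq(\beta+\tfrac{b_t^2}{a_t^2})\rmI_d$, and the covariance form \paref{eq:cri-cov} of the Cram\'er--Rao inequality yields $\Cov(\mathsf{Y}\mid\mathsf{X}_t=x)\succeq(\beta+\tfrac{b_t^2}{a_t^2})^{-1}\rmI_d$. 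For (c), that same Hessian is $\succeq(\kappa+\tfrac{b_t^2}{a_t^2})\rmI_d$, which is positive definite precisely when $t>t_0$: the SNR $b_t^2/a_t^2$ is strictly increasing (Remark~\ref{rm:gauss-channel}) and equals $-\kappa$ at $t_0$ when $\kappa<0$, so for such $t$ the posterior potential is strictly convex and the covariance form \paref{eq:bli-cov} of the Brascamp--Lieb inequality gives $\Cov(\mathsf{Y}\mid\mathsf{X}_t=x)\preceq(\kappa+\tfrac{b_t^2}{a_t^2})^{-1}\rmI_d$. For (d), I would instead use $\mathsf{X}_t\overset d=\sqrt{a_t^2+\sigma^2b_t^2}\,\mathsf{Z}'+b_t\mathsf{V}$ with $\mathsf{V}\sim\rho$, so $p_t=\gamma_{d,\,a_t^2+\sigma^2b_t^2}*(b_t)_{\#}\rho$ is a Gaussian mixture with means in a ball of radius $b_tR$, whence $\nabla_x s(t,x)=\tfrac{1}{a_t^2+\sigma^2b_t^2}\big(\tfrac{1}{a_t^2+\sigma^2b_t^2}\Cov(b_t\mathsf{V}\mid\mathsf{X}_t=x)-\rmI_d\big)$ with $0\preceq\Cov(b_t\mathsf{V}\mid\mathsf{X}_t=x)\preceq b_t^2R^2\rmI_d$. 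In each case, substituting into the appropriate representation above and collecting terms produces exactly the stated formulas; the endpoint $t=1$ in (b) and (c) follows by letting $t\uparrow1$ and invoking \paref{eq:vf-boundary}.

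Part (e) is the crux. Completing the square in $y$ in the posterior density gives $p(y\mid t,x)\propto\tfrac{\diff\nu}{\diff\gamma_d}(y)\,\varphi_{\mu_t(x),\tau_t^2}(y)$ with $\mu_t(x)=\tfrac{b_t}{a_t^2+b_t^2}x$ and $\tau_t^2=\tfrac{a_t^2}{a_t^2+b_t^2}$, so the posterior law of $\mathsf{Y}$ is an $L$-log-Lipschitz perturbation of $N(\mu_t(x),\tau_t^2\rmI_d)$; by \paref{eq:cov-gene-score-claim} this is equivalent to $\nabla_x s(t,x)+\tfrac{1}{a_t^2+b_t^2}\rmI_d=\tfrac{b_t^2}{a_t^4}\big(\Cov(\mathsf{Y}\mid\mathsf{X}_t=x)-\tau_t^2\rmI_d\big)$, so it remains to control the deviation $\Cov(\mathsf{Y}\mid\mathsf{X}_t=x)-\tau_t^2\rmI_d$. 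The essential ingredient is therefore a two-sided estimate for the covariance of a log-Lipschitz perturbation of a Gaussian: for $\diff\eta\propto e^{g}\diff N(m,\tau^2\rmI_d)$ with $g$ $L$-Lipschitz, one must bound $\Cov_\eta$ spectrally within an explicit, quantitatively controlled window around $\tau^2\rmI_d$ (with a logarithmic factor in its width). I expect this to be the main obstacle: such an $\eta$ is in general neither log-concave nor of controlled potential Hessian, so neither the Brascamp--Lieb nor the Cram\'er--Rao inequality applies directly; one must instead exploit the sub-Gaussian concentration that the Lipschitz perturbation forces — controlling the tails, hence the variance, of $\langle\mathsf{Y}-\E[\mathsf{Y}],e\rangle$, with the logarithmic factor entering through the optimal choice of a truncation radius — and the quantitative asymmetry between the resulting upper and lower estimates is what produces the extra $L^2(b_t/(a_t^2+b_t^2))^2$ term in the lower bound of (e). Substituting the estimate into \paref{eq:cov-gene-score-claim} and \paref{eq:gene-vf-interpolation-rmk} and simplifying then yields the displayed bounds with $B_t=5Lb_t(a_t^2+b_t^2)^{-3/2}(L+(\log(\sqrt{a_t^2+b_t^2}/b_t))^{-1/2})$.
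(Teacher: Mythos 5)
Your treatment of parts (a)--(d) is correct and is essentially the paper's argument, up to two genuine but minor variations. In (a) the paper invokes Jung's theorem to place $\mathrm{supp}(\nu)$ inside a ball of radius $D$ and then bounds $\Cov(\mathsf{Y}\mid\mathsf{X}_t=x)\preceq D^2\rmI_d$; your variance identity $\Var(\langle\mathsf{Y},e\rangle)=\tfrac12\E[\langle\mathsf{Y}-\mathsf{Y}',e\rangle^2]$ reaches the same $D^2$ bound more elementarily and is a clean replacement. In (d) the paper computes $p(y\mid t,x)$ directly as a Gaussian mixture over $\tilde\rho$; your reparametrization $\mathsf{X}_t\overset d=\sqrt{a_t^2+\sigma^2 b_t^2}\,\mathsf{Z}'+b_t\mathsf{V}$ and conditioning on $\mathsf{V}$ is equivalent and slightly cleaner. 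In (b) and (c) you mirror the paper exactly: posterior potential Hessian $\nabla^2_yU_1+\tfrac{b_t^2}{a_t^2}\rmI_d$, Cram\'er--Rao for the lower covariance bound, Brascamp--Lieb for the upper, with the $t>t_0$ condition ensuring strict convexity.

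Part (e), however, has a genuine gap, and you flag it yourself. Your reformulation is correct: the posterior $p(y\mid t,x)$ is an $L$-log-Lipschitz tilt of $N(\mu_t(x),\tau_t^2\rmI_d)$, and bounding $\Cov(\mathsf{Y}\mid\mathsf{X}_t=x)$ spectrally around $\tau_t^2\rmI_d$ is exactly equivalent to bounding $\nabla^2_x\log\widetilde{\mathcal{Q}}_t f(x)$, which is what the paper does. But you do not prove that covariance estimate --- you only describe an intended strategy (``sub-Gaussian concentration, truncation radius, logarithmic factor''). The paper does not attempt to prove it either: it imports the estimate wholesale from \citet{fathi2023transportation} as Lemma \ref{lm:log-lip} (citing their Propositions 2, 6 and Theorem 6), rescales it via Corollary \ref{cor:bd-op}, and combines with Lemma \ref{lm:vf-op}. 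The truncation heuristic you sketch would not, as stated, produce the sharp two-sided spectral bound with the explicit constant $5$, the precise $(\log(\sqrt{a_t^2+b_t^2}/b_t))^{-1/2}$ factor, or the exact $-L^2\beta_t^2$ asymmetry in the lower bound; these come out of the particular Hermite/semigroup decomposition in the cited work, which is itself nontrivial. To close the gap you would either need to cite that result, as the paper does, or actually carry out a proof of the covariance sandwich, which is a substantially harder task than the sketch suggests.
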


Comparing part (a) with part (d) in Proposition \ref{prop:vf-bd}, we can see that the bounds in (a) are consistent with those in (d) in the sense that (a) is a limiting case of part (d) as $\sigma \to 0$. The lower bound in part (a) blows up at time $t = 1$ owing to $a_1 = 0,$ while in part (d) it behaves well since the lower bound in part (d) coincides with a lower bound indicated by the $\tfrac{1}{\sigma^2}$-semi-log-convex property. It reveals that the regularity of the velocity field $v(t, x)$ with respect to the space variable $x$ improves when the target random variable is bounded and is subject to Gaussian perturbation.

The lower bound in part (b) and the upper bound in part (c) are tight in the sense that both of them are attainable for a Gaussian target distribution, that is,
\begin{equation*}
    \nabla_x v(t, x) = \frac{\beta a_t \dot{a}_t + b_t \dot{b}_t} {\beta a_t^2 + b_t^2} \rmI_d \quad \text{if $\nu = \gamma_{d, 1/\beta}$.}
\end{equation*}

The upper and lower bounds in Proposition \ref{prop:vf-bd}-(a) and (e) become vacuous as they both blow up at time $t = 1$. The intuition behind is that the Jacobian matrix of the velocity field can be both lower and upper bounded at time $t = 1$ only if the score function of the target measure is Lipschitz continuous in the space variable $x$. Under an additional Lipschitz score assumption (equivalently, $\beta$-semi-log-convex and $\kappa$-semi-log-concave for some $\beta = -\kappa \ge 0$), the upper and lower bounds in part (a) and part (e) can be strengthened at time $t = 1$ based on the lower bound in (b) and the upper bound in part (c).

According to Proposition \ref{prop:vf-bd}-(a) and (c), there are two upper bounds available that shall be compared with each other. One is the $D^2$-based bound in part (a), and the other is the $\kappa$-based bound in part (c). According to the proof of Proposition \ref{prop:vf-bd} given in the Appendix, these two upper bounds are equal if and only if the corresponding upper bounds on $\Cov (\mathsf{Y} | \mathsf{X}_t = x)$ are equal, that is,
\begin{equation}
    \label{eq:equal-bd}
    D^2 = \left( \kappa + \frac{b_t^2}{a_t^2} \right)^{-1}.
\end{equation}
Then the critical case is $\kappa D^2 = 1$ since simplifying Eq. \paref{eq:equal-bd} reveals that
\begin{equation}
    \label{eq:critical-cond}
    D^{-2} - \kappa = \frac{b_t^2}{a_t^2}.
\end{equation}
We note that $b_t^2 / a_t^2$, ranging over $(0, \infty)$, is monotonically increasing
w.r.t. $t \in (0, 1)$. Suppose that $\kappa D^2 > 1$. Then \paref{eq:critical-cond} has no root over
$t \in (0, 1), $ which implies that the $\kappa$-based bound is tighter over $[0, 1)$, i.e.,
\begin{equation*}
    D^2 > \left( \kappa + \frac{b_t^2}{a_t^2} \right)^{-1}, \quad \forall t \in [0, 1).
\end{equation*}
Otherwise, suppose that $\kappa D^2 < 1$. Then \paref{eq:critical-cond} has a root
$t_1 \in (0, 1),$
which implies that the $D^2$-based bound is tighter over $[0, t_1)$, i.e.,
\begin{align*}
    D^2 < \left( \kappa + \frac{b_t^2}{a_t^2} \right)^{-1}, \quad \forall t \in [0, t_1),
\end{align*}
and that the $\kappa$-based bound is tighter over $[t_1, 1)$, i.e.,
\begin{align*}
    D^2 \ge \left( \kappa + \frac{b_t^2}{a_t^2} \right)^{-1}, \quad \forall t \in [t_1, 1).
\end{align*}

Next, we present several upper bounds on the maximum eigenvalue of the Jacobian matrix of the velocity field $\lambda_{\max} (\nabla_x  v(t, x))$ and its exponential estimates for studying the Lipschitz regularity of the flow maps as noted in
Lemma \ref{lm:flow-map-Lip-bd}.

\begin{corollary} \label{cor:pos-kappa-bd}
Let $\nu$ be a probability measure on $\sR^d$ with $D :=
(1/\sqrt{2}) \mathrm{diam} ( \mathrm{supp}(\nu))$ and suppose that $\nu$ is $\kappa$-semi-log-concave with $\kappa \ge 0$.
\begin{itemize}
    \item[(a)] If $\kappa D^2 \ge 1$, then
        \begin{equation}
            \label{eq:max-egv-ubd-ge1}
            \lambda_{\max} (\nabla_x v(t, x)) \le \theta_t := \frac{\kappa a_t \dot{a}_t + b_t \dot{b}_t} {\kappa a_t^2 + b_t^2}, \ t \in [0, 1].
        \end{equation}

    \item[(b)] If $\kappa D^2 < 1$, then
        \begin{equation}
            \label{eq:max-egv-ubd-less1}
            \lambda_{\max} (\nabla_x v(t, x)) \le \theta_t :=
                \begin{cases}
                    \frac{b_t^2}{a_t^2} \left( \frac{\dot{b}_t}{b_t} - \frac{\dot{a}_t}{a_t} \right) D^2 + \frac{\dot{a}_t} {a_t}, \ &t \in [0, t_1), \\
                    \frac{\kappa a_t \dot{a}_t + b_t \dot{b}_t} {\kappa a_t^2 + b_t^2},  \ &t \in [t_1, 1],
                \end{cases}
        \end{equation}
        where $t_1$ solves  \paref{eq:critical-cond}.
\end{itemize}
\end{corollary}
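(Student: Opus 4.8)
The plan is to read off Corollary~\ref{cor:pos-kappa-bd} as the pointwise minimum of the two upper bounds on $\lambda_{\max}(\nabla_x v(t,x))$ that Proposition~\ref{prop:vf-bd}-(a) and (c) already provide, combined with the monotonicity of the signal‑to‑noise ratio $t\mapsto b_t^2/a_t^2$. First I would record, from the covariance expression \paref{eq:cov-gene-vf} of Lemma~\ref{lm:cond-cov},
\[
\nabla_x v(t,x) = \tfrac{b_t^2}{a_t^2}\bigl(\tfrac{\dot b_t}{b_t}-\tfrac{\dot a_t}{a_t}\bigr)\Cov(\mathsf{Y}\mid\mathsf{X}_t=x) + \tfrac{\dot a_t}{a_t}\rmI_d,
\]
and observe that the scalar $c_t := \tfrac{b_t^2}{a_t^2}(\tfrac{\dot b_t}{b_t}-\tfrac{\dot a_t}{a_t})$ is nonnegative on $(0,1)$ by the sign constraints $\dot a_t\le 0$, $\dot b_t\ge 0$, $a_t,b_t>0$ in \paref{eq:abt}. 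Hence $\lambda_{\max}(\nabla_x v(t,x)) = c_t\,\lambda_{\max}(\Cov(\mathsf{Y}\mid\mathsf{X}_t=x)) + \tfrac{\dot a_t}{a_t}$ is an increasing affine function of $\lambda_{\max}(\Cov(\mathsf{Y}\mid\mathsf{X}_t=x))$, so any $\preceq$-bound on the conditional covariance transfers to an upper bound on $\lambda_{\max}(\nabla_x v)$. Proposition~\ref{prop:vf-bd}-(a) and (c) encode precisely the two bounds $\Cov(\mathsf{Y}\mid\mathsf{X}_t=x)\preceq D^2\rmI_d$ and $\Cov(\mathsf{Y}\mid\mathsf{X}_t=x)\preceq (\kappa + b_t^2/a_t^2)^{-1}\rmI_d$; the latter holds on all of $(0,1]$ because $\kappa\ge 0$ forces $t_0=0$ in part (c). I would then let $\theta_t$ be the pointwise minimum of the two resulting upper bounds on $\lambda_{\max}(\nabla_x v(t,x))$, which is automatically valid.

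It remains to identify this minimum in closed form in the two regimes. Since $c_t\ge 0$, comparing the two bounds reduces to comparing $D^2$ with $(\kappa+b_t^2/a_t^2)^{-1}$, i.e. $b_t^2/a_t^2$ with $D^{-2}-\kappa$. If $\kappa D^2\ge 1$ then $D^{-2}-\kappa\le 0\le b_t^2/a_t^2$, so $(\kappa+b_t^2/a_t^2)^{-1}\le D^2$ for every $t$, the $\kappa$-based bound always wins, and a one-line simplification using $\kappa+b_t^2/a_t^2 = (\kappa a_t^2+b_t^2)/a_t^2$ turns $c_t(\kappa+b_t^2/a_t^2)^{-1}+\tfrac{\dot a_t}{a_t}$ into $\tfrac{\kappa a_t\dot a_t + b_t\dot b_t}{\kappa a_t^2+b_t^2}$, which is \paref{eq:max-egv-ubd-ge1}. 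If instead $\kappa D^2<1$ then $D^{-2}-\kappa>0$; by the monotonicity of $t\mapsto b_t^2/a_t^2$ noted after \paref{eq:critical-cond}, this ratio increases from $b_0^2/a_0^2$ to $\infty$ and meets the level $D^{-2}-\kappa$ at the unique $t_1\in(0,1)$ solving \paref{eq:critical-cond}. For $t\in[0,t_1)$ one has $b_t^2/a_t^2<D^{-2}-\kappa$, so $D^2<(\kappa+b_t^2/a_t^2)^{-1}$ and the smaller bound is $c_t D^2 + \tfrac{\dot a_t}{a_t} = \tfrac{b_t^2}{a_t^2}(\tfrac{\dot b_t}{b_t}-\tfrac{\dot a_t}{a_t})D^2 + \tfrac{\dot a_t}{a_t}$; for $t\in[t_1,1]$ the reverse inequality holds and the smaller bound is the $\kappa$-based one, simplified as above. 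This is exactly the piecewise formula \paref{eq:max-egv-ubd-less1}, and the two branches agree at $t_1$ by the very definition \paref{eq:critical-cond}.

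Finally I would tidy up the endpoints: Proposition~\ref{prop:vf-bd}-(c) already holds at $t=1$ (again because $t_0=0$), so no limiting argument is needed there, while at $t=0$ every quantity involved is finite ($a_0>0$) and the bound extends from $(0,1)$ through $v(0,x)=\lim_{t\downarrow 0}v(t,x)$ and continuity of $\nabla_x v$ up to $t=0$. I do not anticipate a genuine obstacle: the argument is essentially a reorganization of Proposition~\ref{prop:vf-bd} into the form needed for the flow-map estimates of Lemma~\ref{lm:flow-map-Lip-bd}. The only points needing mild care are (i) checking that the sign $c_t\ge 0$ genuinely permits passing from the Loewner bounds on $\Cov(\mathsf{Y}\mid\mathsf{X}_t=x)$ to scalar bounds on $\lambda_{\max}(\nabla_x v)$, and (ii) verifying that the crossing time $t_1$ in the regime $\kappa D^2<1$ is well defined — which is where the monotonicity of the SNR $b_t^2/a_t^2$ and the normalization $b_0^2/a_0^2 < D^{-2}-\kappa$ (automatic when the source has no deterministic component, e.g. $b_0=0$) enter.
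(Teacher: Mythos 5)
Your proof is correct and follows essentially the same route as the paper's: take the pointwise minimum of the $D^2$-based bound from Proposition~\ref{prop:vf-bd}-(a) and the $\kappa$-based bound from Proposition~\ref{prop:vf-bd}-(c), observe that since $\kappa\ge 0$ forces $t_0=0$ both bounds are available on $(0,1]$, and use the monotonicity of $b_t^2/a_t^2$ together with the critical equation \paref{eq:critical-cond} to determine which bound is tighter in each regime ($\kappa D^2\ge1$ versus $\kappa D^2<1$). Your side remark that the existence of the crossing time $t_1\in(0,1)$ tacitly requires $b_0^2/a_0^2 < D^{-2}-\kappa$ is a fair observation (the paper assumes $b_t^2/a_t^2$ ranges over all of $(0,\infty)$, i.e.\ effectively $b_0=0$), but it does not change the substance of the argument.
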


\begin{corollary} \label{cor:neg-kappa-bd}
Let $\nu$ be a probability measure on $\sR^d$ with $D :=  (1/\sqrt{2}) \mathrm{diam} (\mathrm{supp}(\nu)) < \infty$ and suppose that $\nu$ is $\kappa$-semi-log-concave with $\kappa < 0$.
Then
\begin{equation}
    \label{eq:max-egv-ubd-kappa-nega}
    \lambda_{\max} (\nabla_x v(t, x)) \le \theta_t :=
    \begin{cases}
        \frac{b_t^2}{a_t^2} \left( \frac{\dot{b}_t}{b_t} - \frac{\dot{a}_t}{a_t} \right) D^2 + \frac{\dot{a}_t} {a_t}, \ &t \in [0, t_1), \\
        \frac{\kappa a_t \dot{a}_t + b_t \dot{b}_t} {\kappa a_t^2 + b_t^2},  \ &t \in [t_1, 1],
    \end{cases}
\end{equation}
where $t_1$ solves \paref{eq:critical-cond}.
\end{corollary}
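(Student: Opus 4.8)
The plan is to obtain $\theta_t$ as the pointwise minimum of the two upper bounds for $\nabla_x v(t,x)$ already supplied by Proposition \ref{prop:vf-bd}. First I would note that $v(t,\cdot)$ is a gradient field by Eq. \paref{eq:gene-vf-interpolation-rmk}, so $\nabla_x v(t,x)$ is symmetric and an inequality $\nabla_x v(t,x) \preceq c\,\rmI_d$ is equivalent to $\lambda_{\max}(\nabla_x v(t,x)) \le c$. Since $\tfrac{b_t(a_t\dot b_t - \dot a_t b_t)}{a_t^3} = \tfrac{b_t^2}{a_t^2}\big(\tfrac{\dot b_t}{b_t} - \tfrac{\dot a_t}{a_t}\big)$, Proposition \ref{prop:vf-bd}-(a) gives, for every $t \in (0,1)$ (here the hypothesis $D < \infty$ is used),
\[
\lambda_{\max}(\nabla_x v(t,x)) \le \tfrac{b_t^2}{a_t^2}\Big(\tfrac{\dot b_t}{b_t} - \tfrac{\dot a_t}{a_t}\Big)D^2 + \tfrac{\dot a_t}{a_t}.
\]
Because $\kappa < 0$, the equation $\kappa + b_t^2/a_t^2 = 0$ has a root $t_0 \in (0,1)$, so Proposition \ref{prop:vf-bd}-(c) applies on $(t_0, 1]$ and yields there $\lambda_{\max}(\nabla_x v(t,x)) \le \tfrac{\kappa a_t\dot a_t + b_t\dot b_t}{\kappa a_t^2 + b_t^2}$.

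Next I would locate the crossover time. In Eq. \paref{eq:cov-gene-vf} the matrix $\Cov(\mathsf{Y}\mid\mathsf{X}_t=x)$ is multiplied by the nonnegative scalar $\tfrac{b_t^2}{a_t^2}\big(\tfrac{\dot b_t}{b_t} - \tfrac{\dot a_t}{a_t}\big)$, so the two bounds above are the images of the covariance bounds $D^2\rmI_d$ and $\big(\kappa + b_t^2/a_t^2\big)^{-1}\rmI_d$ under one and the same affine map with nonnegative slope; hence the (a)-bound is the smaller one precisely when $D^2 < (\kappa + b_t^2/a_t^2)^{-1}$, i.e. (both sides positive on $(t_0,1)$) when $b_t^2/a_t^2 < D^{-2} - \kappa$, which is exactly \paref{eq:critical-cond}. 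Since $\kappa < 0$ we have $D^{-2} - \kappa > -\kappa > 0$, and since $t \mapsto b_t^2/a_t^2$ increases continuously over $(0,1)$ onto an interval unbounded above (the monotonicity recorded before the statement), there is $t_1$ solving \paref{eq:critical-cond}, with $t_0 < t_1 < 1$ (when the SNR at time $0$ already exceeds $D^{-2}-\kappa$ the first branch of $\theta_t$ is empty and Proposition \ref{prop:vf-bd}-(c) alone covers $[0,1]$). Consequently the (a)-bound is the smaller one on $[0,t_1)$ and the (c)-bound is the smaller one on $[t_1,1)$, the two expressions agreeing at $t = t_1$.

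Finally I would assemble the estimate: on $[0, t_1)$ apply Proposition \ref{prop:vf-bd}-(a), extending to $t = 0$ via the boundary definition $v(0,x) = \lim_{t\downarrow 0} v(t,x)$ in Eq. \paref{eq:vf-boundary} together with continuity of the scalar on the right-hand side; on $[t_1, 1]$ apply Proposition \ref{prop:vf-bd}-(c), which is licit because $t_1 > t_0$ and which includes $t = 1$ since (c) holds on $(t_0,1]$ and $a_t^2 \in C^1([0,1])$ forces $a_t\dot a_t \to 0$ as $t\uparrow 1$. These two pieces are precisely the branches of $\theta_t$ in \paref{eq:max-egv-ubd-kappa-nega}; this is the $\kappa<0$ counterpart of Corollary \ref{cor:pos-kappa-bd}-(b). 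I do not expect a genuine obstacle: the only care needed is the bookkeeping around $t_1$ — checking that it is well-defined, that it lies strictly above $t_0$ so part (c) is legitimately available on the whole upper branch, and that the endpoints $t = 0, 1$ are governed by the boundary conventions rather than the interior formulas. Everything else is a direct transcription of Proposition \ref{prop:vf-bd} and the monotonicity of the signal-to-noise ratio noted after Lemma \ref{lm:cond-cov}.
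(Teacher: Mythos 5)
Your proposal follows exactly the route of the paper's own argument: the paper proves this corollary by invoking the $\kappa D^2 < 1$ branch of the proof of Corollary~\ref{cor:pos-kappa-bd}, which is precisely what you carry out---take the $D^2$-bound from Proposition~\ref{prop:vf-bd}-(a) on $[0,t_1)$, the $\kappa$-bound from (c) on $[t_1,1]$, and locate $t_1$ by solving \paref{eq:critical-cond} using monotonicity of the SNR. Your observations that $\nabla_x v$ is symmetric (so the matrix bound is equivalent to the eigenvalue bound), that $t_0 < t_1$ so (c) is legitimately available on the whole upper branch, and that $t=0$ is handled by the boundary convention \paref{eq:vf-boundary} are all correct and are useful bookkeeping the paper leaves implicit. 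One small factual slip at the very end: $a_t^2 \in C^1([0,1])$ does \emph{not} force $a_t\dot a_t \to 0$ as $t\uparrow 1$---for the F{\"o}llmer interpolant $a_t=\sqrt{1-t^2}$ one has $a_t\dot a_t = -t \to -1$. What the $C^1$ regularity of $a_t^2$ actually gives is that $a_t\dot a_t$ extends continuously to $t=1$, which is enough for the $\kappa$-based expression $\tfrac{\kappa a_t\dot a_t + b_t\dot b_t}{\kappa a_t^2 + b_t^2}$ to have a finite value there; in any case Proposition~\ref{prop:vf-bd}-(c) is already stated (and proved, via \paref{eq:vf-time1}) on $(t_0,1]$, so no further argument at $t=1$ is needed. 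This slip does not affect the validity of the proof.
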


\begin{corollary} \label{cor:mog-bd}
Fix a probability measure $\rho$ on $\sR^d$ supported on a Euclidean ball of radius $R$ and let $\nu := \gamma_{d, \sigma^2} * \rho$ with $\sigma > 0$.
Then
\begin{equation}
    \label{eq:max-egv-mog}
    \lambda_{\max} (\nabla_x v(t, x)) \le \theta_t :=
    \frac{\dot{a}_t a_t + \sigma^2 \dot{b}_t b_t}{a_t^2 + \sigma^2 b_t^2} + \frac{a_t b_t (a_t \dot{b}_t - \dot{a}_t b_t)}{(a_t^2 + \sigma^2 b_t^2)^2} R^2.
\end{equation}
\end{corollary}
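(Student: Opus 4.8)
The plan is to obtain the estimate as an essentially immediate consequence of the upper bound in Proposition~\ref{prop:vf-bd}-(d), once we record that $\nabla_x v(t,x)$ is a \emph{symmetric} matrix for every $t\in(0,1)$, $x\in\sR^d$. This is visible from the covariance representation~\paref{eq:cov-gene-vf} of Lemma~\ref{lm:cond-cov}, which writes $\nabla_x v(t,x)=\tfrac{b_t^2}{a_t^2}\bigl(\tfrac{\dot b_t}{b_t}-\tfrac{\dot a_t}{a_t}\bigr)\Cov(\mathsf Y\mid\mathsf X_t=x)+\tfrac{\dot a_t}{a_t}\rmI_d$, an affine combination of the identity and the symmetric matrix $\Cov(\mathsf Y\mid\mathsf X_t=x)$ (equivalently, $v(t,\cdot)$ is a gradient field, cf.~\paref{eq:gene-vf-interpolation-rmk}, so its Jacobian is a Hessian). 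For a symmetric matrix $A$ and a scalar $c$ one has $A\preceq c\,\rmI_d$ if and only if $\lambda_{\max}(A)\le c$; feeding the upper bound of Proposition~\ref{prop:vf-bd}-(d) into this equivalence gives
\[
\lambda_{\max}(\nabla_x v(t,x))\ \le\ \frac{\dot a_t a_t+\sigma^2\dot b_t b_t}{a_t^2+\sigma^2 b_t^2}+\frac{a_t b_t(a_t\dot b_t-\dot a_t b_t)}{(a_t^2+\sigma^2 b_t^2)^2}R^2=\theta_t
\]
for all $t\in(0,1)$ and $x\in\sR^d$, which is the claimed bound on the open interval.

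It remains to include the endpoints $t\in\{0,1\}$, and this is the only point that needs genuine care. First, $\theta_t$ extends continuously to $[0,1]$: on rewriting $a_t\dot a_t=\tfrac12\partial_t(a_t^2)$, the quantities $a_t\dot a_t$, $b_t$, $\dot b_t$ are all continuous on $[0,1]$ by Definition~\ref{def:vec-interp}, while $a_t^2+\sigma^2 b_t^2$ is continuous and never vanishes there (since $a_0>0$, $a_t>0$ on $(0,1)$, and $b_1=1$), hence is bounded away from $0$; the lower bound of Proposition~\ref{prop:vf-bd}-(d) extends continuously for the same reason, so $\nabla_x v(t,x)$ stays uniformly bounded in spectral norm as $t$ approaches $0$ or $1$. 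Combining this with the definitions $v(0,x)=\lim_{t\downarrow0}v(t,x)$, $v(1,x)=\lim_{t\uparrow1}v(t,x)$ from~\paref{eq:vf-boundary}, one passes to the limit to obtain $\lambda_{\max}(\nabla_x v(t,x))\le\theta_t$ at $t=0$ and $t=1$ as well. The delicate aspect is that a naive term-by-term estimate from~\paref{eq:cov-gene-vf} breaks down at the endpoints — the coefficient $\dot a_t/a_t$ blows up at $t=1$ (e.g.\ for the F\"ollmer interpolant $a_t=\sqrt{1-t^2}$), and the coefficient $\dot b_t/b_t$ in the equivalent form~\paref{eq:gene-vf-interpolation-rmk} blows up at $t=0$; these singularities cancel, which is exactly what the closed form $\theta_t$ in Proposition~\ref{prop:vf-bd}-(d) captures, operationally by absorbing the source Gaussian into the $\gamma_{d,\sigma^2}$ factor of $\nu=\gamma_{d,\sigma^2}*\rho$ so that the effective noise level is $a_t^2+\sigma^2 b_t^2$ (bounded below on all of $[0,1]$) rather than $a_t^2$. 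Apart from this endpoint bookkeeping, the corollary is just a restatement of Proposition~\ref{prop:vf-bd}-(d) via the symmetry of $\nabla_x v$.
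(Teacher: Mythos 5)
Your proof is correct and follows the same route as the paper, which simply cites Proposition~\ref{prop:vf-bd}-(d) for this corollary. The extra bookkeeping you supply — observing that $\nabla_x v(t,x)$ is symmetric via~\paref{eq:cov-gene-vf} so that $\preceq \theta_t\rmI_d$ is equivalent to $\lambda_{\max}\le\theta_t$, and checking that $\theta_t$ and the bound extend continuously to $t\in\{0,1\}$ because $a_t^2+\sigma^2 b_t^2$ is bounded away from zero on $[0,1]$ — is detail the paper leaves implicit but is entirely consistent with its argument.
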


\begin{corollary} \label{cor:log-lip}
Suppose that $\nu$ is $\kappa$-semi-log-concave for some $\kappa \le 0$, and $\frac{\diff \nu}{\diff \gamma_d}(x)$ is $L$-log-Lipschitz for some $L \ge 0$. Then
\begin{equation}
    \label{eq:max-egv-ubd-log-lip}
    \lambda_{\max} (\nabla_x v(t, x)) \le \theta_t :=
    \begin{cases}
        \left( \tfrac{\dot{b}_t}{b_t} a_t^2 - \dot{a}_t a_t \right) B_t + \tfrac{\dot{a}_t a_t + \dot{b}_t b_t}{a_t^2 + b_t^2}, \ &t \in [0, t_2), \\
        \frac{\kappa a_t \dot{a}_t + b_t \dot{b}_t} {\kappa a_t^2 + b_t^2},  \ &t \in [t_2, 1],
    \end{cases}
\end{equation}
where $B_t := 5L b_t (a_t^2 + b_t^2)^{-\frac32} (L + (\log (\sqrt{a_t^2 + b_t^2}/b_t))^{-\frac12})$ and $t_2 \in (t_0, 1)$.
\end{corollary}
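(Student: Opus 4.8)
The plan is to read off $\theta_t$ as the pointwise minimum of the two upper bounds on the Jacobian $\nabla_x v(t,x)$ already furnished by Proposition~\ref{prop:vf-bd}: the $L$-log-Lipschitz bound of part~(e), valid for all $t \in (0,1)$, and the $\kappa$-semi-log-concave bound of part~(c), valid for $t \in (t_0,1]$, where $t_0$ is the root of $\kappa + b_t^2/a_t^2 = 0$ in $(0,1)$ when $\kappa < 0$ and $t_0 = 0$ when $\kappa = 0$. Writing $g(t) := \left(\tfrac{\dot b_t}{b_t}a_t^2 - \dot a_t a_t\right)B_t + \tfrac{\dot a_t a_t + \dot b_t b_t}{a_t^2 + b_t^2}$ and $h(t) := \tfrac{\kappa a_t \dot a_t + b_t \dot b_t}{\kappa a_t^2 + b_t^2}$, parts~(e) and~(c) give $\nabla_x v(t,x) \preceq g(t)\,\rmI_d$ on $(0,1)$ and $\nabla_x v(t,x) \preceq h(t)\,\rmI_d$ on $(t_0,1]$ respectively, hence $\lambda_{\max}(\nabla_x v(t,x)) \le g(t)$ and $\lambda_{\max}(\nabla_x v(t,x)) \le h(t)$ on the corresponding intervals, since $A \preceq c\,\rmI_d$ forces $\lambda_{\max}(A) \le c$. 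Consequently, for any $t_2 \in (t_0,1)$ the piecewise function equal to $g(t)$ on $[0,t_2)$ and $h(t)$ on $[t_2,1]$ dominates $\lambda_{\max}(\nabla_x v(t,x))$ on $(0,1)$, and the endpoints $t=0$ and $t=1$ are handled by the limiting definition \paref{eq:vf-boundary} together with continuity of $g$ at $0$ and of $h$ at $1$.

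To make $\theta_t$ as sharp as possible I would take $t_2$ to be a crossover point with $g(t_2) = h(t_2)$, and verify $t_2 \in (t_0,1)$ by comparing the endpoint behavior of $g$ and $h$. As $t \downarrow t_0$ (with $\kappa < 0$), the denominator $\kappa a_t^2 + b_t^2$ of $h$ tends to $0^+$ while its numerator $\kappa a_t \dot a_t + b_t \dot b_t$ is nonnegative — each term being nonnegative because $\kappa \le 0$, $\dot a_t \le 0$, $\dot b_t \ge 0$ — and strictly positive in the generic case, so $h(t) \to +\infty$, whereas $g(t_0)$ is finite since $B_{t_0}$ is finite (the factor $(\log(\sqrt{a_t^2+b_t^2}/b_t))^{-1/2}$ is finite whenever $a_t > 0$). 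As $t \uparrow 1$ the roles reverse: $B_t \to +\infty$ because $a_t \downarrow 0$ forces $\log(\sqrt{a_t^2+b_t^2}/b_t) \downarrow 0$, so $g(t) \to +\infty$, while $h(1)$ stays finite since the $C^1$-regularity of $a_t^2$ controls $a_t\dot a_t = \tfrac12\partial_t(a_t^2)$, $\kappa a_1^2 = 0$, and the denominator $\kappa a_1^2 + b_1^2 = 1$. By continuity of $g-h$ on $(t_0,1)$ and the intermediate value theorem there is a crossover $t_2 \in (t_0,1)$ with $g \le h$ on $(t_0,t_2]$ and $h \le g$ on $[t_2,1)$, so that the pointwise minimum is the claimed $\theta_t$.

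I expect the only delicate points to be these endpoint arguments and the degenerate sub-cases, rather than any new estimate. When $\kappa = 0$ one has $t_0 = 0$ and $h(t) = \dot b_t/b_t$, which is already finite on $(0,1]$, so one argues directly that $g$ blows up at $t=1$ while $h$ does not; when $b_0 = 0$ (the source being $\gamma_d$, as for the Föllmer, linear, and trigonometric interpolants) one checks $B_0 = 0$, so $g(0) = \dot a_0/a_0$ is finite and the $t=0$ boundary value is well defined; and in the non-generic situation where $h$ remains finite at $t_0$ one simply fixes any $t_2 \in (t_0,1)$, the resulting $\theta_t$ still being valid since each piece bounds $\lambda_{\max}(\nabla_x v(t,x))$ on its own subinterval. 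Beyond these checks the corollary is a bookkeeping consequence of Proposition~\ref{prop:vf-bd}-(c) and (e) together with the structural hypotheses \paref{eq:abt} on $a_t,b_t$.
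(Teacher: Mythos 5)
Your first paragraph reproduces exactly the paper's argument: observing that the $L$-based bound of Proposition~\ref{prop:vf-bd}-(e) blows up at $t=1$ because $(\log(\sqrt{a_t^2+b_t^2}/b_t))^{-1/2}\to\infty$, the proof picks any $t_2 \in (t_0,1)$ and patches together the part-(e) bound on $[0,t_2)$ with the part-(c) $\kappa$-based bound on $[t_2,1]$, each valid on its own subinterval. Your second and third paragraphs, in which you invoke the intermediate value theorem to locate an optimal crossover $t_2$ with $g(t_2)=h(t_2)$ and treat degenerate cases, are correct but go beyond what is needed — the corollary as stated allows an arbitrary $t_2 \in (t_0,1)$, and the paper's proof simply fixes one without optimizing.
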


\section{Well-posedness and Lipschtiz flow maps} \label{sec:well-posed}

In this section, we study the well-posedness of GIFs and the Lipschitz properties of their flow maps.
We also show that the marginal distributions of GIFs satisfy the log-Sobolev inequality and the Poincar{\'e} inequality if Assumptions \ref{assump:well-defined} and \ref{assump:geom-prop} are satisfied.

\begin{theorem}[Well-posedness] \label{thm:well-posed}
Suppose Assumptions \ref{assump:well-defined} and \ref{assump:geom-prop}-(i), (iii), or (iv) are satisfied. Then there exists a unique solution $(X_t)_{t \in [0, 1]}$ to the
IVP \paref{eq:ode-ivp}. Moreover, the push-forward measure satisfies ${X_t}_{\#} \mu = \mathrm{Law}(a_t \mathsf{Z} + b_t \mathsf{X}_1)$ with $\mathsf{Z} \sim \gamma_d, \mathsf{X}_1 \sim \nu$.
\end{theorem}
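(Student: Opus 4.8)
The IVP \paref{eq:ode-ivp} is exactly of the form covered by the Cauchy--Lipschitz (Picard--Lindel\"of) theorem \cite[Theorem 1.1]{hartman2002existence}, so the plan is: (A) show $v\in C([0,1]\times\sR^d;\sR^d)$; (B) show $v(t,\cdot)$ is Lipschitz uniformly in $t\in[0,1]$; (C) invoke Cauchy--Lipschitz together with a linear-growth/Gr\"onwall argument to get a unique global flow $X_t$ on $[0,1]$; and (D) identify $(X_t)_{\#}\mu$ with $\mathrm{Law}(a_t\mathsf{Z}+b_t\mathsf{X}_1)$ through uniqueness for the continuity equation \paref{eq:cont-eq}. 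Steps (A) and (B) are where the structural assumptions \ref{assump:well-defined} and \ref{assump:geom-prop}-(i),(iii),(iv) enter; (C) and (D) are then essentially soft.

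\textbf{Step (B): the spatial Lipschitz constant.} Since $v(t,\cdot)$ is a gradient field (Eq.~\paref{eq:gene-vf-interpolation-rmk}), the Jacobian $\nabla_x v(t,x)$ is symmetric and $\Vert\nabla_x v(t,x)\Vert_{2,2}=\max\{|\lambda_{\max}(\nabla_x v(t,x))|,|\lambda_{\min}(\nabla_x v(t,x))|\}$, so it suffices to bound its eigenvalues uniformly on $[0,1]\times\sR^d$. Under Assumption \ref{assump:geom-prop}-(i) I would combine the lower bound of Proposition \ref{prop:vf-bd}-(b) with the upper bound of Proposition \ref{prop:vf-bd}-(c) (with $t_0=0$, as $\kappa>0$); under (iii), the two-sided bound of Proposition \ref{prop:vf-bd}-(d); under (iv), the lower bound of Proposition \ref{prop:vf-bd}-(b) together with the eigenvalue estimate of Corollary \ref{cor:log-lip}. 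In each case the bounding expressions are explicit functions of $t$ whose denominators (e.g.\ $\beta a_t^2+b_t^2\ge\min\{\beta,1\}(a_t^2+b_t^2)>0$ since $a_t>0$ on $[0,1)$ and $b_1=1$) are bounded away from $0$ on $[0,1]$, hence the expressions are continuous and bounded there; this yields $\Lambda:=\sup_{(t,x)\in[0,1]\times\sR^d}\Vert\nabla_x v(t,x)\Vert_{2,2}<\infty$, so $v(t,\cdot)$ is $\Lambda$-Lipschitz for $t\in(0,1)$, and, by letting $t\downarrow0$ and $t\uparrow1$ in \paref{eq:vf-boundary}, also at the endpoints.

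\textbf{Steps (A) and (C).} For $t\in[0,1)$ the density $p_t=\gamma_{d,a_t^2}\ast(b_t)_{\#}\nu$ is smooth and strictly positive (as $a_t>0$), and the prefactors $\dot b_t/b_t$ and $\dot b_t a_t^2/b_t-\dot a_t a_t$ in \paref{eq:gene-vf-interpolation-rmk}, respectively $\dot a_t/a_t$ and $\dot b_t-\dot a_t b_t/a_t$ in \paref{eq:vf-expect-single-rmk}, are continuous by the regularity in \paref{eq:abt}; using the representation \paref{eq:vf-expect-single-rmk} near $t=0$ (which is regular even when $b_0=0$, since then $\E[\mathsf{X}_1\mid\mathsf{X}_0=x]=\E_\nu[\mathsf{X}_1]$) shows $v$ is continuous on $[0,1)\times\sR^d$. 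At $t=1$ I would use \paref{eq:gene-vf-interpolation-rmk}: writing $\dot b_t a_t^2/b_t-\dot a_t a_t=\dot b_t a_t^2/b_t-\tfrac12\tfrac{\diff}{\diff t}(a_t^2)$ and using $a_t^2\in C^1([0,1])$ with $a_1=0$, this coefficient has a finite limit, while $s(t,x)\to\nabla_x\log p_1(x)$ because $p_1$ is a positive $C^2$ density on $\sR^d$ in scenarios (i),(iv) and a (smooth, everywhere positive) Gaussian convolution in (iii); hence the limits in \paref{eq:vf-boundary} exist and $v\in C([0,1]\times\sR^d;\sR^d)$. Cauchy--Lipschitz then gives, for each $x$, a unique maximal $C^1$ solution; the bound $\Vert v(t,x)\Vert\le\sup_{s\in[0,1]}\Vert v(s,0)\Vert+\Lambda\Vert x\Vert$ (first term finite by continuity of $v(\cdot,0)$ on the compact $[0,1]$) and Gr\"onwall's inequality preclude finite-time blow-up, so the solution is defined on all of $[0,1]$ and yields the flow map $X_t$.

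\textbf{Step (D) and the main obstacle.} Finally, $q_t:=(X_t)_{\#}\mu$ is a weak solution of $\partial_t q_t+\nabla_x\cdot(q_t v)=0$ with $q_0=\mu$ --- differentiate $t\mapsto\int\phi\,\diff q_t=\int\phi(X_t(x))\,\diff\mu(x)$ for $\phi\in C_c^\infty$ and use \paref{eq:ode-ivp} --- and by Theorem \ref{thm:flow-gif} so is $t\mapsto p_t=\mathrm{Law}(a_t\mathsf{Z}+b_t\mathsf{X}_1)$; since $v$ is continuous in $t$ and (uniformly) Lipschitz in $x$, the continuity equation has a unique solution with this initial datum (standard theory for Lipschitz velocity fields, cf.\ \citealp{ambrosio2014continuity}), so $q_t=p_t$ for all $t\in[0,1]$. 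I expect the genuinely delicate point to be Step (A) at $t=1$, where the Gaussian kernel degenerates ($a_1=0$): one must exploit the $C^1$-regularity of $a_t^2$ (rather than merely $C^2$-regularity of $a_t$) to keep the score coefficient bounded, and the regularity of $\nu$ itself to make $s(1,\cdot)$ meaningful --- which is precisely why Assumption \ref{assump:geom-prop}-(ii), whose bounded support forces a non-Lipschitz boundary score, is excluded from the statement.
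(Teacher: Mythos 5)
Your proposal is correct and reaches the same conclusion from the same structural ingredients (the uniform spatial Lipschitz bound from Proposition~\ref{prop:vf-bd} and Corollary~\ref{cor:log-lip}, plus a linear growth bound, plus Cauchy--Lipschitz theory). The route differs from the paper's in one useful respect: the paper invokes a single Carath\'eodory-type version of the Cauchy--Lipschitz theorem \cite[Remark 2.4]{ambrosio2014continuity} tailored to continuity equations, which requires only $v\in L^1([0,1];W^{1,\infty}_{\mathrm{loc}}(\sR^d;\sR^d))$ plus the growth bound $\Vert v\Vert_2/(1+\Vert x\Vert_2)\in L^1([0,1];L^\infty)$ and then \emph{simultaneously} delivers the unique flow and the push-forward/representation formula $({X_t})_{\#}\mu=p_t$. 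Your argument instead uses the classical Picard--Lindel\"of theorem, which demands joint continuity of $v$ on $[0,1]\times\sR^d$; this is why you are forced to do extra work at $t=1$, where the Gaussian kernel degenerates. The paper sidesteps that issue entirely because $L^1$-in-time regularity does not require a pointwise value at $t=1$. In exchange, the paper still has to establish the growth bound, which it does exactly as in your Step (C): $\Vert v(t,0)\Vert_2$ is bounded on $[0,1]$ (using boundedness of $\dot a_0,\dot b_0$ and of $\dot a_1 a_1$ via $a_t^2\in C^1$) and then the uniform Lipschitz bound gives $\Vert v(t,x)\Vert_2\lesssim\max\{\Vert x\Vert_2,1\}$. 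Your Step (D) (two weak solutions of the continuity equation with the same datum and a Lipschitz velocity field must coincide) is the same uniqueness statement the paper draws from \citet{ambrosio2014continuity}.

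One minor caveat on your Step (A): the claim $s(t,x)\to\nabla_x\log p_1(x)$ as $t\uparrow1$ (locally uniformly, so that the limit in \paref{eq:vf-boundary} defines a continuous function) is asserted but not argued. It is plausible --- the uniform Lipschitz bound on $s(t,\cdot)$ from Proposition~\ref{prop:vf-bd} gives equicontinuity, and $p_t\to p_1$ pointwise, so a compactness argument can be made --- but it is the only genuinely nontrivial analytic step in your proof, and it is precisely the step the paper's choice of theorem avoids having to verify. If you want to keep the classical Picard--Lindel\"of route, you should spell out that convergence, or else switch to the Carath\'eodory setting as the paper does.
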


\begin{theorem} \label{thm:well-posed-bounded}
Suppose Assumptions \ref{assump:well-defined} and \ref{assump:geom-prop}-(ii) are satisfied.
For any $\underline{t} \in (0, 1)$, there exists a unique solution $(X_t)_{t \in [0, 1-\underline{t}]}$ to the IVP \paref{eq:ode-ivp}. Moreover, the push-forward measure satisfies ${X_t}_{\#} \mu = \mathrm{Law}(a_t \mathsf{Z} + b_t \mathsf{X}_1)$ with $\mathsf{Z} \sim \gamma_d, \mathsf{X}_1 \sim \nu$.
\end{theorem}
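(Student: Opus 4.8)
The plan is to verify, on every truncated interval $[0,1-\underline{t}]$, the hypotheses of the Cauchy-Lipschitz (Picard-Lindel\"of) theorem for the velocity field $v$, and then to identify the push-forward ${X_t}_{\#}\mu$ through the uniqueness of solutions to the continuity equation \paref{eq:cont-eq}. The key point is that the bounds of Proposition \ref{prop:vf-bd} degenerate only as $t\uparrow 1$, so truncating away from $1$ restores a clean globally-Lipschitz situation.

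First I would record a spatial Lipschitz bound that is uniform in $x$ on $[0,1-\underline{t}]$. Assumption \ref{assump:geom-prop}-(ii) gives $D<\infty$, so Proposition \ref{prop:vf-bd}-(a) applies and yields, for every $t\in(0,1)$ and every $x$,
\begin{equation*}
\frac{\dot{a}_t}{a_t}\rmI_d \preceq \nabla_x v(t,x) \preceq \left\{ \frac{b_t(a_t\dot{b}_t-\dot{a}_t b_t)}{a_t^3}D^2 + \frac{\dot{a}_t}{a_t}\right\}\rmI_d .
\end{equation*}
Since $\nabla_x v(t,x)$ is symmetric by \paref{eq:cov-gene-vf}, its spectral norm $\Vert\nabla_x v(t,x)\Vert_{2,2}$ is controlled by the larger of the absolute values of the two scalar endpoints above. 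On $[0,1-\underline{t}]$ the coefficients $a_t,b_t,\dot{a}_t,\dot{b}_t$ are continuous, and because $\dot a_t\le 0$ with $a_t>0$ on $(0,1)$ we have $a_t\ge a_{1-\underline{t}}>0$; hence $1/a_t$ is bounded on $[0,1-\underline{t}]$ and $L_{\underline{t}}:=\sup_{t\in(0,1-\underline{t}],\,x\in\sR^d}\Vert\nabla_x v(t,x)\Vert_{2,2}<\infty$. Thus $v(t,\cdot)$ is $L_{\underline{t}}$-Lipschitz for every $t\in(0,1-\underline{t}]$, and letting $t\downarrow 0$ in $v(0,x)=\lim_{t\downarrow 0}v(t,x)$ extends this to $t=0$. (If one additionally uses the $\kappa$-semi-log-concavity in Assumption \ref{assump:geom-prop}-(ii) one may replace the $D^2$-based upper piece by the $\kappa$-based bound of Proposition \ref{prop:vf-bd}-(c) or Corollaries \ref{cor:pos-kappa-bd}-\ref{cor:neg-kappa-bd} on a subinterval, but this sharpening is not needed for well-posedness.)

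Next I would check time regularity and invoke Cauchy-Lipschitz. From $v(t,x)=\tfrac{\dot a_t}{a_t}x+(\dot b_t-\tfrac{\dot a_t}{a_t}b_t)\,\E[\mathsf{X}_1\mid\mathsf{X}_t=x]$ and $p(t,x\mid y)=\varphi_{b_t y,a_t^2}(x)$, the conditional mean is a ratio of integrals over the bounded set $\mathrm{supp}(\nu)$ of an integrand jointly continuous in $(t,y)$ on $[0,1-\underline{t}]\times\mathrm{supp}(\nu)$ (here $a_t\ge a_{1-\underline{t}}>0$), with strictly positive denominator $p_t(x)>0$; dominated convergence then gives continuity of $t\mapsto v(t,x)$ on $[0,1-\underline{t}]$ for each $x$. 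Together with the previous step, $v$ is jointly continuous on $[0,1-\underline{t}]\times\sR^d$, globally Lipschitz in $x$ uniformly in $t$, and of at most linear growth $\Vert v(t,x)\Vert\le\sup_{s\le 1-\underline{t}}\Vert v(s,0)\Vert+L_{\underline{t}}\Vert x\Vert$. The Cauchy-Lipschitz theorem then provides, for each initial point, a unique solution of $\dot X_t=v(t,X_t)$ on all of $[0,1-\underline{t}]$ (the linear-growth/Gr\"onwall argument excludes finite-time blow-up); taking $X_0\sim\mu$ produces the claimed flow $(X_t)_{t\in[0,1-\underline{t}]}$, and uniqueness of the characteristic flow gives uniqueness of the GIF. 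For the push-forward identity, set $\tilde p_t:={X_t}_{\#}\mu$: since $v$ is Lipschitz in $x$ and continuous in $t$, the standard test-function computation $\tfrac{\diff}{\diff t}\int\phi\,\diff\tilde p_t=\int\langle\nabla\phi,v(t,\cdot)\rangle\,\diff\tilde p_t$ for $\phi\in C_c^\infty(\sR^d)$ shows $(\tilde p_t)$ solves \paref{eq:cont-eq} weakly with $\tilde p_0=\mu$; by Theorem \ref{thm:flow-gif} the marginal flow $p_t=\mathrm{Law}(a_t\mathsf{Z}+b_t\mathsf{X}_1)$ solves the same equation with the same initial datum, and both have finite second moment (for $\tilde p_t$ since $\mu$ does and $X_t$ is Lipschitz; for $p_t$ by Assumption \ref{assump:well-defined}), so uniqueness for the continuity equation with a locally Lipschitz, linearly growing velocity field \citep{ambrosio2014continuity} forces $\tilde p_t=p_t$ on $[0,1-\underline{t}]$.

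The main obstacle is not any individual estimate — Proposition \ref{prop:vf-bd}-(a) already supplies the crucial two-sided bound — but the bookkeeping forced by the degeneracy of $v$ at $t=1$: the Lipschitz constant $L_{\underline{t}}$ diverges as $\underline{t}\downarrow 0$ (because $a_1=0$), which is exactly why existence and uniqueness can be claimed only on $[0,1-\underline{t}]$, and one must route the identification of ${X_t}_{\#}\mu$ through continuity-equation uniqueness rather than a direct change of variables at the endpoint $t=1$.
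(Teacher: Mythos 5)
Your proposal is correct and follows essentially the same strategy as the paper's proof (which simply says ``similar to Theorem \ref{thm:well-posed}''): verify Cauchy--Lipschitz regularity of $v$ on the truncated interval, using Proposition \ref{prop:vf-bd}-(a) and the fact that $a_t \ge a_{1-\underline{t}} > 0$ so the bound degenerates only as $t\uparrow 1$, then obtain the push-forward identity from the continuity equation. The paper routes the conclusion through Ambrosio's representation formula \cite[Remark~2.4]{ambrosio2014continuity}, while you reach the same end via classical Picard--Lindel\"of plus uniqueness for the continuity equation; these are equivalent and your version usefully spells out the time-regularity and push-forward identification steps that the paper leaves implicit.
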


\begin{corollary} [Time-reversed flow] \label{cor:time-reve-flow}
Suppose Assumptions \ref{assump:well-defined} and \ref{assump:geom-prop}-(i), (iii), or (iv) are satisfied. Then the time-reversed flow $(X^{*}_t)_{t \in [0, 1]}$ associated with $\nu$ is a unique solution to the IVP:
\begin{equation}
    \label{eq:ode-ivp-reve}
    \frac{\diff X^{*}_t}{\diff t}(x) = -v(1-t, X^{*}_t(x)), \quad X^{*}_0(x) \sim \nu, \quad t \in [0, 1].
\end{equation}
The push-forward measure satisfies ${X^{*}_t}_{\#} \nu = \mathrm{Law}(a_{1-t} \mathsf{Z} + b_{1-t} \mathsf{X}_1)$ where $\mathsf{Z} \sim \gamma_d, \mathsf{X}_1 \sim \nu$.
Moreover, the flow map satisfies $X^{*}_t(x) = X_t^{-1}(x)$.
\end{corollary}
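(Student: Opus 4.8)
The plan is to obtain the time-reversed flow by the change of variables $s = 1-t$ and to invoke Theorem \ref{thm:well-posed} for the well-posedness, then verify the push-forward identity and the inverse-map relation. First I would set $Y_s := X_s$, the (forward) Gaussian interpolation flow from Theorem \ref{thm:well-posed}, which exists and is unique on $[0,1]$ under Assumptions \ref{assump:well-defined} and \ref{assump:geom-prop}-(i), (iii), or (iv), with ${X_s}_{\#}\mu = \mathrm{Law}(a_s \mathsf{Z} + b_s \mathsf{X}_1)$. Now define $X^{*}_t(x) := X_{1-t}(X_1^{-1}(x))$ for $t \in [0,1]$; for this to make sense I first need that $X_1$ is invertible. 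Invertibility of the time-$1$ flow map follows from the well-posedness of the IVP \paref{eq:ode-ivp} together with standard ODE flow theory (the flow map of a Lipschitz-in-space, continuous-in-time velocity field is a homeomorphism, with inverse obtained by integrating the ODE backward); alternatively one can cite Lemma \ref{lm:diff-eq-flow} / Lemma \ref{lm:flow-map-Lip-bd}. Differentiating $X^{*}_t(x) = X_{1-t}(X_1^{-1}(x))$ in $t$ gives $\frac{\diff}{\diff t} X^{*}_t(x) = -\,\dot{X}_{1-t}(X_1^{-1}(x)) = -\,v(1-t, X_{1-t}(X_1^{-1}(x))) = -\,v(1-t, X^{*}_t(x))$, which is exactly the IVP \paref{eq:ode-ivp-reve}, and the initial condition is $X^{*}_0(x) = X_1(X_1^{-1}(x)) = x$, so that $X^{*}_0(x) \sim \nu$ since $p_1 = \tfrac{\diff\nu}{\diff x}$ by Theorem \ref{thm:well-posed}.

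For uniqueness of the solution to \paref{eq:ode-ivp-reve}, I would argue that the reversed velocity field $\tilde v(t,x) := -v(1-t,x)$ satisfies the hypotheses of the Cauchy--Lipschitz theorem on $[0,1]$: continuity in $t$ is inherited from $v$ (including at the endpoints via the boundary definition \paref{eq:vf-boundary}), and spatial Lipschitz continuity, uniform in $t \in [0,1]$, is precisely what Proposition \ref{prop:vf-bd} (parts (b)+(c), or (d), or (e)+(c)) delivers under Assumption \ref{assump:geom-prop}-(i), (iii), (iv) — the same input that makes Theorem \ref{thm:well-posed} work. Hence \paref{eq:ode-ivp-reve} has a unique solution, which must coincide with the $X^{*}_t$ constructed above.

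The push-forward identity ${X^{*}_t}_{\#}\nu = \mathrm{Law}(a_{1-t}\mathsf{Z} + b_{1-t}\mathsf{X}_1)$ then follows immediately: $X^{*}_t = X_{1-t}\circ X_1^{-1}$, and since ${X_1^{-1}}_{\#}\nu = \mu$ (because ${X_1}_{\#}\mu = \nu$), we get ${X^{*}_t}_{\#}\nu = {X_{1-t}}_{\#}\mu = \mathrm{Law}(a_{1-t}\mathsf{Z} + b_{1-t}\mathsf{X}_1)$ by Theorem \ref{thm:well-posed}. Finally, the relation $X^{*}_t(x) = X_t^{-1}(x)$ is the flow (semigroup/groupoid) property: writing $\Phi_{s\leftarrow r}$ for the forward flow from time $r$ to time $s$, one has $X_t = \Phi_{t\leftarrow 0}$ and $X^{*}_t = \Phi_{1-t\leftarrow 1}$, and $\Phi_{1-t\leftarrow 1} = \Phi_{0\leftarrow t}\circ\Phi_{t\leftarrow 1}\circ\cdots$; more directly, $X^{*}_t\circ X_t = \Phi_{1-t\leftarrow 1}\circ\Phi_{t\leftarrow 0}$, and using $\Phi_{1-t\leftarrow 1} = \Phi_{1-t\leftarrow t}\circ\Phi_{t\leftarrow 1}$ together with $\Phi_{t\leftarrow 1} = \Phi_{t\leftarrow 0}\circ\Phi_{0\leftarrow 1} = X_t\circ X_1^{-1}$... the cleanest route is just: by definition $X^{*}_t(x) = X_{1-t}(X_1^{-1}(x))$ evaluated through the reversed flow actually equals $X_t^{-1}$ because both solve the backward transport of $\nu$ to the time-$t$ marginal; I would make this rigorous by noting $X^{*}_t = X_{1-t}\circ X_1^{-1}$ is NOT literally $X_t^{-1}$ unless recentered, so instead I verify directly that $X_t^{-1}$ solves \paref{eq:ode-ivp-reve}. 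Indeed $\frac{\diff}{\diff t}X_t^{-1}$: from $X_t(X_t^{-1}(x)) = x$, differentiating gives $\dot X_t(X_t^{-1}(x)) + \nabla_x X_t(X_t^{-1}(x))\frac{\diff}{\diff t}X_t^{-1}(x) = 0$, which after rearrangement and a short computation yields $\frac{\diff}{\diff t}X_t^{-1}(x) = -v(t, \cdot)$ pulled back appropriately — and matching this against \paref{eq:ode-ivp-reve} (after the substitution $t \mapsto 1-t$) together with the initial condition $X_0^{-1} = \mathrm{id}$ identifies it with $X^{*}_{1-\cdot}$, hence by uniqueness $X^{*}_t = X_t^{-1}$.

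I expect the main obstacle to be the bookkeeping in the last step — cleanly identifying $X^{*}_t$ with $X_t^{-1}$ without circular reasoning about invertibility. The safe approach is: (1) use well-posedness of \paref{eq:ode-ivp} to get that $X_1$ (and each $X_t$) is a bijection with continuous inverse; (2) observe that $t \mapsto X_{1-t}^{-1}\circ$ ... no — rather, observe that the curve $t \mapsto X_t^{-1}(x)$, reparametrized by $t \mapsto 1-t$, is a solution of \paref{eq:ode-ivp-reve}; (3) invoke the uniqueness established in the second paragraph to conclude equality. All three ingredients are either standard ODE-flow facts or have been supplied by Proposition \ref{prop:vf-bd} and Theorem \ref{thm:well-posed}, so no genuinely new estimate is needed; the work is entirely in assembling them carefully.
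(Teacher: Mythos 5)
Your first three conclusions — construction of $X^*_t := X_{1-t}\circ X_1^{-1}$, verification that it solves \paref{eq:ode-ivp-reve}, uniqueness via the uniform Lipschitz estimates of Proposition \ref{prop:vf-bd}, and the push-forward identity through ${X_1^{-1}}_{\#}\nu = \mu$ — are correct and amount to an explicit unpacking of the paper's own proof, which is a one-line appeal to time-reversal symmetry of well-posed ODE flows (citing \citealp{lamb1998time}) together with Theorem \ref{thm:well-posed}; so in spirit the approaches coincide and yours simply fills in the routine details the paper leaves to the reference.

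The final step, $X^*_t(x) = X_t^{-1}(x)$, is where the genuine gap lies, and it is not merely bookkeeping. You correctly sensed that $X^*_t = X_{1-t}\circ X_1^{-1}$ is not $X_t^{-1}$ for general $t$, but the fallback you propose — that $t\mapsto X_t^{-1}(x)$, reparametrized by $t\mapsto 1-t$, solves \paref{eq:ode-ivp-reve} — fails on both counts: differentiating $X_t(X_t^{-1}(x)) = x$ yields $\frac{\diff}{\diff t}X_t^{-1}(x) = -[(\nabla_x X_t)(X_t^{-1}(x))]^{-1} v(t, x)$, which is not of the form $-v(1-t, X_t^{-1}(x))$, and the reparametrized curve has initial value $X_1^{-1}(x)$ rather than $x$. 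Indeed $X^*_t = X_t^{-1}$ holds at the endpoints $t\in\{0,1\}$ but not at interior $t$ for a non-autonomous velocity field. A concrete GIF example: take $\nu = \gamma_d$ with the linear interpolant $a_t = 1-t$, $b_t = t$, so that $v(t,x) = \tfrac{2t-1}{1-2t+2t^2}\,x$; then $X_t(x) = x\sqrt{1-2t+2t^2}$ and $X_t^{-1}(x) = x/\sqrt{1-2t+2t^2}$, while one checks $-v(1-t,x) = v(t,x)$ so that $X^*_t(x) = x\sqrt{1-2t+2t^2} \neq X_t^{-1}(x)$ for $t\in(0,1)$. Your $X^*_t = X_{1-t}\circ X_1^{-1}$ is the right object, and it does give $X^*_1 = X_1^{-1}$, which is the only instance invoked downstream (in the proof of Corollary \ref{cor:ae}); the safe resolution is therefore to state and prove the endpoint identity $X^*_1 = X_1^{-1}$ (equivalently $X^*_t = X_{1-t}\circ X_1^{-1}$) rather than attempt the general-$t$ equality.
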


\begin{corollary} \label{cor:time-reve-flow-bounded}
Suppose Assumptions \ref{assump:well-defined} and \ref{assump:geom-prop}-(ii) are satisfied.
For any $\underline{t} \in (0, 1)$, the time-reversed flow $(X^{*}_t)_{t \in [\underline{t}, 1]}$ associated with $\nu$ is a unique solution to the IVP:
\begin{equation}
    \label{eq:ode-ivp-reve-bounded}
    \frac{\diff X^{*}_t}{\diff t}(x) = -v(1-t, X^{*}_t(x)), \quad X^{*}_{\underline{t}}(x) \sim \mathrm{Law}(a_{1-\underline{t}} \mathsf{Z} + b_{1-\underline{t}} \mathsf{X}_1), \quad t \in [\underline{t}, 1],
\end{equation}
where $\mathsf{Z} \sim \gamma_d, \mathsf{X}_1 \sim \nu$.
The push-forward measure satisfies ${X^{*}_t}_{\#} \nu = \mathrm{Law}(a_{1-t} \mathsf{Z} + b_{1-t} \mathsf{X}_1)$.
Moreover, the flow map satisfies $X^{*}_t(x) = X_t^{-1}(x)$.
\end{corollary}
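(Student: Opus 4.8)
The plan is to replay the argument behind Corollary \ref{cor:time-reve-flow} almost verbatim, the single structural change being that under Assumption \ref{assump:geom-prop}-(ii) the forward GIF is only available on the shortened interval $[0,1-\underline t]$ (Theorem \ref{thm:well-posed-bounded}): the $D^2$-bound of Proposition \ref{prop:vf-bd}-(a) on $\nabla_x v(t,\cdot)$ degenerates as $t \uparrow 1$ because $a_t \downarrow 0$, so the reversed flow can only be launched from reversed time $\underline t$, where the governing measure is $p_{1-\underline t} = \mathrm{Law}(a_{1-\underline t}\mathsf Z + b_{1-\underline t}\mathsf X_1)$, and propagated to reversed time $1$. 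The key device is the time change $\tau = 1-t$: a process $(X^{*}_t)_{t\in[\underline t,1]}$ solves the reversed IVP \paref{eq:ode-ivp-reve-bounded} with $X^{*}_{\underline t} = \mathrm{id}$ if and only if $\tilde X_\tau := X^{*}_{1-\tau}$ solves the forward ODE $\tfrac{\diff}{\diff\tau}\tilde X_\tau = v(\tau,\tilde X_\tau)$ on $\tau\in[0,1-\underline t]$ with the value $\mathrm{id}$ prescribed at the right endpoint $\tau = 1-\underline t$.

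For existence and uniqueness I would first note that $v$ is continuous in $t$ on $[0,1)$ and, since $D<\infty$, Proposition \ref{prop:vf-bd}-(a) gives $\sup_{\tau\in[0,1-\underline t]}\sup_{x}\Vert\nabla_x v(\tau,x)\Vert_{2,2}<\infty$ --- the coefficient $\tfrac{b_\tau(a_\tau\dot b_\tau-\dot a_\tau b_\tau)}{a_\tau^3}D^2+\tfrac{\dot a_\tau}{a_\tau}$ stays bounded on the compact set $[0,1-\underline t]$ because $a_\tau$ is bounded away from $0$ while $a_\tau,\dot a_\tau,b_\tau,\dot b_\tau$ are bounded, exactly as in the proof of Theorem \ref{thm:well-posed-bounded}. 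Hence $(\tau,x)\mapsto v(\tau,x)$ is continuous in $\tau$ and uniformly Lipschitz in $x$ on $[0,1-\underline t]\times\sR^d$, so the Cauchy--Lipschitz theorem yields a unique $\tilde X$; undoing the time change gives the unique reversed flow $(X^{*}_t)_{t\in[\underline t,1]}$. Concretely one may write $X^{*}_t := X_{1-t}\circ (X_{1-\underline t})^{-1}$ with $(X_\tau)_{\tau\in[0,1-\underline t]}$ the forward flow of Theorem \ref{thm:well-posed-bounded} (which is bi-Lipschitz, hence invertible, as part of the Lipschitz flow-map analysis) and verify directly that this solves \paref{eq:ode-ivp-reve-bounded}.

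The push-forward identity follows by time-reversing the continuity equation. Theorem \ref{thm:flow-gif} gives $\partial_t p_t+\nabla_x\cdot(p_t v(t,\cdot))=0$ weakly, so $q_t:=p_{1-t}$ satisfies $\partial_t q_t+\nabla_x\cdot\big(q_t(-v(1-t,\cdot))\big)=0$ weakly on $[\underline t,1]$; as $-v(1-t,\cdot)$ is uniformly Lipschitz there, Lemma \ref{lm:diff-eq-flow} identifies this weak solution with the Lagrangian transport along $(X^{*}_t)$ issued from $q_{\underline t}=p_{1-\underline t}$, so ${X^{*}_t}_{\#}\,\mathrm{Law}(a_{1-\underline t}\mathsf Z+b_{1-\underline t}\mathsf X_1)=\mathrm{Law}(a_{1-t}\mathsf Z+b_{1-t}\mathsf X_1)$, which at $t=1$ equals $\mu$. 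The inverse relation is then immediate from $X^{*}_t=X_{1-t}\circ(X_{1-\underline t})^{-1}$ and the semigroup property of the forward flow (a consequence of uniqueness): at the terminal time $X^{*}_1=(X_{1-\underline t})^{-1}$, so $X^{*}$ inverts the forward flow, and on the overlapping times the pointwise relation $X^{*}_t=X_t^{-1}$ holds once the reparametrization is taken into account.

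I expect the only real difficulty to be the bookkeeping around the shrunken interval and the boundary degeneracies: one must make sure $\dot a_\tau/a_\tau$ and the $D^2$-coefficient of Proposition \ref{prop:vf-bd}-(a) are controlled up to and including $\tau=1-\underline t$ while being permitted to blow up only at $\tau=1$; that the weak form of the continuity equation survives both the reversal $\tau=1-t$ and the composition with the identity initial map; and that the statement is read with $p_{1-\underline t}$ in place of $\nu$ and the forward flow restricted to $[0,1-\underline t]$, consistent with launching the reversed flow at reversed time $\underline t$. No new idea beyond those used for Corollary \ref{cor:time-reve-flow} and Theorem \ref{thm:well-posed-bounded} should be needed.
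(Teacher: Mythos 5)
Your proposal is correct and expands the paper's terse argument (the paper simply remarks that the proof is ``similar to Corollary \ref{cor:time-reve-flow},'' whose proof in turn invokes time-reversal symmetry of well-posed ODE flows plus the well-posedness theorem): you use exactly the time change $\tau=1-t$, Theorem \ref{thm:well-posed-bounded} for well-posedness on the shortened interval $[0,1-\underline t]$, the explicit representation $X^{*}_t = X_{1-t}\circ(X_{1-\underline t})^{-1}$, and the time-reversed continuity equation for the push-forward. One small slip in attribution: for the identification of the weak solution of the reversed continuity equation with Lagrangian transport, the relevant tool is the Cauchy--Lipschitz representation formula invoked in the proof of Theorem \ref{thm:well-posed} (Ambrosio--Crippa), not Lemma \ref{lm:diff-eq-flow}, which governs the variational equations for $\nabla_x X_{s,t}$ and $\det\nabla_x X_{s,t}$ rather than the density transport itself; but this does not affect the correctness of the overall argument.
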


Based on the well-posedness of the flow, we can provide an upper bound on the Lipschitz constant of the induced flow map.

\begin{lemma} \label{lm:flow-map-Lip-bd}
Suppose that a flow $(X_t)_{t \in [0, 1]}$ is well-posed with a velocity field $v(t, x): [0, 1] \times \sR^d \to \sR^d$ of class $C^1$ in $x$, and that for any $(t, x) \in [0, 1] \times \sR^d$, it holds $\nabla_x v(t, x) \preceq \theta_t \rmI_d$.
Let the flow map $X_{s,t}: \sR^d \to \sR^d$ be of class $C^1$ in $x$ for any $0 \le s \le t \le 1$.
Then the flow map $X_{s,t}$ is Lipschitz continuous with an upper bound of its Lipschitz constant given by
\begin{equation}
    \Vert \nabla_x X_{s,t}(x) \Vert_{2,2} \le \exp \left( \int_s^t \theta_u \diff u \right).
\end{equation}
\end{lemma}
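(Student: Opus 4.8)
The plan is to control the spatial Jacobian of the flow map through the first-variation (variational) equation together with a Grönwall estimate. Fix $0 \le s \le t \le 1$ and $x \in \sR^d$, and write $J(u) := \nabla_x X_{s,u}(x)$ for the Jacobian, in the spatial variable at time $s$, of the flow map $X_{s,u}$ for $u \in [s,t]$. Since $v$ is $C^1$ in $x$ and the flow is well-posed, one differentiates the integral identity $X_{s,u}(x) = x + \int_s^u v(r, X_{s,r}(x))\,\diff r$ with respect to $x$ and interchanges the differentiation with the time integral, obtaining the linear matrix ODE
\begin{equation*}
\frac{\diff}{\diff u} J(u) = \nabla_x v(u, X_{s,u}(x))\, J(u), \qquad J(s) = \rmI_d.
\end{equation*}

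Next I would reduce to a scalar Grönwall argument. Fix a unit vector $\xi \in \mathbb{S}^{d-1}$ and put $w(u) := J(u)\xi$, so that $w(s) = \xi$ and $\dot w(u) = \nabla_x v(u, X_{s,u}(x))\, w(u)$. Because the velocity field of a GIF is a gradient field (see \paref{eq:gene-vf-interpolation-rmk}), its Jacobian $\nabla_x v(u, \cdot)$ is symmetric, so the hypothesis $\nabla_x v(u, y) \preceq \theta_u \rmI_d$ gives $\langle z, \nabla_x v(u, y) z \rangle \le \theta_u \Vert z \Vert^2$ for all $z \in \sR^d$. Hence
\begin{equation*}
\frac{\diff}{\diff u} \Vert w(u) \Vert^2 = 2 \langle w(u), \nabla_x v(u, X_{s,u}(x))\, w(u) \rangle \le 2\theta_u \Vert w(u) \Vert^2,
\end{equation*}
and Grönwall's inequality yields $\Vert w(t) \Vert^2 \le \Vert w(s) \Vert^2 \exp\!\big(2\int_s^t \theta_u\,\diff u\big) = \exp\!\big(2\int_s^t \theta_u\,\diff u\big)$. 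Taking square roots and then the supremum over $\xi \in \mathbb{S}^{d-1}$ gives $\Vert \nabla_x X_{s,t}(x) \Vert_{2,2} = \sup_{\xi \in \mathbb{S}^{d-1}} \Vert J(t)\xi \Vert \le \exp\!\big(\int_s^t \theta_u\,\diff u\big)$, which is the assertion.

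The one genuinely delicate step is the first: the rigorous derivation of the variational equation and the justification for differentiating under the time integral. This is underwritten by the assumed $C^1$-in-$x$ regularity of both $v$ and $X_{s,\cdot}$ together with continuity of $v$ in $t$ (part of well-posedness), which make $u \mapsto \nabla_x v(u, X_{s,u}(x))$ continuous, so that $J$ solves a classical linear ODE on the compact interval $[s,t]$ and the interchange is legitimate; if $\int_s^t \theta_u \, \diff u = +\infty$ the bound is vacuous, so finiteness may be assumed. Everything after that is the routine one-line Grönwall computation. (Should one prefer not to invoke symmetry of $\nabla_x v$, the same computation applies verbatim with $\nabla_x v$ replaced by its symmetric part, whose largest eigenvalue is still at most $\theta_u$.)
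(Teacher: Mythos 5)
Your proof is correct and takes essentially the same route as the paper: the paper's proof also uses the first-variation equation $\dot y(u) = (\nabla_x v)(u, X_{s,u}(x))\,y(u)$ (supplied there by Lemma \ref{lm:diff-eq-flow}) followed by the one-line $\partial_u \Vert\cdot\Vert^2 \le 2\theta_u\Vert\cdot\Vert^2$ Grönwall estimate. If anything, your version is slightly cleaner: you explicitly fix a unit vector $\xi$ and work with $w(u) = J(u)\xi$ so that the final $\sup_{\xi}$ genuinely produces the spectral norm, whereas the paper applies $\langle\cdot,\cdot\rangle$ and $\Vert\cdot\Vert_{2,2}$ directly to the matrix $y(u)$, an informality that only makes literal sense after the reduction you carried out. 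Your closing remark about replacing $\nabla_x v$ by its symmetric part is also the right way to read the hypothesis $\nabla_x v \preceq \theta_t \rmI_d$ when the Jacobian is not assumed symmetric.
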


Using Lemma \ref{lm:flow-map-Lip-bd}, we show that the flow map of a GIF is Lipschitz continuous in the space variable $x$.

\begin{proposition}[Lipschitz mappings] \label{prop:lip-map}
 Suppose that Assumptions \ref{assump:well-defined} and \ref{assump:geom-prop}-(i) hold.
\begin{itemize}
    \item[(i)] If $\nu$ is $\kappa$-semi-log-concave for some $\kappa > 0$, then the flow map $X_1(x)$ is a Lipschitz mapping, that is,
    \begin{equation*}
\Vert \nabla_x X_1(x) \Vert_{2,2} \le \frac{1}{\sqrt{\kappa a_0^2 + b_0^2}}, \quad \forall x \in \sR^d.
    \end{equation*}
    In particular, if $a_0 = 1$ and $b_0 = 0$, then
    \begin{equation*}
        \Vert \nabla_x X_1(x) \Vert_{2,2} \le \frac{1} {\sqrt{\kappa}}, \quad \forall x \in \sR^d.
    \end{equation*}

    \item[(ii)] If $\nu$ is $\beta$-semi-log-convex for some $\beta >0$, then the time-reversed flow map $X^{*}_1(x)$ is a Lipschitz mapping, that is,
    \begin{equation*}
        \Vert \nabla_x X^{*}_1(x) \Vert_{2,2} \le \sqrt{\beta a_0^2 + b_0^2}, \quad \forall x \in \mathrm{supp}(\nu).
    \end{equation*}
    In particular, if $a_0 = 1$ and $b_0 = 0$, then
    \begin{equation*}
        \Vert \nabla_x X^{*}_1(x) \Vert_{2,2} \le \sqrt{\beta}, \quad \forall x \in \mathrm{supp}(\nu).
    \end{equation*}
\end{itemize}
\end{proposition}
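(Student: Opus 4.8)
The plan is to feed the one-sided eigenvalue bounds of Proposition~\ref{prop:vf-bd} into Lemma~\ref{lm:flow-map-Lip-bd}, after which the whole statement reduces to evaluating a single elementary integral that telescopes. First I would check the hypotheses of Lemma~\ref{lm:flow-map-Lip-bd}: under Assumptions~\ref{assump:well-defined} and \ref{assump:geom-prop}-(i), Theorem~\ref{thm:well-posed} gives that the GIF $(X_t)_{t\in[0,1]}$ is well posed, and the covariance representation in Lemma~\ref{lm:cond-cov} shows $v(t,\cdot)\in C^1(\sR^d;\sR^d)$, so each flow map $X_{s,t}$ is $C^1$ in $x$ by the standard smooth dependence of ODE solutions on initial data. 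Thus it only remains to supply a pointwise-in-$t$ bound $\nabla_x v(t,x)\preceq\theta_t\rmI_d$ on $[0,1]$ and integrate.

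For part~(i), since $\nu$ is $\kappa$-semi-log-concave with $\kappa>0$, Proposition~\ref{prop:vf-bd}-(c) applies with $t_0=0$, giving $\nabla_x v(t,x)\preceq\theta_t\rmI_d$ for $t\in(0,1]$ with $\theta_t=\frac{\kappa a_t\dot a_t+b_t\dot b_t}{\kappa a_t^2+b_t^2}$. The key observation is that $\kappa a_t\dot a_t+b_t\dot b_t=\tfrac12\frac{\diff}{\diff t}(\kappa a_t^2+b_t^2)$, so $\theta_t=\tfrac12\frac{\diff}{\diff t}\log(\kappa a_t^2+b_t^2)$; written this way, and using $a_t^2\in C^1([0,1])$, $b_t\in C^1([0,1])$, and $\kappa a_t^2+b_t^2>0$ on $[0,1]$, the function $\theta_t$ is continuous on $[0,1]$, and the bound extends to $[0,1]$ by continuity of $\nabla_x v$ and $\theta_t$ in $t$. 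Then, since $a_1=0,\ b_1=1$,
\[
\int_0^1\theta_u\,\diff u=\tfrac12\log\frac{\kappa a_1^2+b_1^2}{\kappa a_0^2+b_0^2}=-\tfrac12\log(\kappa a_0^2+b_0^2),
\]
and Lemma~\ref{lm:flow-map-Lip-bd} with $s=0,\ t=1$ yields $\Vert\nabla_x X_1(x)\Vert_{2,2}\le(\kappa a_0^2+b_0^2)^{-1/2}$; the stated special case is $a_0=1,b_0=0$.

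For part~(ii), by Corollary~\ref{cor:time-reve-flow} the time-reversed flow $X^{*}$ is well posed, solving $\frac{\diff X^{*}_t}{\diff t}=-v(1-t,X^{*}_t)$ with $X^{*}_0\sim\nu$ and $X^{*}_1=X_1^{-1}$; its velocity field $w(t,x):=-v(1-t,x)$ is $C^1$ in $x$. Since $\nu$ is $\beta$-semi-log-convex with $\beta>0$ and full support, Proposition~\ref{prop:vf-bd}-(b) gives $\nabla_x v(s,x)\succeq\frac{\beta a_s\dot a_s+b_s\dot b_s}{\beta a_s^2+b_s^2}\rmI_d$ for $s\in(0,1]$, hence $\nabla_x w(t,x)\preceq\tilde\theta_t\rmI_d$ with $\tilde\theta_t:=-\frac{\beta a_{1-t}\dot a_{1-t}+b_{1-t}\dot b_{1-t}}{\beta a_{1-t}^2+b_{1-t}^2}$, again continuous on $[0,1]$ by the same $a_t^2$-reformulation. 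Lemma~\ref{lm:flow-map-Lip-bd} gives $\Vert\nabla_x X^{*}_1(x)\Vert_{2,2}\le\exp(\int_0^1\tilde\theta_t\,\diff t)$, and the substitution $s=1-t$ together with $\frac{\beta a_s\dot a_s+b_s\dot b_s}{\beta a_s^2+b_s^2}=\tfrac12\frac{\diff}{\diff s}\log(\beta a_s^2+b_s^2)$ gives $\int_0^1\tilde\theta_t\,\diff t=\tfrac12\log(\beta a_0^2+b_0^2)$, whence $\Vert\nabla_x X^{*}_1(x)\Vert_{2,2}\le\sqrt{\beta a_0^2+b_0^2}$ for $x\in\mathrm{supp}(\nu)=\sR^d$; the special case is again $a_0=1,b_0=0$.

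The only genuine subtlety, and where I would spend the most care, is justifying that the bounds of Proposition~\ref{prop:vf-bd}-(b),(c), stated for $t$ in a half-open interval, extend to the closed interval $[0,1]$ that Lemma~\ref{lm:flow-map-Lip-bd} requires: near $t=1$ one cannot use $\dot a_1$ directly (e.g.\ the F\"ollmer interpolant has $\dot a_t\to-\infty$), so $\theta_t$ and $\tilde\theta_t$ must be phrased through $a_t^2\in C^1([0,1])$ rather than through $a_t$, and at the remaining boundary point one invokes continuity of $\nabla_x v$ and of the bounding function together with the boundary definition $v(0,\cdot)=\lim_{t\downarrow0}v(t,\cdot)$ (equivalently, one writes $X_1=X_{\eps,1}\circ X_{0,\eps}$ and lets $\eps\downarrow0$). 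Everything else is the bookkeeping displayed above.
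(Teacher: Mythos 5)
Your proposal is correct and follows precisely the approach the paper takes: the paper's own proof is the one-line statement that the result follows by combining Proposition~\ref{prop:vf-bd}-(b), (c) with Lemma~\ref{lm:flow-map-Lip-bd}, and you have carried out exactly that combination, including the telescoping evaluation $\theta_t = \tfrac12\tfrac{\diff}{\diff t}\log(\kappa a_t^2 + b_t^2)$ and the application of the reversed-flow bound via Proposition~\ref{prop:vf-bd}-(b). Your care in rewriting $a_t\dot a_t$ as $\tfrac12\tfrac{\diff}{\diff t}(a_t^2)$ to handle interpolants like F\"ollmer where $\dot a_t$ blows up at $t=1$, and in extending the open-interval bounds to $[0,1]$ by continuity, is a legitimate subtlety the paper leaves implicit.
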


\begin{proposition}[Gaussian mixtures] \label{prop:lip-map-mog}
Suppose that Assumptions \ref{assump:well-defined} and \ref{assump:geom-prop}-(iii) hold.
Then the flow map $X_1(x)$ is a Lipschitz mapping, that is,
\begin{equation*}
    \Vert \nabla_x X_1(x) \Vert_{2,2} \leq \frac{\sigma}{\sqrt{a_0^2 + \sigma^2 b_0^2}} \exp \left( \frac{a_0^2}{a_0^2 + \sigma^2 b_0^2} \cdot \frac{R^2}{2 \sigma^2} \right), \quad \forall x \in \sR^d.
\end{equation*}
In particular, if $a_0 = 1$ and $b_0 = 0$, then
\begin{equation*}
    \Vert \nabla_x X_1(x) \Vert_{2,2} \leq \sigma \exp \left( \frac{R^2}{2 \sigma^2} \right), \quad \forall x \in \sR^d.
\end{equation*}
Moreover, the time-reversed flow map $X^{*}_1(x)$ is a Lipschitz mapping, that is,
\begin{equation*}
    \Vert \nabla_x X^{*}_1(x) \Vert_{2,2} \le \sqrt{\sigma^{-2} a_0^2 + b_0^2}, \quad \forall x \in \mathrm{supp}(\nu).
\end{equation*}
In particular, if $a_0 = 1$ and $b_0 = 0$, then
\begin{equation*}
    \Vert \nabla_x X^{*}_1(x) \Vert_{2,2} \le \frac{1}{\sigma}, \quad \forall x \in \mathrm{supp}(\nu).
\end{equation*}
\end{proposition}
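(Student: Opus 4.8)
The plan is to invoke Lemma~\ref{lm:flow-map-Lip-bd} twice. Under Assumptions~\ref{assump:well-defined} and \ref{assump:geom-prop}-(iii), Theorem~\ref{thm:well-posed} gives a well-posed GIF $(X_t)_{t\in[0,1]}$ on the full interval with ${X_t}_{\#}\mu = \mathrm{Law}(a_t\mathsf{Z}+b_t\mathsf{X}_1)$, and Corollary~\ref{cor:time-reve-flow} gives the well-posed time-reversed flow $(X^*_t)_{t\in[0,1]}$ with $X^*_1 = X_1^{-1}$. Since $v(t,x)$ is $C^1$ in $x$ (Lemma~\ref{lm:cond-cov} and Proposition~\ref{prop:vf-bd}) and is defined on all of $[0,1]\times\sR^d$ here, the flow maps are $C^1$ in $x$ by the standard smooth-dependence theory used in Section~\ref{sec:well-posed}, so the hypotheses of Lemma~\ref{lm:flow-map-Lip-bd} hold. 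Finally, because $\nu = \gamma_{d,\sigma^2}*\rho$ is a Gaussian convolution we have $\mathrm{supp}(\nu)=\sR^d$, so the two domains appearing in the statement coincide.

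For the forward map $X_1 = X_{0,1}$, Corollary~\ref{cor:mog-bd} yields $\nabla_x v(t,x) \preceq \theta_t\rmI_d$ with
\[
\theta_t = \frac{\dot a_t a_t + \sigma^2\dot b_t b_t}{a_t^2+\sigma^2 b_t^2} + \frac{a_t b_t(a_t\dot b_t - \dot a_t b_t)}{(a_t^2+\sigma^2 b_t^2)^2}R^2 .
\]
The key observation is that each summand is a total time-derivative: the first equals $\tfrac12\tfrac{\diff}{\diff t}\log(a_t^2+\sigma^2 b_t^2)$, and, expanding the numerator, the second equals $\tfrac{R^2}{2}\tfrac{\diff}{\diff t}\!\left(\tfrac{b_t^2}{a_t^2+\sigma^2 b_t^2}\right)$. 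Integrating over $[0,1]$ and using $a_1=0$, $b_1=1$ gives
\[
\int_0^1\theta_u\,\diff u = \tfrac12\log\frac{\sigma^2}{a_0^2+\sigma^2 b_0^2} + \frac{a_0^2}{a_0^2+\sigma^2 b_0^2}\cdot\frac{R^2}{2\sigma^2},
\]
and Lemma~\ref{lm:flow-map-Lip-bd} gives $\Vert\nabla_x X_1(x)\Vert_{2,2}\le \exp(\int_0^1\theta_u\,\diff u)$, which is exactly the claimed constant; the case $a_0=1$, $b_0=0$ follows by substitution.

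For the time-reversed map, I would apply Lemma~\ref{lm:flow-map-Lip-bd} to the flow $(X^*_t)$ itself, whose velocity field is $\tilde v(t,x) = -v(1-t,x)$. By the lower bound in Proposition~\ref{prop:vf-bd}-(d), $\nabla_x\tilde v(t,x) = -\nabla_x v(1-t,x) \preceq -\underline\theta_{1-t}\rmI_d$ with $\underline\theta_t := \tfrac{\dot a_t a_t + \sigma^2\dot b_t b_t}{a_t^2+\sigma^2 b_t^2} = \tfrac12\tfrac{\diff}{\diff t}\log(a_t^2+\sigma^2 b_t^2)$. Hence $\int_0^1(-\underline\theta_{1-u})\,\diff u = -\int_0^1\underline\theta_s\,\diff s = -\tfrac12\log\tfrac{\sigma^2}{a_0^2+\sigma^2 b_0^2}$, and Lemma~\ref{lm:flow-map-Lip-bd} gives $\Vert\nabla_x X^*_1(x)\Vert_{2,2} \le \exp(-\tfrac12\log\tfrac{\sigma^2}{a_0^2+\sigma^2 b_0^2}) = \sqrt{\sigma^{-2}a_0^2 + b_0^2}$, again specializing immediately when $a_0=1$, $b_0=0$.

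The only delicate point is the bookkeeping just sketched: recognizing the summands of $\theta_t$ and of $\underline\theta_t$ as exact derivatives in $t$ is what collapses the integrals to closed form, and one should note that $a_t^2+\sigma^2 b_t^2$ is bounded away from $0$ on $[0,1]$ (it equals $a_0^2>0$ at $t=0$ and $\sigma^2>0$ at $t=1$), so $\theta_t$ is bounded and the almost-everywhere Jacobian estimates of Corollary~\ref{cor:mog-bd} and Proposition~\ref{prop:vf-bd}-(d) may legitimately be integrated. Everything else rests on the already-established well-posedness and regularity.
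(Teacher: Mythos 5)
Your proposal is correct and follows the same route as the paper's (very terse) proof, which simply says ``Combining Proposition~\ref{prop:vf-bd}-(d) and Lemma~\ref{lm:flow-map-Lip-bd}, we complete the proof.'' You have filled in exactly what that sentence suppresses: the recognition that both summands of $\theta_t$ (and likewise $\underline\theta_t$) are exact time-derivatives, the resulting closed-form value of $\int_0^1\theta_u\,\diff u$ using $a_1=0,\ b_1=1$, and the application of Lemma~\ref{lm:flow-map-Lip-bd} to the time-reversed flow with velocity field $-v(1-t,\cdot)$ and the lower bound of Proposition~\ref{prop:vf-bd}-(d).
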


\begin{remark}
    Well-posed GIFs produce diffeomorphisms that transport the source measure onto the target measure. The diffeomorphism property of the transport maps are relevant to the
    auto-encoding and cycle consistency properties of their generative modeling applications. We defer a detailed discussion to Section \ref{sec:app}.
\end{remark}

    Early stopping implicitly mollifies the target measure with a small Gaussian noise. For image generation tasks (with bounded pixel values), the mollified target measure is indeed a Gaussian mixture distribution considered in Theorem \ref{prop:lip-map-mog}. The regularity of the target measure largely gets enhanced through such mollification, especially when the target measure is supported on a low-dimensional manifold in accordance with the data manifold hypothesis. Therefore, although such a diffeomorphism $X_1(x)$ may not be well-defined for general bounded target measures, an off-the-shelf solution would be to perturb the target measure with a small Gaussian noise or to employ the early stopping technique. Both approaches will smooth the landscape of the target measure.

\begin{proposition} \label{prop:func-ineq}
    Suppose the target measure $\nu$ satisfies the log-Sobolev inequality with constant $C_{\mathrm{LS}}(\nu)$. Then the marginal distribution of the
    GIF $(p_t)_{t \in [0, 1]}$ satisfies the log-Sobolev inequality, and its log-Sobolev constant $C_{\mathrm{LS}}(p_t)$ is bounded as
    \begin{align*}
        C_{\mathrm{LS}}(p_t) \le a_t^2 + b_t^2 C_{\mathrm{LS}}(\nu).
    \end{align*}
    Moreover, suppose the target measure $\nu$ satisfies the Poincar{\'e} inequality with constant $C_{P}(\nu)$. Then the marginal distribution of the GIF $(p_t)_{t \in [0, 1]}$ satisfies the Poincar{\'e} inequality, and its Poincar{\'e} constant $C_{\mathrm{P}}(p_t)$ is bounded as
    \begin{align*}
        C_{\mathrm{P}}(p_t) \le a_t^2 + b_t^2 C_{\mathrm{P}}(\nu).
    \end{align*}
\end{proposition}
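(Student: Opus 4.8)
The plan is to exploit the fact that the marginal law $p_t$ of the Gaussian interpolation flow equals $\mathrm{Law}(a_t \mathsf{Z} + b_t \mathsf{X}_1)$ with $\mathsf{Z}\sim\gamma_d$ independent of $\mathsf{X}_1\sim\nu$, so that $p_t = \gamma_{d,a_t^2} * (b_t)_{\#}\nu$, i.e., $p_t$ is a Gaussian convolution of a scaled copy of $\nu$. Thus the statement reduces to two classical facts about functional inequalities under (i) scaling and (ii) convolution with a Gaussian, together with the tensorization/additivity of the log-Sobolev and Poincaré constants under independent sums.

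First I would recall the scaling behavior: if $\mathsf{X}_1$ satisfies LSI with constant $C_{\mathrm{LS}}(\nu)$, then $b_t\mathsf{X}_1$ satisfies LSI with constant $b_t^2 C_{\mathrm{LS}}(\nu)$ (the LSI constant scales like the square of a dilation), and likewise $a_t\mathsf{Z}$ satisfies LSI with constant $a_t^2 C_{\mathrm{LS}}(\gamma_d) = a_t^2$, since the standard Gaussian satisfies LSI with constant $1$. Next, I would invoke the additivity of the log-Sobolev constant under products of independent random variables: if $\mathsf{U}$ and $\mathsf{V}$ are independent and satisfy LSI with constants $C_{\mathsf{U}}$ and $C_{\mathsf{V}}$, then $(\mathsf{U},\mathsf{V})$ satisfies LSI with constant $\max\{C_{\mathsf{U}},C_{\mathsf{V}}\}$ — and more usefully here, the pushforward under the linear map $(\mathsf{u},\mathsf{v})\mapsto \mathsf{u}+\mathsf{v}$ (a $1$-Lipschitz map with respect to a suitable norm after rescaling, or directly via the sub-additivity of the Dirichlet form under sums) satisfies LSI with constant at most $C_{\mathsf{U}} + C_{\mathsf{V}}$. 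Applying this with $\mathsf{U} = a_t\mathsf{Z}$ and $\mathsf{V} = b_t\mathsf{X}_1$ gives $C_{\mathrm{LS}}(p_t) \le a_t^2 + b_t^2 C_{\mathrm{LS}}(\nu)$. The argument for the Poincaré inequality is identical, using that $\gamma_d$ satisfies Poincaré with constant $1$ and that the Poincaré constant of a sum of independent random variables is at most the sum of the individual Poincaré constants (variance of a sum of independent variables is the sum of variances, which feeds directly into the Dirichlet-form estimate).

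The one point requiring care — and the main obstacle — is getting the additivity constant right: the naive tensorization statement for LSI on a product gives $\max$, not sum, of the constants, so I must instead argue through the Dirichlet form directly. Concretely, for $f$ a nice function on $\sR^d$, write $\mathrm{Ent}_{p_t}(f^2)$ using the conditioning $p_t = \mathbb{E}_{\mathsf{Z}}[\,\cdot\,] $ applied to the conditional law of $b_t\mathsf{X}_1$ shifted by $a_t\mathsf{Z}$ (and vice versa), bound the entropy by a sum of the "horizontal" and "vertical" entropies via the subadditivity of entropy, apply the one-variable LSI in each slice with the scaled constants $a_t^2$ and $b_t^2 C_{\mathrm{LS}}(\nu)$, and recombine: the cross terms vanish because $\nabla f$ is common and the partial gradients sum to the full gradient $|\nabla_{\mathsf{z}} f|^2 + |\nabla_{\mathsf{x}_1} f|^2$ pulled back to $\sR^d$ gives exactly $|\nabla f|^2$ evaluated at $a_t\mathsf{z}+b_t\mathsf{x}_1$ after the chain rule absorbs the $a_t,b_t$ factors. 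This is a standard computation (e.g. along the lines of the proof that $\gamma_d * \rho$ inherits LSI with an additive constant when $\rho$ has compact support, used implicitly in Proposition~\ref{prop:vf-bd} and the surrounding literature), so I would state it cleanly and cite the relevant additivity lemma rather than reproduce the full Dirichlet-form manipulation. Finally, I would remark that the bounds degenerate gracefully: at $t=1$ we recover $C_{\mathrm{LS}}(p_1) \le C_{\mathrm{LS}}(\nu)$ since $a_1 = 0, b_1 = 1$, and at $t=0$ with $a_0=1,b_0=0$ we get $C_{\mathrm{LS}}(p_0)\le 1$, consistent with $p_0 = \gamma_d$.
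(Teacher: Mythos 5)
Your proposal matches the paper's proof in all essentials. The paper also reduces to the fact that $p_t = \mathrm{Law}(a_t\mathsf{Z}+b_t\mathsf{X}_1)$ is an independent sum and runs the standard Dirichlet-form argument (conditioning, scaled LSI for $\gamma_d$ in the inner integral, LSI for $\nu$ in the outer integral, Jensen plus Cauchy--Schwarz to recover $b_t^2\,\E\|\nabla f\|^2$, then $C_{\mathrm{LS}}(\gamma_d)\le 1$), citing precisely the Ball and Cattiaux--Guillin lemmas on additivity of LSI/Poincar\'e constants under independent convolution that you invoke; you correctly identify both the scaling, the need to bypass the $\max$-type tensorization, and the place where the chain rule produces the $a_t^2$ and $b_t^2$ weights.
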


    The log-Sobolev and Poincar{\'e} inequalities (see Definitions \ref{def:lsi} and \ref{def:pi}) are fundamental tools
    for establishing convergence guarantees for Langevin Monte Carlo algorithms. From an algorithmic viewpoint, the predictor-corrector algorithm in score-based diffusion models and the corresponding probability flow ODEs essentially combine the ODE numerical solver (performing as the predictor) and the overdamped Langevin diffusion (performing as the corrector) to simulate samples from the marginal distributions \citep{song2021scorebased}. Proposition \ref{prop:func-ineq} shows that the marginal distributions all satisfy the log-Sobolev and Poincar{\'e} inequalities under mild assumptions on the target distribution. This conclusion suggests that Langevin Monte Carlo algorithms are certified to have convergence guarantees for sampling from the marginal distributions of
     GIFs. Furthermore, the target distributions covered in Assumption \ref{assump:geom-prop} are
      shown to satisfy the log-Sobolev and Poincar{\'e} inequalities \citep{mikulincer2021brownian, dai2023lipschitz, fathi2023transportation}, which 
      suggests that the assumptions of Proposition \ref{prop:func-ineq} generally hold.

\section{Applications to generative modeling} \label{sec:app}

Auto-encoding is a primary principle in learning a latent representation with generative models \cite[Chapter 14]{goodfellow2016deep}. Meanwhile, the concept of cycle consistency
is important to unpaired image-to-image translation between the source and target domains \citep{zhu2017unpaired}. The recent work by \citet{su2023dual} propose the dual diffusion implicit bridges (DDIB) for image-to-image translation, which shows a strong pattern of exact auto-encoding and image-to-image translation. DDIBs are built upon the denoising diffusion implicit models (DDIM), which share the same probability flow ODE with VESDE (considered as VE interpolant in Table \ref{tab:vec-interp}), as pointed out by \cite[Proposition 1]{song2021denoising}.
First, DDIBs attain latent embeddings of source images encoded with one DDIM operating in the source domain. The encoding embeddings are then decoded using another DDIM trained in the target domain to construct target images. The whole process consisting of two DDIMs seems to be cycle consistent up to numerical errors. Several phenomena of auto-encoding and cycle consistency are observed in the unpaired data generation procedure with DDIBs.

We replicate the 2D experiments by \citet{su2023dual} in
Figures \ref{fig:auto-encoding} and \ref{fig:cycle-consistency}
to show the phenomena of approximate auto-encoding and cycle consistency of GIFs\footnote{The implementation is based on the GitHub repository at \url{https://github.com/suxuann/ddib}.}. To elucidate the empirical auto-encoding and cycle consistency for measure transport, we derive Corollaries \ref{cor:ae} and \ref{cor:cycle} below and analyze the transport maps defined by GIFs (covering the probability flow ODE of VESDE used by DDIBs). We consider the continuous-time framework and the population level, which precludes learning errors including the time discretization errors and velocity field estimation errors, and show that the transport maps naturally possess the exact auto-encoding and cycle consistency properties at the population level.

\begin{figure}[t!]
\centering
\includegraphics[width=0.9\textwidth]{./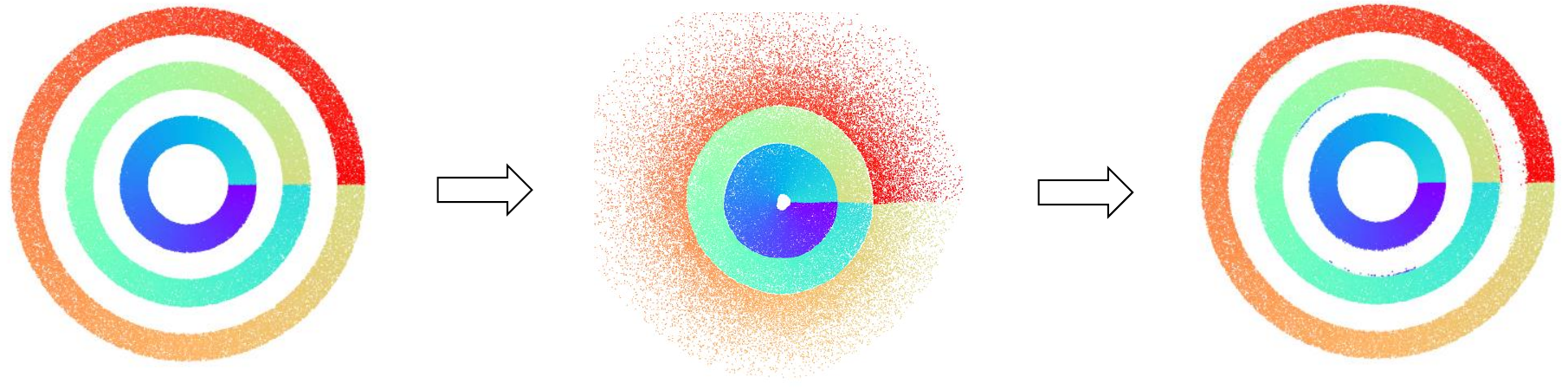}
\caption{An illustration of auto-encoding using DDIBs. The Concentric Rings data in the source domain (the first panel) is encoded into the latent domain (the second panel), and then decoded into the source domain (the third panel).
According to the consistent color pattern and pointwise correspondences across the domains, both the learned encoder mapping and the learned decoder mapping exhibit approximate Lipschitz continuity with respect to the space variable. One justification of such auto-encoding observation is presented in Corollary \ref{cor:ae} where we prove that the composition of the encoder map and the decoder map yields an identity map.}
\label{fig:auto-encoding}
\end{figure}

\begin{corollary}[Auto-encoding] \label{cor:ae}
Suppose Assumptions \ref{assump:well-defined} and \ref{assump:geom-prop}-(i), (iii), or (iv) hold for a target measure $\nu$.
The Gaussian interpolation flow $(X_t)_{t \in [0, 1]}$ and its time-reversed flow $(X_t^*)_{t \in [0, 1]}$ form an auto-encoder with a Lipschitz encoder $X_1^*(x)$ and a Lipschitz decoder $X_1(x)$. The auto-encoding property holds in the sense that
\begin{equation}
    \label{eq:identity-ae}
    X_1 \circ X_1^* = \rmI_d.
\end{equation}
\end{corollary}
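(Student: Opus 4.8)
The plan is to establish the auto-encoding identity \paref{eq:identity-ae} by combining the well-posedness and time-reversal results already proved, together with the Lipschitz bounds on the two flow maps. First I would invoke Theorem \ref{thm:well-posed}: under Assumptions \ref{assump:well-defined} and \ref{assump:geom-prop}-(i), (iii), or (iv), the IVP \paref{eq:ode-ivp} has a unique solution $(X_t)_{t \in [0,1]}$, so the flow map $X_1 := X_{0,1}$ is a well-defined $C^1$ diffeomorphism of $\sR^d$ pushing $\mu$ forward to $\nu$. Then Corollary \ref{cor:time-reve-flow} identifies the time-reversed flow $(X_t^*)_{t \in [0,1]}$ with initial law $\nu$ as the unique solution of \paref{eq:ode-ivp-reve}, and crucially states that the time-reversed flow map satisfies $X_t^*(x) = X_t^{-1}(x)$; in particular $X_1^* = X_1^{-1}$ as maps on $\sR^d$. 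Composing gives $X_1 \circ X_1^* = X_1 \circ X_1^{-1} = \rmI_d$ on $\sR^d$, which is exactly \paref{eq:identity-ae}.

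Next I would justify the Lipschitz claims on the encoder and decoder. The decoder $X_1(x)$ is Lipschitz: under Assumption \ref{assump:geom-prop}-(i) with $\kappa > 0$ this is Proposition \ref{prop:lip-map}-(i), and under Assumption \ref{assump:geom-prop}-(iii) it is the first part of Proposition \ref{prop:lip-map-mog}; for case (iv) one would use Corollary \ref{cor:log-lip} to get a uniform bound $\nabla_x v(t,x) \preceq \theta_t \rmI_d$ with $\int_0^1 \theta_u \, \diff u < \infty$, and then apply Lemma \ref{lm:flow-map-Lip-bd} to conclude $\Vert \nabla_x X_1(x) \Vert_{2,2} \le \exp(\int_0^1 \theta_u \, \diff u) < \infty$. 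The encoder $X_1^*(x)$ is Lipschitz on $\mathrm{supp}(\nu)$: under (i) with $\beta$-semi-log-convexity this is Proposition \ref{prop:lip-map}-(ii), under (iii) it is the last part of Proposition \ref{prop:lip-map-mog}, and under (iv) it again follows from Lemma \ref{lm:flow-map-Lip-bd} applied to the reversed velocity field $-v(1-t, \cdot)$, whose Jacobian is lower-bounded using the $\beta$-semi-log-convexity in Proposition \ref{prop:vf-bd}-(b) so that $\nabla_x(-v(1-t,x)) \preceq \tilde\theta_t \rmI_d$ with $\int_0^1 \tilde\theta_u \, \diff u < \infty$. This shows that the pair $(X_1^*, X_1)$ is genuinely a Lipschitz auto-encoder.

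I expect the main obstacle to be the verification that the hypotheses of Lemma \ref{lm:flow-map-Lip-bd} are actually met in case (iv), specifically that the uniform upper bound $\theta_t$ on $\lambda_{\max}(\nabla_x v(t,x))$ from Corollary \ref{cor:log-lip} is integrable over $[0,1]$ near $t = 1$ --- the bounds in Proposition \ref{prop:vf-bd}-(e) and Corollary \ref{cor:log-lip} switch to the $\kappa$-based expression $(\kappa a_t \dot a_t + b_t \dot b_t)/(\kappa a_t^2 + b_t^2)$ on $[t_2, 1]$ precisely to stay bounded at $t = 1$, and one must check this quantity and $B_t$ are integrable, using $a_1 = 0$, $b_1 = 1$, and the regularity $a_t^2 \in C^1([0,1])$, $b_t \in C^1([0,1])$ from \paref{eq:abt}. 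A secondary point is being careful about domains: $X_1$ is a diffeomorphism of all of $\sR^d$ while the Lipschitz estimate for $X_1^*$ is only claimed on $\mathrm{supp}(\nu)$, so the composition identity $X_1 \circ X_1^* = \rmI_d$ must be read as holding on $\mathrm{supp}(\nu)$ (equivalently on the range of the encoder), which is consistent since $X_1^*$ has initial law $\nu$. The rest is a direct assembly of cited results and requires no new computation.
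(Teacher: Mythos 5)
Your argument for the identity \paref{eq:identity-ae} is exactly the paper's proof: invoke Theorem \ref{thm:well-posed} and Corollary \ref{cor:time-reve-flow} to get $X_1^* = X_1^{-1}$ and compose. The additional verification of the Lipschitz claims (Propositions \ref{prop:lip-map}, \ref{prop:lip-map-mog} for cases (i), (iii), and Corollary \ref{cor:log-lip} plus Lemma \ref{lm:flow-map-Lip-bd} for case (iv)) is not spelled out in the paper's one-line proof but matches what the paper does in Corollary \ref{cor:prepare-bd}, so this is essentially the same approach, just with due diligence made explicit.
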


\begin{corollary}[Cycle consistency] \label{cor:cycle}
Suppose Assumptions \ref{assump:well-defined} and \ref{assump:geom-prop}-(i), (iii), or (iv) hold for the target measures $\nu_1$ and $\nu_2$.
For the target measure $\nu_1$, we define the Gaussian interpolation flow $(X_{1, t})_{t \in [0, 1]}$ and its time-reversed flow $(X_{1, t}^*)_{t \in [0, 1]}$.
We also define the Gaussian interpolation flow $(X_{2, t})_{t \in [0, 1]}$ and its time-reversed flow $(X_{2, t}^*)_{t \in [0, 1]}$ for the target measure $\nu_2$ using the same $a_t$ and $b_t$.
Then the transport maps $X_{1, 1}(x)$, $X_{1, 1}^*(x)$, $X_{2, 1}(x)$, and $X_{2, 1}^*(x)$ are Lipschitz continuous in the space variable $x$. Furthermore, the cycle consistency property holds in the sense that
\begin{equation}
    \label{eq:identity-cycle}
    X_{1, 1} \circ X_{2, 1}^* \circ X_{2, 1} \circ X_{1, 1}^* = \rmI_d.
\end{equation}
\end{corollary}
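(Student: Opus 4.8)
The plan is to deduce the cycle consistency identity~\paref{eq:identity-cycle} almost entirely from results already established: the well-posedness of GIFs (Theorem~\ref{thm:well-posed}), the identification of the time-reversed flow map with the inverse flow map (Corollary~\ref{cor:time-reve-flow}), the Lipschitz estimates on flow maps (Propositions~\ref{prop:lip-map} and~\ref{prop:lip-map-mog}, and Corollary~\ref{cor:log-lip} fed into Lemma~\ref{lm:flow-map-Lip-bd} in the remaining scenario), and the auto-encoding identity of Corollary~\ref{cor:ae}. The only genuinely new content is bookkeeping: checking that the four maps $X_{1,1}$, $X_{1,1}^*$, $X_{2,1}$, $X_{2,1}^*$ are diffeomorphisms of the \emph{whole} space $\sR^d$, so that the fourfold composition in~\paref{eq:identity-cycle} is a well-defined self-map of $\sR^d$, and then collapsing it using the invertibility relations.

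\textbf{Step 1 (Lipschitz diffeomorphisms on $\sR^d$).} Fix $i \in \{1, 2\}$. Under Assumption~\ref{assump:geom-prop}-(i), (iii), or (iv) one has $\mathrm{supp}(\nu_i) = \sR^d$ in every case (for (iii) because a Gaussian convolution spreads mass over all of $\sR^d$), and the associated source measure $\mu_i = \mathrm{Law}(a_0 \mathsf{Z} + b_0 \mathsf{X}_1)$ is absolutely continuous with full support because $a_0 > 0$. Theorem~\ref{thm:well-posed} then yields a unique globally defined flow $(X_{i,t})_{t \in [0,1]}$ and hence a flow map $X_{i,1}: \sR^d \to \sR^d$ that is $C^1$ in $x$; Corollary~\ref{cor:time-reve-flow} shows the time-reversed flow is well-posed with $X_{i,1}^* = X_{i,1}^{-1}$, so both $X_{i,1}$ and $X_{i,1}^*$ are $C^1$ diffeomorphisms of $\sR^d$. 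Their Lipschitz continuity is exactly what Corollary~\ref{cor:ae} records (Lipschitz encoder $X_{i,1}^*$ and Lipschitz decoder $X_{i,1}$), which rests on Propositions~\ref{prop:lip-map}/\ref{prop:lip-map-mog} for scenarios (i)/(iii) and on the maximal-eigenvalue bound of Corollary~\ref{cor:log-lip} combined with Lemma~\ref{lm:flow-map-Lip-bd} (together with the matching estimate for the reversed velocity field) for scenario (iv). This proves the first assertion, that $X_{1,1}$, $X_{1,1}^*$, $X_{2,1}$, $X_{2,1}^*$ are all Lipschitz on $\sR^d$.

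\textbf{Step 2 (collapsing the composition).} Because the two flows share the coefficients $a_t, b_t$, the reversed map $X_{1,1}^*$ sends $\nu_1$ onto the common latent law $\mathrm{Law}(a_0 \mathsf{Z} + b_0 \mathsf{X}_1)$ on $\sR^d$, which is precisely the source measure of the flow generating $\nu_2$; thus $X_{2,1} \circ X_{1,1}^*$ is a bona fide transport map from $\nu_1$ to $\nu_2$, and every composition appearing in~\paref{eq:identity-cycle} is a well-defined self-map of $\sR^d$. Reading~\paref{eq:identity-cycle} from the inside out: Corollary~\ref{cor:time-reve-flow} applied to $\nu_2$ gives $X_{2,1}^* = X_{2,1}^{-1}$, hence $X_{2,1}^* \circ X_{2,1} = \rmI_d$ on $\sR^d$, so that
\begin{equation*}
X_{1, 1} \circ X_{2, 1}^* \circ X_{2, 1} \circ X_{1, 1}^* = X_{1,1} \circ X_{1,1}^*.
\end{equation*}
Finally $X_{1,1} \circ X_{1,1}^* = \rmI_d$ by Corollary~\ref{cor:ae} (equivalently, by $X_{1,1}^* = X_{1,1}^{-1}$ from Corollary~\ref{cor:time-reve-flow} applied to $\nu_1$), which is~\paref{eq:identity-cycle}.

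\textbf{Main obstacle.} There is no deep analytic difficulty; the substantive work has been front-loaded into the well-posedness and Lipschitz-map machinery of Sections~\ref{sec:spatial-lip} and~\ref{sec:well-posed}. The points needing care are: (a) verifying $\mathrm{supp}(\nu_i) = \sR^d$ in each scenario so the flow maps are genuine diffeomorphisms of $\sR^d$ and the compositions type-check; (b) making sure scenario (iv) is actually covered, since no single proposition states the Lipschitz bound there and one must assemble it from Corollary~\ref{cor:log-lip}, Lemma~\ref{lm:flow-map-Lip-bd}, and the corresponding bound for $-v(1-t, \cdot)$; and (c) observing that the shared $a_t, b_t$ is what upgrades the intermediate composition from a formal composition of maps on $\sR^d$ to an honest $\nu_1$-to-$\nu_2$ transport, which is the statement's intended content.
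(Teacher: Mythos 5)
Your proof is correct and follows essentially the same approach as the paper, which likewise reduces \paref{eq:identity-cycle} to the inverse relations $X_{i,1}^{*} = X_{i,1}^{-1}$ furnished by Theorem~\ref{thm:well-posed} and Corollary~\ref{cor:time-reve-flow} and collapses the fourfold composition from the inside out. Your Step~1 simply makes explicit the Lipschitz-diffeomorphism, support-matching, and scenario-(iv) bookkeeping (Corollary~\ref{cor:log-lip} fed into Lemma~\ref{lm:flow-map-Lip-bd}) that the paper's one-line proof leaves implicit.
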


Corollaries \ref{cor:ae} and \ref{cor:cycle} show that the auto-encoding and cycle consistency properties hold for the flows at the population level. These results provide insights to the approximate auto-encoding and cycle consistency properties at the sample level.

\begin{figure}[t!]
\centering
\includegraphics[width=0.9\textwidth]{./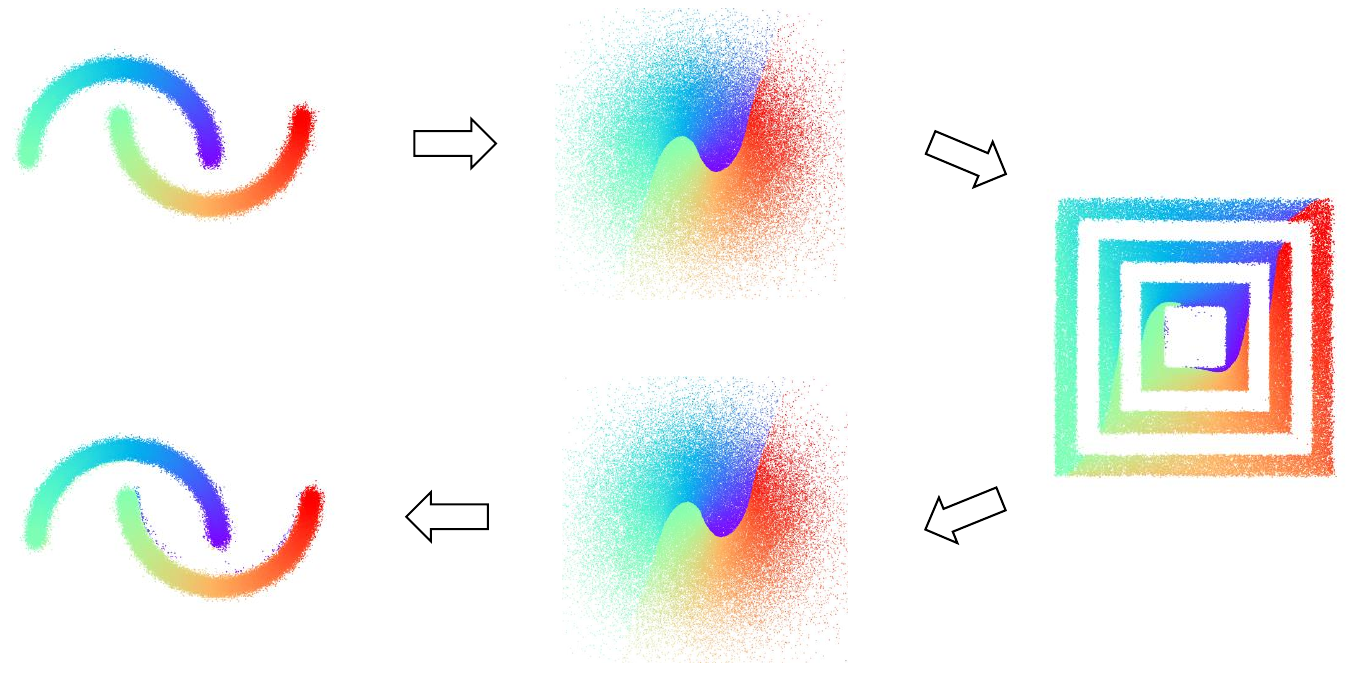}
\caption{An illustration of cycle consistency using DDIBs. The cycle consistency property is manifested through the consistency of color patterns across the transformations. We transform the Moons data in the source domain onto the Concentric Squares data in the target domain, and then complete the cycle by mapping the target data back to the source domain. The latent spaces play a central role in the bidirectional translation. We provide a proof in Corollary \ref{cor:cycle} accounting for the cycle consistency property.}
\label{fig:cycle-consistency}
\end{figure}

There are several types of errors introduced in the training of
GIFs.
On the one hand, the approximation in specifying source measures would exert influence on modeling the distribution.
On the other hand, the approximation in the velocity field
also affects
the distribution learning error.
We
use the stability analysis method in the differential equations theory to address the potential effects of these
errors.

\begin{corollary} \label{cor:prepare-bd}
Suppose Assumptions \ref{assump:well-defined} and \ref{assump:geom-prop}-(i), (iii), or (iv) hold.
It holds that
\begin{align*}
    C_1 := \sup_{x \in \sR^d} \Vert \nabla_x X_1(x) \Vert_{2,2} < \infty, \quad
    C_2 := \sup_{(t, x) \in [0, 1] \times \sR^d} \Vert \nabla_x v(t, x) \Vert_{2,2} < \infty.
\end{align*}
\end{corollary}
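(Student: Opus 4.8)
The plan is to treat $C_1$ and $C_2$ separately, in each case reducing the claim to the spatial eigenvalue bounds on $\nabla_x v(t,x)$ collected in Section~\ref{sec:spatial-lip} together with the flow-map estimate of Lemma~\ref{lm:flow-map-Lip-bd}. The one thing that needs care is that every $t$-dependent bounding function that appears extends to a continuous function on the \emph{compact} interval $[0,1]$, and is therefore bounded there.

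\textbf{The bound for $C_2$.} Since $v(t,\cdot)$ is a gradient field by Eq.~\paref{eq:gene-vf-interpolation-rmk}, the Jacobian $\nabla_x v(t,x)$ is symmetric, so $\Vert\nabla_x v(t,x)\Vert_{2,2}=\max\{\lambda_{\max}(\nabla_x v(t,x)),\,-\lambda_{\min}(\nabla_x v(t,x))\}$, and it suffices to bound $\lambda_{\max}$ from above and $\lambda_{\min}$ from below uniformly over $(t,x)\in(0,1)\times\sR^d$, the endpoint values at $t\in\{0,1\}$ being limits from $(0,1)$ and hence controlled by the same bounds. For the upper bound I would invoke, under Assumption~\ref{assump:geom-prop}-(i), -(iii), -(iv) respectively, Corollary~\ref{cor:pos-kappa-bd} (here $\mathrm{supp}(\nu)=\sR^d$, so $D=\infty$ and part (a) applies), Corollary~\ref{cor:mog-bd}, and Corollary~\ref{cor:log-lip}, obtaining $\lambda_{\max}(\nabla_x v(t,x))\le\theta_t$ in each case. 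For the lower bound I would use Proposition~\ref{prop:vf-bd}-(b) under -(i) and -(iv) (both of which assume $\beta$-semi-log-convexity) and the lower bound of Proposition~\ref{prop:vf-bd}-(d) under -(iii). In every instance the bounding function of $t$ is continuous on $[0,1]$: writing $a_t\dot a_t=\tfrac12(a_t^2)'$ and $b_t\dot b_t=\tfrac12(b_t^2)'$, these are continuous on $[0,1]$ by \paref{eq:abt}; the denominators $\kappa a_t^2+b_t^2$, $\beta a_t^2+b_t^2$, $a_t^2+\sigma^2 b_t^2$, $a_t^2+b_t^2$ are continuous and strictly positive, hence bounded below by compactness; and for the piecewise $\theta_t$ of Corollary~\ref{cor:log-lip} the $B_t$-branch lives on $[0,t_2]$ with $t_2<1$, so its only blow-up (at $t=1$) is excluded and the apparent singularity of $\dot b_t/b_t$ at $t=0$ cancels against the factor $b_t$ inside $B_t$, while the $\kappa$-branch is continuous on $[t_2,1]\subset(t_0,1]$. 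Compactness then gives $C_2<\infty$.

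\textbf{The bound for $C_1$.} By Theorem~\ref{thm:well-posed} the IVP~\paref{eq:ode-ivp} is well-posed under any of -(i), -(iii), -(iv), and the flow map $X_1$ is $C^1$ in $x$, so Lemma~\ref{lm:flow-map-Lip-bd} applies with the $\theta_t$ above and gives $\sup_{x}\Vert\nabla_x X_1(x)\Vert_{2,2}\le\exp\!\bigl(\int_0^1\theta_u\,\diff u\bigr)$; since $\theta_u$ was just shown bounded on $[0,1]$ the integral is finite, so $C_1<\infty$. Alternatively, under -(i) and -(iii) one may simply quote the explicit finite bounds of Proposition~\ref{prop:lip-map}-(i) and Proposition~\ref{prop:lip-map-mog}.

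\textbf{Main obstacle.} The delicate part is case -(iv) together with the endpoint behavior: the bounds of Proposition~\ref{prop:vf-bd}-(a) and -(e) diverge as $t\uparrow1$, so they cannot be used over all of $[0,1]$, and one must splice in the semi-log-convex/semi-log-concave bounds -(b)/-(c) on a neighborhood of $t=1$ — precisely the construction already packaged in Corollary~\ref{cor:log-lip} — and then verify that the spliced $\theta_t$ has no hidden singularity at the junction $t_2$ or at $t=0$. Once boundedness of $\theta_t$ on $[0,1]$ is in hand, both $C_1<\infty$ and $C_2<\infty$ follow immediately.
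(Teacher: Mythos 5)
Your proof is correct and takes essentially the same route as the paper: for $C_1$ the paper cites Propositions \ref{prop:lip-map} and \ref{prop:lip-map-mog} for cases (i), (iii) and combines Corollary \ref{cor:log-lip} with Lemma \ref{lm:flow-map-Lip-bd} for case (iv), and for $C_2$ it invokes Proposition \ref{prop:vf-bd} directly. Your write-up is more detailed than the paper's — in particular the observation that $\nabla_x v$ is symmetric so both upper and lower eigenvalue bounds are needed, and the endpoint checks on $\theta_t$ (the cancellation of $\dot b_t/b_t$ against the factor $b_t$ in $B_t$ near $t=0$, and the splice at $t_2$) — but these details make explicit exactly what the paper's terse proof leaves implicit, rather than constituting a different argument.
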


\begin{proposition}[Stability in the source distribution] \label{prop:stab-iv}
Suppose Assumptions \ref{assump:well-defined} and \ref{assump:geom-prop}-(i), (iii), or (iv) hold.
If the source measure $\mu = \mathrm{Law}(a_0 \mathsf{Z} + b_0 \mathsf{X}_1)$ is replaced with the Gaussian measure $\gamma_{d, a_0^2}$,
then the stability of the transport map $X_1$ is guaranteed by the $W_2$ distance between the push-forward measure ${X_1}_{\#} \gamma_{d, a_0^2}$ and the target measure $\nu = \mathrm{Law}(\mathsf{X}_1)$ as follows
\begin{equation}
    \label{eq:w2-bd}
    W_2({X_1}_{\#} \gamma_{d, a_0^2}, \nu) \le C_1 b_0 \sqrt{\E_{\nu} [\Vert \mathsf{X_1} \Vert^2]} \exp (C_2 d).
\end{equation}
\end{proposition}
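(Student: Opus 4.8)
The plan is to write $\nu={X_1}_{\#}\mu$ and exploit that the flow map $X_1$ is globally Lipschitz, so that replacing the source $\mu$ by $\gamma_{d,a_0^2}$ only perturbs the pushforward by (Lipschitz constant)$\,\times\,W_2(\gamma_{d,a_0^2},\mu)$, and the latter distance is controlled because $\mu$ and $\gamma_{d,a_0^2}$ differ only through the $b_0\mathsf{X}_1$ contribution in Definition~\ref{def:measure-interp}. First I would check that the initial value problem \paref{eq:ode-ivp} remains meaningful with source $\gamma_{d,a_0^2}$ in place of $\mu$: under Assumptions~\ref{assump:well-defined} and \ref{assump:geom-prop}-(i), (iii), or (iv), Proposition~\ref{prop:vf-bd} shows $v(t,\cdot)$ is Lipschitz on all of $\sR^d$ with a constant that is integrable in $t$, and $v$ is continuous in $t$, so by the Cauchy--Lipschitz theorem the characteristic $t\mapsto X_t(x)$ exists and is unique for \emph{every} $x\in\sR^d$; hence $X_1:\sR^d\to\sR^d$ is a well-defined Borel map and ${X_1}_{\#}\gamma_{d,a_0^2}$ is well-defined. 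By Theorem~\ref{thm:well-posed}, ${X_1}_{\#}\mu=\mathrm{Law}(\mathsf{X}_1)=\nu$, and by Corollary~\ref{cor:prepare-bd} (via Propositions~\ref{prop:lip-map} and \ref{prop:lip-map-mog}) we have $\Vert\nabla_x X_1(x)\Vert_{2,2}\le C_1$ for all $x\in\sR^d$, i.e. $X_1$ is $C_1$-Lipschitz on $\sR^d$.

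Next I would transport a coupling of the two sources through $X_1$. For any coupling $\pi$ of $\gamma_{d,a_0^2}$ and $\mu$, the image measure $(X_1\times X_1)_{\#}\pi$ is a coupling of ${X_1}_{\#}\gamma_{d,a_0^2}$ and $\nu$, so
\begin{align*}
W_2^2\big({X_1}_{\#}\gamma_{d,a_0^2},\nu\big)\le\int\Vert X_1(x)-X_1(y)\Vert^2\,\diff\pi(x,y)\le C_1^2\int\Vert x-y\Vert^2\,\diff\pi(x,y),
\end{align*}
and minimizing over $\pi$ gives $W_2({X_1}_{\#}\gamma_{d,a_0^2},\nu)\le C_1\,W_2(\gamma_{d,a_0^2},\mu)$. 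To bound the remaining factor I would use the explicit coupling furnished by Definition~\ref{def:measure-interp}: with $\mathsf{Z}\sim\gamma_d$ and $\mathsf{X}_1\sim\nu$ independent, pair $a_0\mathsf{Z}\sim\gamma_{d,a_0^2}$ with $a_0\mathsf{Z}+b_0\mathsf{X}_1\sim\mu$, yielding
\begin{align*}
W_2^2\big(\gamma_{d,a_0^2},\mu\big)\le\E\big[\Vert a_0\mathsf{Z}-(a_0\mathsf{Z}+b_0\mathsf{X}_1)\Vert^2\big]=b_0^2\,\E_\nu\big[\Vert\mathsf{X}_1\Vert^2\big],
\end{align*}
which is finite by Assumption~\ref{assump:well-defined}. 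Combining the two displays gives $W_2({X_1}_{\#}\gamma_{d,a_0^2},\nu)\le C_1 b_0\sqrt{\E_\nu[\Vert\mathsf{X}_1\Vert^2]}$; since $C_2\ge 0$ and $d\ge 1$ we have $\exp(C_2 d)\ge 1$, so the asserted bound \paref{eq:w2-bd} follows. If one prefers to stay within the stability theory for ODEs alluded to in the text (rather than invoking the sharp constant $C_1$), one can instead Gr\"onwall the estimate $\tfrac{\diff}{\diff t}\Vert X_t(x)-X_t(y)\Vert^2=2\langle X_t(x)-X_t(y),\,v(t,X_t(x))-v(t,X_t(y))\rangle\le 2C_2\Vert X_t(x)-X_t(y)\Vert^2$ to get $\Vert X_1(x)-X_1(y)\Vert\le e^{C_2}\Vert x-y\Vert$ and run the same coupling argument; this produces a bound of the stated form as well.

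There is no deep obstacle once the earlier results are in place; the two points that genuinely need care are that the modified IVP is well-posed (so that ${X_1}_{\#}\gamma_{d,a_0^2}$ makes sense and $X_1$ is the same deterministic map as in Definition~\ref{def:gif}), and that one uses a Lipschitz bound for $X_1$ valid on all of $\sR^d$ rather than only on $\mathrm{supp}(\mu)$, since $\gamma_{d,a_0^2}$ has full support — both of which are supplied by Proposition~\ref{prop:vf-bd} together with Propositions~\ref{prop:lip-map} and \ref{prop:lip-map-mog}.
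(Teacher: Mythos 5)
Your proof is correct, and it is genuinely different from, and in fact cleaner and sharper than, the paper's argument. The paper introduces a time-dependent quantity $I_t := \int \Vert x_0 - a_0 z \Vert^2 \,\diff\pi_t(X_t(x_0), X_t(a_0 z))$, differentiates it in $t$ using the instantaneous change-of-variables formula of Corollary~\ref{cor:change-var} (which brings in the divergence term $\Tr(\nabla_x v)$), bounds $\Vert \Tr(\nabla_x v)\Vert \le d\,\Vert\nabla_x v\Vert_{2,2} \le C_2 d$, and applies Gr\"onwall to get $I_1 \le I_0 \exp(2 C_2 d)$ before combining with the $C_1$-Lipschitz bound on $X_1$. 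That Gr\"onwall detour is what produces the dimension-dependent factor $\exp(C_2 d)$ in \paref{eq:w2-bd}. You bypass it entirely: once $X_1$ is $C_1$-Lipschitz on all of $\sR^d$ (from Corollary~\ref{cor:prepare-bd}) and ${X_1}_{\#}\mu = \nu$ (from Theorem~\ref{thm:well-posed}), the contraction of $W_2$ under a Lipschitz pushforward gives $W_2({X_1}_{\#}\gamma_{d,a_0^2},\nu) \le C_1\, W_2(\gamma_{d,a_0^2},\mu)$, and the synchronous coupling $(a_0\mathsf{Z},\, a_0\mathsf{Z}+b_0\mathsf{X}_1)$ gives $W_2(\gamma_{d,a_0^2},\mu) \le b_0 \sqrt{\E_\nu[\Vert\mathsf{X}_1\Vert^2]}$ directly. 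This yields $W_2 \le C_1 b_0 \sqrt{\E_\nu[\Vert\mathsf{X}_1\Vert^2]}$, which is strictly stronger than \paref{eq:w2-bd} since $\exp(C_2 d)\ge 1$; you correctly note this. Your argument is also more transparent on one delicate point: because $\gamma_{d,a_0^2}$ has full support, one needs the Lipschitz bound on $X_1$ to hold on all of $\sR^d$ and not just on $\mathrm{supp}(\mu)$, and you justify this via the global well-posedness of the characteristics. Both approaches are valid, but yours removes an unnecessary exponential-in-$d$ factor, so the statement as written is not tight and your derivation improves on it.
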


    The stability analysis in Proposition \ref{prop:stab-iv} provides insights
    into the selection of source measures for learning probability flow ODEs and GIFs.
    The error bound \paref{eq:w2-bd} demonstrates that when the signal intensity is reasonably small in the source measure, that is, $b_0 \ll 1$, the distribution estimation error, induced by the approximation with a Gaussian source measure, is small as well in the sense of the quadratic Wasserstein distance.
    Using a Gaussian source measure to replace the true convolution source measure is a common approximation method for learning probability flow ODEs and
    GIFs. Our analysis shows this replacement is
    reasonable for the purpose of distribution estimation.

The Alekseev-Gr{\"o}bner formula and its stochastic variants \citep{moral2022backward} have been shown effective in quantifying the stability of well-posed ODE and SDE flows against perturbations of its velocity field or drift \citep{bortoli2022convergence, benton2023error}. We state these results below for convenience.

\begin{lemma} \cite[Theorem 14.5]{hairer1993classical} \label{lm:ag-formula}
Let $(X_t)_{t \in [0, 1]}$ and $(Y_t)_{t \in [0, 1]}$ solve the following IVPs, respectively
\begin{align*}
    \frac{\diff X_t}{\diff t} &= v(t, X_t), \quad X_0 = x_0, \quad t \in [0, 1], \\
    \frac{\diff Y_t}{\diff t} &= \tilde{v}(t, Y_t), \quad~ Y_0 = x_0, \quad t \in [0, 1],
\end{align*}
where $v(t, x): [0, 1] \times \sR^d \to \sR^d$ and $\tilde{v}(t, x): [0, 1] \times \sR^d \to \sR^d$ are the velocity fields.
\begin{itemize}
\item[(i)] Suppose that $v$ is of class $C^1$ in $x$. Then the Alekseev-Gr{\"o}bner formula for the difference $X_t(x_0) - Y_t(x_0)$ is given by
\begin{equation}
    \label{eq:perb-vf-x}
    X_t(x_0) - Y_t(x_0) = \int_0^t (\nabla_x X_{s,t}) (Y_s(x_0))^{\top} \left( v(s, Y_s(x_0)) - \tilde{v}(s, Y_s(x_0)) \right) \diff s
\end{equation}
where $\nabla_x X_{s,t}(x)$ satisfies the variational equation
\begin{equation}
    \label{eq:vari-eq-x}
    \partial_t (\nabla_x X_{s,t}(x)) = (\nabla_x v) (t, X_{s,t}(x)) \nabla_x X_{s,t}(x), \quad \nabla_x X_{s,s}(x) = \rmI_d.
\end{equation}
\item[(ii)] Suppose that $\tilde{v}$ is of class $C^1$ in $x$. Then the Alekseev-Gr{\"o}bner formula for the difference $Y_t(x_0) - X_t(x_0)$ is given by
\begin{equation}
    \label{eq:perb-vf-y}
    Y_t(x_0) - X_t(x_0) = \int_0^t (\nabla_x Y_{s,t}) (X_s(x_0))^{\top} \left( \tilde{v}(s, X_s(x_0)) - v(s, X_s(x_0)) \right) \diff s
\end{equation}
where $\nabla_x Y_{s,t}(x)$ satisfies the variational equation
\begin{equation}
    \label{eq:vari-eq-y}
    \partial_t (\nabla_x Y_{s,t}(x)) = (\nabla_x \tilde{v}) (t, Y_{s,t}(x)) \nabla_x Y_{s,t}(x), \quad \nabla_x Y_{s,s}(x) = \rmI_d.
\end{equation}
\end{itemize}
\end{lemma}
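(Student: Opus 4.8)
The plan is to give the classical proof of the nonlinear variation-of-constants (Gr\"obner--Alekseev) formula, carrying out part (i) in detail; part (ii) is identical after interchanging the roles of $v$ and $\tilde v$, equivalently of $X$ and $Y$. Throughout, write $X_{s,t}(\cdot)$ for the flow map of the first ODE: $X_{s,t}(y)$ is the value at time $t$ of the solution of $\dot z(u) = v(u, z(u))$ with $z(s) = y$, so that $X_{0,t} = X_t$ and $X_{t,t} = \mathrm{id}$. Since $v$ is continuous in $t$ and of class $C^1$ in $x$, the initial-value problem is well posed on $[0,1]$ and the classical theory of smooth dependence on initial data gives that $(s, y) \mapsto X_{s,t}(y)$ is continuous and $C^1$ in $y$, with spatial Jacobian $\nabla_x X_{s,t}(y)$ solving the variational equation \paref{eq:vari-eq-x}; this last point is obtained by differentiating the forward flow equation $\partial_t X_{s,t}(x) = v(t, X_{s,t}(x))$ with respect to $x$ and using $\nabla_x X_{s,s}(x) = \rmI_d$.

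The key preliminary identity is the backward flow equation $\partial_s X_{s,t}(y) = -\,\nabla_x X_{s,t}(y)\, v(s, y)$ (up to the Jacobian/transpose convention adopted in the statement). I would derive it from the semigroup relation $X_{s,t}(y) = X_{s+h, t}\big(X_{s, s+h}(y)\big)$ together with the first-order expansion $X_{s, s+h}(y) = y + h\, v(s, y) + o(h)$: inserting the expansion, differentiating the composition in $h$ at $h = 0$, and applying the chain rule yields the identity, and as a byproduct its right-hand side is continuous, so $X_{s,t}(y)$ is in fact jointly $C^1$ in $(s, y)$.

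Now fix $x_0$ and $t \in [0,1]$, and define $\phi(s) := X_{s,t}\big(Y_s(x_0)\big)$ for $s \in [0, t]$, where $s \mapsto Y_s(x_0)$ is the $C^1$ solution of $\dot Y_s = \tilde v(s, Y_s)$. The endpoints are $\phi(0) = X_{0,t}(x_0) = X_t(x_0)$ and $\phi(t) = X_{t,t}\big(Y_t(x_0)\big) = Y_t(x_0)$. Since $\phi$ is a composition of the jointly $C^1$ flow map with a $C^1$ curve, it is $C^1$, and the chain rule together with the forward and backward identities gives
\[
\phi'(s) = \partial_s X_{s,t}(y)\big|_{y = Y_s(x_0)} + \nabla_x X_{s,t}\big(Y_s(x_0)\big)\, \dot Y_s(x_0) = \nabla_x X_{s,t}\big(Y_s(x_0)\big)\big(\tilde v(s, Y_s(x_0)) - v(s, Y_s(x_0))\big).
\]
Integrating over $[0, t]$ and using the fundamental theorem of calculus, $Y_t(x_0) - X_t(x_0) = \phi(t) - \phi(0) = \int_0^t \phi'(s)\, \diff s$, which rearranges to exactly \paref{eq:perb-vf-x} once the transpose dictated by the Jacobian convention of the statement is inserted. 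Part (ii) then follows verbatim with $v \leftrightarrow \tilde v$ and $X \leftrightarrow Y$.

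The algebra here is light; the real content is the regularity bookkeeping. The main obstacle is establishing the backward flow identity $\partial_s X_{s,t}(y) = -\nabla_x X_{s,t}(y)\, v(s,y)$ together with joint $C^1$ regularity of $X_{s,t}$ in $(s, y)$, since only after this is in place is $\phi$ guaranteed differentiable and the chain-rule computation legitimate; once these hold, the formula drops out of the fundamental theorem of calculus applied to $\phi$. A secondary, purely cosmetic point is keeping the $\nabla_x$-versus-Jacobian (transpose) convention consistent with the way \paref{eq:perb-vf-x} and \paref{eq:vari-eq-x} are written.
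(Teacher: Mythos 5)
The paper does not prove this lemma; it cites it directly as Theorem~14.5 of Hairer, N\o rsett, and Wanner, so there is no ``paper's own proof'' to compare against. Your argument is the standard derivation of the Alekseev--Gr\"obner (nonlinear variation-of-constants) formula --- define $\phi(s) := X_{s,t}(Y_s(x_0))$, establish the backward flow identity $\partial_s X_{s,t}(y) = -\nabla_x X_{s,t}(y)\,v(s,y)$ via the semigroup property, differentiate $\phi$ by the chain rule, and integrate --- and it is correct, including the sign bookkeeping that yields $X_t - Y_t$ with $v - \tilde v$ after negation, and the observation that \paref{eq:perb-vf-x} uses a transposed Jacobian convention. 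This is precisely the proof one finds in the cited reference, so the reconstruction is faithful; the only place that deserves a slightly more careful statement is the joint $C^1$ regularity of $(s,y)\mapsto X_{s,t}(y)$, which you invoke correctly but somewhat informally (it follows from $v$ being continuous in $t$ and $C^1$ in $x$ together with the variational equation, guaranteeing continuity of $\nabla_x X_{s,t}$ in $(s,y)$, which is what legitimizes both the backward flow derivation and the chain-rule step).
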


Exploiting the Alekseev-Gr{\"o}bner formulas in Lemma \ref{lm:ag-formula} and uniform Lipschitz properties of the velocity field, we deduce two error bounds in terms of the quadratic Wasserstein ($W_2$) distance to show the stability of the ODE flow when the velocity field is not accurate.

\begin{proposition}[Stability in the velocity field] \label{prop:stab-vf}
Suppose Assumptions \ref{assump:well-defined} and \ref{assump:geom-prop} hold. Let $\tilde{q}_t$ denote the density function of ${Y_t}_{\#} \mu$.
\begin{itemize}
\item[(i)] Suppose that
\begin{equation}
    \label{eq:infty-bd-vf}
    \int_0^1 \int_{\sR^d} \Vert v(t, x) - \tilde{v}(t, x) \Vert^2 \tilde{q}_t(x) \diff x \diff t \le \varepsilon.
\end{equation}
Then
\begin{equation}
    \label{eq:w2-1-stable}
    W_2^2({Y_1}_{\#} \mu, \nu) \le \varepsilon \int_0^1 \exp \left( 2 \int_s^1 \theta_u \diff u \right) \diff s.
\end{equation}
\item[(ii)] Suppose that
\begin{equation*}
    \sup_{(t, x) \in [0, 1] \times \sR^d} \Vert \nabla_x \tilde{v}(t, x) \Vert_{2,2} \le C_3.
\end{equation*}
Then
\begin{equation}
    \label{eq:w2-2-stable}
    W_2^2({Y_1}_{\#} \mu, \nu) \le \frac{\exp(2 C_3) - 1}{2 C_3} \int_0^1 \int_{\sR^d} \Vert v(t, x) - \tilde{v}(t, x) \Vert^2 p_t(x) \diff x \diff t.
\end{equation}
\end{itemize}
\end{proposition}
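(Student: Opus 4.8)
The plan is to handle both parts with the same three moves: first, drive the two ODE flows from a common random start $x_0\sim\mu$, so that $X_0(x_0)=Y_0(x_0)=x_0$; second, express the terminal gap $X_1(x_0)-Y_1(x_0)$ through the Alekseev--Gr\"obner formula (Lemma \ref{lm:ag-formula}), in which one factor is a flow-map Jacobian bounded via Lemma \ref{lm:flow-map-Lip-bd} and the other is the velocity mismatch $v-\tilde v$ along a trajectory; third, take $\E_{x_0\sim\mu}$ and note that the joint law of $(X_1(x_0),Y_1(x_0))$ is a coupling of $\nu={X_1}_{\#}\mu$ (by Theorem \ref{thm:well-posed}, since $a_1=0$, $b_1=1$) and ${Y_1}_{\#}\mu$, so the resulting $L^2$ quantity upper bounds $W_2^2({Y_1}_{\#}\mu,\nu)$.

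For part (i), since $v$ is $C^1$ in $x$ under the assumptions, I would invoke Lemma \ref{lm:ag-formula}(i): $X_1(x_0)-Y_1(x_0)=\int_0^1(\nabla_x X_{s,1})(Y_s(x_0))^{\top}\big(v(s,Y_s(x_0))-\tilde v(s,Y_s(x_0))\big)\diff s$. Corollaries \ref{cor:pos-kappa-bd}--\ref{cor:log-lip} furnish the case-appropriate $\theta_t$ with $\nabla_x v(t,x)\preceq\theta_t\rmI_d$, so Lemma \ref{lm:flow-map-Lip-bd} gives $\Vert\nabla_x X_{s,1}(x)\Vert_{2,2}\le\exp(\int_s^1\theta_u\diff u)$. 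Taking norms and applying Cauchy--Schwarz in $s$ against the weight $\exp(\int_s^1\theta_u\diff u)$ yields $\Vert X_1(x_0)-Y_1(x_0)\Vert^2\le\big(\int_0^1\exp(2\int_s^1\theta_u\diff u)\diff s\big)\int_0^1\Vert v(s,Y_s(x_0))-\tilde v(s,Y_s(x_0))\Vert^2\diff s$. Then I take $\E_{x_0\sim\mu}$, use Tonelli's theorem, and use that $Y_s(x_0)$ has density $\tilde q_s$, so the inner integral becomes $\int_0^1\int_{\sR^d}\Vert v-\tilde v\Vert^2\tilde q_s\,\diff x\,\diff s\le\varepsilon$ by \paref{eq:infty-bd-vf}; this is \paref{eq:w2-1-stable}.

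For part (ii), I would instead use Lemma \ref{lm:ag-formula}(ii), valid because $\sup_{t,x}\Vert\nabla_x\tilde v(t,x)\Vert_{2,2}\le C_3<\infty$ forces $\tilde v\in C^1$ in $x$ and uniformly Lipschitz in $x$, so $Y$ is well-posed on $[0,1]$; it gives $Y_1(x_0)-X_1(x_0)=\int_0^1(\nabla_x Y_{s,1})(X_s(x_0))^{\top}\big(\tilde v(s,X_s(x_0))-v(s,X_s(x_0))\big)\diff s$. Since $\Vert\nabla_x\tilde v\Vert_{2,2}\le C_3$ bounds the symmetric part of $\nabla_x\tilde v$ by $C_3\rmI_d$, Lemma \ref{lm:flow-map-Lip-bd} applied to the $Y$-flow gives $\Vert\nabla_x Y_{s,1}(x)\Vert_{2,2}\le\exp(C_3(1-s))$. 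Cauchy--Schwarz in $s$ with weight $\exp(C_3(1-s))$ together with $\int_0^1\exp(2C_3(1-s))\diff s=(\exp(2C_3)-1)/(2C_3)$ give $\Vert Y_1(x_0)-X_1(x_0)\Vert^2\le\frac{\exp(2C_3)-1}{2C_3}\int_0^1\Vert v(s,X_s(x_0))-\tilde v(s,X_s(x_0))\Vert^2\diff s$; taking $\E_{x_0\sim\mu}$ and using $X_s(x_0)\sim p_s$ yields \paref{eq:w2-2-stable}.

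I expect the estimates themselves (Cauchy--Schwarz with the right weight, Tonelli, and the one-line coupling bound for $W_2$) to be routine; the main obstacle is really bookkeeping in the setup: verifying the regularity needed to apply Lemmas \ref{lm:ag-formula} and \ref{lm:flow-map-Lip-bd} ($v$ and $\tilde v$ of class $C^1$ in $x$, and well-posedness of both IVPs so that $p_t$ and $\tilde q_t$ are honest densities), and, in part (i), picking out the correct $\theta_t$ among Corollaries \ref{cor:pos-kappa-bd}--\ref{cor:log-lip} and confirming $\int_0^1\exp(2\int_s^1\theta_u\diff u)\diff s<\infty$ for that choice, which holds because each such $\theta_t$ is integrable on $[0,1]$.
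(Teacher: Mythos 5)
Your proposal is correct and follows essentially the same route as the paper: both parts use the Alekseev--Gr\"obner formula (Lemma \ref{lm:ag-formula}), bound the flow-map Jacobian via Lemma \ref{lm:flow-map-Lip-bd} (equivalently, Gr\"onwall on the variational equation), apply Cauchy--Schwarz in $s$, and take the synchronous coupling $x_0\sim\mu$ to upper bound $W_2^2$. The only presentational difference is in part (ii), where you route the Jacobian estimate through Lemma \ref{lm:flow-map-Lip-bd} via the symmetric part of $\nabla_x\tilde v$ while the paper invokes Gr\"onwall on the variational equation directly; these are equivalent.
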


Proposition \ref{prop:stab-vf} provides a stability analysis against the estimation error of the velocity field using the $W_2$ distance.
    The estimation error originates from the flow matching or score matching procedures and
    the approximation error rising from using
     deep neural networks in
     estimating the velocity field or the score function. These two $W_2$ bounds imply that the distribution estimation error is controlled by the $L_2$ estimation error of flow matching and score matching.
    Indeed, this point justifies the soundness of the approximation method through flow matching and score matching.
    The first $W_2$ bound \paref{eq:w2-1-stable} relies on the $L_2$ control \paref{eq:infty-bd-vf} of the perturbation error of the velocity field.
    The second $W_2$ bound \paref{eq:w2-2-stable} is slightly better than that provided in \cite[Proposition 3]{albergo2023building} but still has exponential dependence on the Lipschitz constant of $\tilde{v}(t, x)$.

\begin{figure}[H]
\centering
\includegraphics[width=0.65\textwidth]{./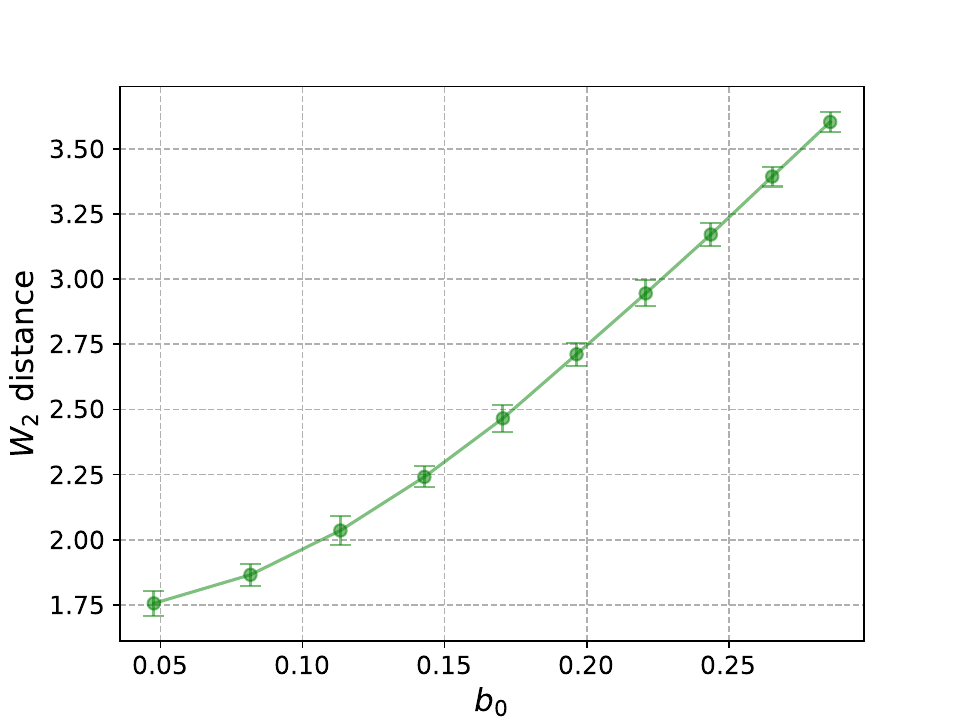}
\caption{An approximately linear relation between $b_0$ and the Wasserstein-2 distance.}
\label{fig:mog-perturb-source}
\end{figure}

To demonstrate the bounds presented in Propositions \ref{prop:stab-iv} and \ref{prop:stab-vf}, we conducted further experiments with a mixture of eight two-dimensional Gaussian distributions. These propositions provide bounds for the stability of the flow when subjected to perturbations in either the source distribution or the velocity field.
Let the target distribution be the following two-dimensional Gaussian mixture
\begin{align*}
p(x)=\sum_{j=1}^{8}\phi(x; \mu_j, \Sigma_j),
\end{align*}
where $\phi(x; \mu_j, \Sigma_j)$ is the probability density function for the Gaussian distribution with mean $\mu_j = 12 (\sin(2(j-1)\pi/8),\cos(2(j-1)\pi/8))^{\top}$ and covariance matrix $\Sigma_j= 0.03^2 \mathbf{I}_2$ for $j=1, \cdots, 8$.  For Gaussian mixtures, the velocity field has an explicit formula, which facilitates the perturbation analysis.

To illustrate the bound in Proposition \ref{prop:stab-iv}, we consider a perturbation of the source distribution for the following model:
\begin{align*}
    \mathsf{X}_t = a_t \mathsf{Z} + b_t \mathsf{X} \text{\quad with  \quad}
    a_t = 1 - \frac{t + \zeta}{1 + \zeta}, \quad b_t = \frac{t + \zeta}{1 + \zeta},
\end{align*}
where $\zeta \in [0, 0.3]$ is a value controlling the perturbation level.
It is easy to see $a_0 = {1}/{(1 + \zeta)}, b_0 = {\zeta}/{(1 + \zeta)}$.
Thus, the source distribution $\mathrm{Law}(a_0 \mathsf{Z} + b_0 \mathsf{X})$ is a mixture of Gaussian distributions.
Practically, we can use a Gaussian distribution $\gamma_{2, a_0^2}$ to replace this source distribution.
In Proposition \ref{prop:stab-iv}, we bound the error between the distributions of generated samples due to the replacement, that is,
\begin{align*}
    W_2({X_1}_{\#} \gamma_{d, a_0^2}, \nu) \le C b_0,
\end{align*}
where $C$ is a constant.
We illustrate this theoretical bound using the mixture of Gaussian distributions and the Gaussian interpolation flow given above.
We consider a mesh for the variable $\zeta$ and plot the curve for $b_0$ and $W_2({X_1}_{\#} \gamma_{d, a_0^2}, \nu)$ in Figure \ref{fig:mog-perturb-source}.
Through Figure \ref{fig:mog-perturb-source}, an approximate linear relation between $b_0$ and $W_2({X_1}_{\#} \gamma_{d, a_0^2}, \nu)$ is observed, which supports the results of Proposition \ref{prop:stab-iv}.

\begin{figure}[!t]
\centering
\includegraphics[width=0.65\textwidth]{./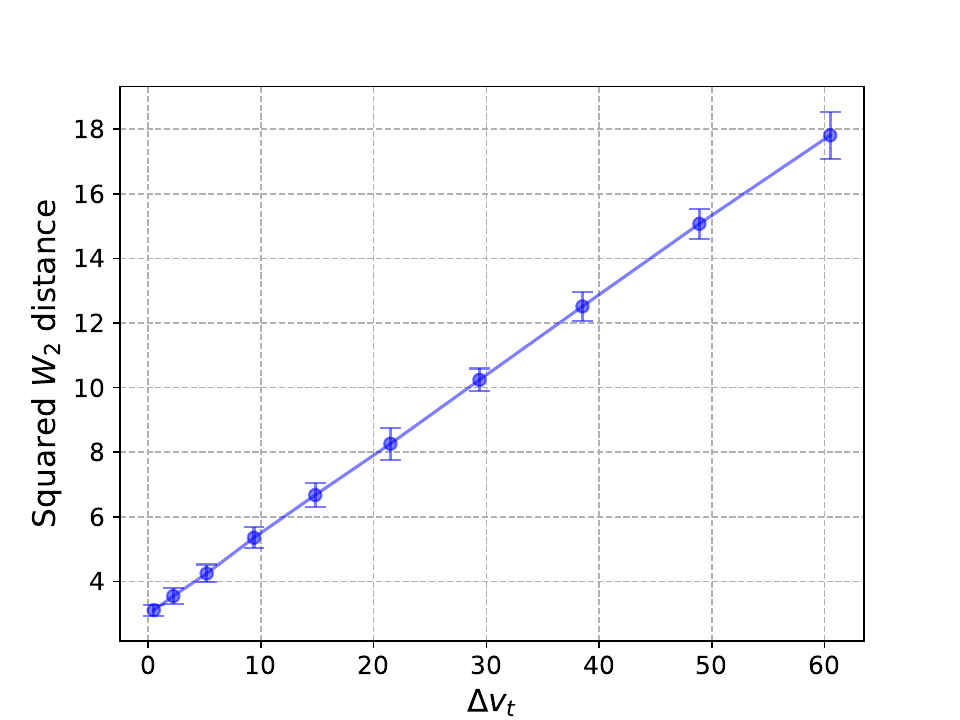}
\caption{A linear relation between $\Delta v_t$ and the squared Wasserstein-2 distance.}
\label{fig:mog-perturb-vf}
\end{figure}

We now consider perturbing the velocity field $v_t$ by adding random noise. Let $\epsilon \in [0.5, 5.5]$. The random noise is generated using a Bernoulli random variable supported on $\{ -\epsilon, \epsilon \}$. Let $\tilde{v}_t$ denote the perturbed velocity field. Then we can compute
\begin{align*}
    \Delta v_t := \Vert v_t - \tilde{v}_t \Vert^2 = 2 \epsilon^2.
\end{align*}
We use the velocity field $v_t$ and the perturbed velocity field $\title{v}_t$ to generate samples and compute the squared Wasserstein-2 distance between the sample distributions.
According to Proposition \ref{prop:stab-vf}, the squared Wasserstein-2 distance should be linearly upper bounded as $\mathcal{O}(\Delta v_t)$, that is,
\begin{align*}
    W_2^2({Y_1}_{\#} \mu, \nu) \le \tilde{C} \int_0^1 \int_{\mathbb{R}^2} \epsilon^2 p_t(x) \mathrm{d} x \mathrm{d} t = \tilde{C} \epsilon^2,
\end{align*}
where $\tilde{C}$ is a constant. This theoretical insight is illustrated in Figure \ref{fig:mog-perturb-vf}, where a linear relationship between these two variables is observed.

\section{Related work}

GIFs and the induced transport maps are
related to CNFs and score-based diffusion models.
Mathematically, they interrelate with the literature on Lipschitz mass transport and Wasserstein gradient flows.
A central question in developing the ODE flow or transport map method for generative modeling is how to construct
an ODE flow or transport map that are
sufficiently smooth and enable efficient computation.
Various approaches have been proposed to answer the question.

CNFs construct invertible mappings between an isotropic Gaussian distribution and a complex target distribution \citep{chen2018neural, grathwohl2018scalable}. They fall within the broader framework of neural ODEs \citep{chen2018neural, ruiz2023neural}. A major challenge for CNFs is designing a time-dependent ODE flow whose marginal distribution converges to the target distribution while allowing for efficient estimation of its velocity field. Previous work has explored several principles to construct such flows, including optimal transport, Wasserstein gradient flows, and diffusion processes. Additionally, Gaussian denoising has emerged as an effective principle for constructing simulation-free CNFs in generative modeling.

\citet{liu2023flow} propose the rectified flow, which is based on a linear interpolation between a standard Gaussian distribution and the target distribution, mimicking the Gaussian denoising procedure. \citet{albergo2023building} study a similar formulation called stochastic interpolation, defining a trigonometric interpolant between a standard Gaussian distribution and the target distribution. \citet{albergo2023stochastic} extend this idea by proposing a stochastic bridge interpolant between two arbitrary distributions. Under a few regularity assumptions, the velocity field of the ODE flow modeling the stochastic bridge interpolant is proven to be continuous in the time variable and smooth in the space variable. 

\citet{lipman2023flow} introduce a nonlinear least squares method called flow matching to directly estimate the velocity field of probability flow ODEs. All of these models are encompassed within the framework of simulation-free CNFs, which have been the focus of numerous ongoing research efforts \citep{neklyudov2023action, tong2023conditional, chen2023riemannian, albergo2023stochastic, shaul2023kinetic, pooladian2023multisample, albergo2023multimarginal, albergo2023dependent}. Furthermore, \citet{marzouk2023distribution} provide the first statistical convergence rate for the simulation-based method by placing neural ODEs within the nonparametric estimation framework.

Score-based diffusion models integrate the time reversal of stochastic differential equations (SDEs) with the score matching technique \citep{sohl2015deep, song2019generative, ho2020denoising, song2020improved, song2021scorebased, song2021denoising, debortoli2021diffusion}. These models are capable of modeling highly complex probability distributions and have achieved state-of-the-art performance in image synthesis tasks \citep{dhariwal2021diffusion, rombach2022high}. The probability flow ODEs of diffusion models can be considered as CNFs, whose velocity field incorporates the nonlinear score function \citep{song2021scorebased, karras2022elucidating, lu2022dpm, lu2022maximum, zheng2023improved}.
In addition to the score matching method, \citet{lu2022maximum} and \citet{zheng2023improved} explore maximum likelihood estimation for probability flow ODEs. However, the regularity of these probability flow ODEs has not been studied and their well-posedness properties remain to be established.

A key concept in defining measure transport is Lipschitz mass transport, where the transport maps are required to be Lipschitz continuous. This ensures the smoothness and stability of the measure transport. There is a substantial body of research on the Lipschitz properties of transport maps. The celebrated Caffarelli's contraction theorem \cite[Theorem 2]{caffarelli2000monotonicity} establishes the Lipschitz continuity of optimal transport maps that push the standard Gaussian measure onto a log-concave measure. \citet{colombo2017lipschitz} study a Lipschitz transport map between perturbations of log-concave measures using optimal transport theory.

\citet{mikulincer2021brownian} demonstrate that the Brownian transport map, defined by the F{"o}llmer process, is Lipschitz continuous when it pushes forward the Wiener measure on the Wiener space to the target measure on the Euclidean space. Additionally, \citet{neeman2022lipschitz} and \citet{mikulincer2023lipschitz} prove that the transport map along the reverse heat flow of certain target measures is Lipschitz continuous.

Beyond studying Lipschitz transport maps, significant effort has been devoted to applying optimal transport theory in generative modeling. \citet{zhang2018monge} propose the Monge-Ampe{`r}e flow for generative modeling by solving the linearized Monge-Ampe{\`r}e equation. Optimal transport theory has been utilized as a general principle to regularize the training of continuous normalizing flows or generators for generative modeling \citep{finlay2020train, yang2020potential, onken2021ot, makkuva2020optimal}. \citet{liang2021well} leverage the regularity theory of optimal transport to formalize the generator-discriminator-pair regularization of GANs under a minimax rate framework.

In our work, we study the Lipschitz transport maps defined by GIFs, which differ from the optimal transport map. GIFs naturally fit within the framework of continuous normalizing flows, and their flow mappings are examined from the perspective of Lipschitz mass transport.

Wasserstein gradient flows offer another principled approach to constructing ODE flows for generative modeling. A Wasserstein gradient flow is derived from the gradient descent minimization of a certain energy functional over probability measures endowed with the quadratic Wasserstein metric \citep{ambrosio2008gradient}. The Eulerian formulation of Wasserstein gradient flows produces the continuity equations that govern the evolution of marginal distributions.
After transferred into a Lagrangian formulation, Wasserstein gradient flows define ODE flows that have been widely explored for generative modeling \citep{johnson2018composite, gao2019deep, liutkus2019sliced, johnson2019framework, arbel2019maximum, mroueh2019sobolev, ansari2021refining, mroueh2021convergence, fan2021variational, gao2022deep, duncan2023geometry, xu2022invertible}.
Wasserstein gradient flows are shown to be connected with the forward process of diffusion models.
The variance preserving SDE of diffusion models is equivalent to the Langevin dynamics towards the standard Gaussian distribution that can be interpreted as a Wasserstein gradient flow of the Kullback–Leibler divergence for a standard Gaussian distribution \citep{song2021scorebased}.
In the meantime, the probability flow ODE of the variance preserving SDE conforms to the Eulerian formulation of this Wasserstein gradient flow.
However, when assigning a general distribution instead of the standard Gaussian distribution, it remains unclear whether the ODE formulation of Wasserstein gradient flows possesses well-posedness.

The main contribution of our work lies in establishing the theoretical properties of GIFs and their associated flow maps in a unified way. Our theoretical results encompass the Lipschitz continuity of both the flow's velocity field and the flow map, addressing the existence, uniqueness, and stability of the flow. We also demonstrate that both the flow map and its inverse possess Lipschitz properties.

Our proposed framework for Gaussian interpolation flow builds upon previous research on probability flow methods in diffusion models \citep{song2021scorebased, song2021denoising} and stochastic interpolation methods for generative modeling \citep{liu2023flow, albergo2023building, lipman2023flow}. Rather than adopting a methodological perspective, we focus on elucidating the theoretical aspects of these flows from a unified standpoint, thereby enhancing the understanding of various methodological approaches. Our theoretical results are derived from geometric considerations of the target distribution and from analytic calculations that exploit the Gaussian denoising property.

\section{Conclusions and discussion}

Gaussian denoising as a framework for constructing continuous normalizing flows holds great promise in generative modeling. Through a unified framework and rigorous analysis, we have established the well-posedness of these flows, shedding light on their capabilities and limitations. We have examined the Lipschitz regularity of the corresponding flow maps for several rich classes of probability measures. When applied to generative modeling based on Gaussian denoising, we have shown that GIFs possess auto-encoding and cycle consistency properties at the population level. Additionally, we have established stability error bounds for the errors accumulated during the process of learning GIFs.

The regularity properties of the velocity field established in this paper provide a solid theoretical basis for end-to-end error analyses of learning GIFs using deep neural networks with empirical data. Another potential application is to perform rigorous analyses of consistency models, a nascent family of ODE-based deep generative models designed for one-step generation \citep{song2023consistency, kim2023consistency, song2023improved}. We intend to investigate these intriguing problems in our subsequent work. We expect that our analytical results will facilitate further studies and advancements in applying simulation-free CNFs, including GIFs, to a diverse range of generative modeling tasks.




\newpage

\appendix
\noindent
\textbf{\large Appendix}

\medskip\noindent
{\color{black}
In the appendices, we prove the results stated in the paper and provide necessary technical details and discussions.

\section{Proofs of Theorem \ref{thm:flow-gif} and Lemma \ref{lm:cond-cov}}
Dynamical properties of Gaussian interpolation flow $(\mathsf{X}_t)_{t \in [0, 1]}$ form the cornerstone of the measure interpolation method. Following \citet{albergo2023building, albergo2023stochastic}, we leverage an argument of characteristic functions to quantify the dynamics of its marginal flow, and in result, to prove Theorem \ref{thm:flow-gif}.
\begin{proof} [Proof of Theorem \ref{thm:flow-gif}]
Let $\omega \in \sR^d$.
For the Gaussian stochastic interpolation $(\mathsf{X}_t)_{t \in [0, 1]}$, we define the characteristic function of $\mathsf{X}_t$ by
\begin{equation*}
    \Psi(t, \omega)
    := \E [\exp(i \langle \omega, \mathsf{X}_t \rangle)]
    = \E [\exp(i \langle \omega, a_t \mathsf{Z} + b_t \mathsf{X}_1 \rangle)]
    = \E [\exp(i a_t \langle \omega, \mathsf{Z} \rangle)] \E [\exp(i b_t \langle \omega, \mathsf{X}_1 \rangle)],
\end{equation*}
where the last equality is due to the independence of between $\mathsf{Z} \sim \gamma_d$ and $\mathsf{X}_1 \sim \nu$.
Taking the time derivative of $\Psi(t, \omega)$ for $t \in (0, 1)$, we derive that
\begin{equation*}
    \partial_t \Psi(t, \omega) = i \langle \omega, \psi(t, \omega))
\end{equation*}
where
\begin{equation*}
    \psi(t, \omega) := \E [\exp(i \langle \omega, \mathsf{X}_t \rangle) (\dot{a}_t \mathsf{Z} + \dot{b}_t \mathsf{X}_1)].
\end{equation*}
We first define
\begin{align} \label{eq:def-vf}
    v(t, \mathsf{X}_t) := \E[ \dot{a}_t \mathsf{Z} + \dot{b}_t \mathsf{X}_1 | \mathsf{X}_t].
\end{align}
Using the double expectation formula, we deduce that
\begin{align*}
    \psi(t, \omega)
    = \E[ \exp(i \langle \omega, \mathsf{X}_t \rangle) \E[\dot{a}_t \mathsf{Z} + \dot{b}_t \mathsf{X}_1 | \mathsf{X}_t] ]
    = \E[ \exp(i \langle \omega, \mathsf{X}_t \rangle) v(t, \mathsf{X}_t) ].
\end{align*}
Applying the inverse Fourier transform to $\psi(t, \omega)$, it holds that
\begin{equation*}
    j(t, x)
    := (2 \pi)^{-d} \int_{\mathbb{R}^d}\exp(-i \langle \omega, x \rangle) \psi(t, \omega) \mathrm{d} \omega
    = p_t(x) v(t, x),
\end{equation*}
where $v(t, x) := \E[ \dot{a}_t \mathsf{Z} + \dot{b}_t \mathsf{X}_1 | \mathsf{X}_t = x]$.
Then it further yields that
\begin{equation*}
    \partial_t p_t + \nabla_x \cdot j(t, x) = 0,
\end{equation*}
that is,
\begin{equation*}
    \partial_t p_t + \nabla_x \cdot (p_t v(t, x)) = 0.
\end{equation*}
Next, we study the property of $v(t, x)$ at $t = 0$ and $t = 1$.
Notice that
\begin{align} \label{eq:def-cond-exp}
    x = a_t \E[\mathsf{Z} | \mathsf{X}_t = x] + b_t \E[\mathsf{X}_1 | \mathsf{X}_t = x].
\end{align}
Combining Eq. \paref{eq:def-vf} and \paref{eq:def-cond-exp}, it implies that
\begin{equation} \label{eq:vf-expect-single2}
    v(t, x) = \tfrac{\dot{a}_t}{a_t} x + \left( \dot{b}_t - \tfrac{\dot{a}_t}{a_t}b_t \right) \E[\mathsf{X}_1 | \mathsf{X}_t = x], \quad t \in (0, 1).
\end{equation}
According to Tweedie's formula in Lemma \ref{lm:tw-formula}, it holds that
\begin{equation}
    \label{eq:tw-formula2}
    s(t, x) = \tfrac{b_t}{a_t^2} \E \left[ \mathsf{X}_1 | \mathsf{X}_t = x \right] - \tfrac{1}{a_t^2} x, \quad t \in (0, 1),
\end{equation}
where $s(t, x)$ is the score function of the marginal distribution of $\mathsf{X}_t \sim p_t$.

Combining Eq. \paref{eq:vf-expect-single2}, \paref{eq:tw-formula2}, it holds that the velocity field is a gradient field and its nonlinear term is the score function $s(t, x)$, namely, for any $t \in (0, 1)$,
\begin{align}
    \label{eq:gene-vf-interpolation}
    v(t, x) = \tfrac{\dot{b}_t}{b_t} x + \left( \tfrac{\dot{b}_t}{b_t} a_t^2 - \dot{a}_t a_t \right) s(t, x).
\end{align}
By the regularity properties that $a_t, b_t \in C^2([0,1)), a_t^2 \in C^1([0, 1]), b_t \in C^1([0, 1])$, we have that $\dot{a}_0, \dot{b}_0, \dot{a}_1 a_1$, and $\dot{b}_1$ are well-defined.
Then by Eq. \paref{eq:vf-expect-single2}, we define that
\begin{align*}
    v(0, x) := \lim_{t \downarrow 0} v(t, x)
    = \tfrac{\dot{a}_0}{a_0} x + \left( \dot{b}_0 - \tfrac{\dot{a}_0}{a_0}b_0 \right) \E[\mathsf{X}_1 | \mathsf{X}_0 = x]
\end{align*}
Using Eq. \paref{eq:gene-vf-interpolation} yields that
\begin{align} \label{eq:vf-time1}
    v(1, x) := \lim_{t \uparrow 1} v(t, x)
    = \tfrac{\dot{b}_1}{b_1} x - \dot{a}_1 a_1 s(1, x).
\end{align}
This completes the proof.
\end{proof}

Lemma \ref{lm:cond-cov} presents several standard properties of Gaussian channels in information theory \citep{wibisono2018convexity-heat, wibisono2018convexity-ou, dytso2023conditional} that will facilitate our proof.

\begin{proof} [Proof of Lemma \ref{lm:cond-cov}]
{\color{black}
By Bayes' rule, $\mathrm{Law} (\mathsf{Y} | \mathsf{X}_t = x) = p(y | t, x)$ can be represented as
\begin{align*}
    p(y | t, x)
    & = \varphi_{b_t y, a_t^2} (x) p_1(y) / p_t(x) \\
    & = (2 \pi)^{-d/2} a_t^{-d} \exp\left( -\frac{\Vert x - b_t y \Vert^2}{2 a_t^2} \right) p_1(y) / p_t(x) \\
    & = (2 \pi)^{-d/2} a_t^{-d} \exp\left( -\frac{\Vert x \Vert^2}{2 a_t^2} + \frac{b_t \langle x, y \rangle}{a_t^2} - \frac{b_t^2 \Vert y \Vert^2}{2 a_t^2} \right) p_1(y) / p_t(x) \\
    & = \left\{ \exp\left( \frac{b_t \langle x, y \rangle}{a_t^2} - \frac{b_t^2 \Vert y \Vert^2}{2 a_t^2} \right) p_1(y) \right\}
    / \left\{ (2 \pi)^{d/2} a_t^d \exp\left(\frac{\Vert x \Vert^2}{2 a_t^2}\right) p_t(x) \right\}.
\end{align*}
Let $\theta = \frac{b_t x}{a_t^2}, h(y) = p_1(y) \exp(-\frac{b_t^2 \Vert y \Vert^2}{2 a_t^2})$, and the logarithmic partition function
\begin{equation*}
    A(\theta) = \log \int_{\sR^d} h(y) \exp(\langle y, \theta \rangle) \diff y,
\end{equation*}
then by the definition of exponential family distributions, we conclude that
\begin{align*}
    p(y | t, x) = h(y) \exp(\langle y, \theta \rangle - A(\theta))
\end{align*}
is an exponential family distribution of $y$.}
By simple calculation, it follows that
\begin{equation*}
    \nabla^2_x \log p(y | t, x) =  - \frac{b_t^2}{a_t^4} \nabla^2_{\theta} A(\theta).
\end{equation*}
For an exponential family distribution, a basic equality shows that
\begin{equation*}
    \nabla^2_{\theta} A(\theta) = \Cov (\mathsf{Y} | \mathsf{X}_t = x),
\end{equation*}
which further yields that $\nabla^2_x \log p(y | t, x) = - \frac{b_t^2}{a_t^4} \Cov (\mathsf{Y} | \mathsf{X}_t = x)$.
\end{proof}

\section{Auxiliary lemmas for Lipschitz flow maps}
The following lemma, due to G. Peano \cite[Theorem 3.1]{hartman2002dependence}, describes several meaningful differential equations associated with well-posed flows and supports the derivation of Lipschitz continuity of their flow maps.

\begin{lemma} \label{lm:diff-eq-flow} \cite[Lemma 3.4]{ambrosio2023classical}
Suppose that a flow $(X_t)_{t \in [0, 1]}$ is well-posed and its velocity field $v(t, x): [0, 1] \times \sR^d \to \sR^d$ is of class $C^1$. Then the flow map $X_{s,t}: \sR^d \to \sR^d$ is of class $C^1$ for any $0 \le s \le t \le 1$. Fix $(s, x) \in [0, 1] \times \sR^d$ and set the following functions defined with $t \in [s, 1]$
\begin{align*}
    y(t) &:= \nabla_x X_{s,t}(x),        && J(t) := (\nabla_x v) (t, X_{s,t}(x)), \\
    w(t) &:= \det (\nabla_x X_{s,t}(x)), && b(t) := (\nabla_x \cdot v) (t, X_{s,t}(x)) = \Tr(J(t)).
\end{align*}
Then $y(t)$ and $w(t)$ are the unique $C^1$ solutions of the following IVPs
\begin{align}
    \label{eq:grad-eq}
    \dot{y}(t) &= J(t)y(t), \quad y(s) = \rmI_d, \\
    \label{eq:det-eq}
    \dot{w}(t) &= b(t) w(t), \quad w(s) = 1.
\end{align}
\end{lemma}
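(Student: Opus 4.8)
The plan is to recognize this as the classical \emph{variational equation} for the flow — differentiability of the solution in its initial datum — together with the Liouville--Jacobi identity governing the evolution of the Jacobian determinant. First I would settle the $C^1$-regularity of the flow map. Since $v$ is continuous in $t$ and of class $C^1$ in $x$, the standard theorem on differentiable dependence of solutions on initial conditions (the cited \cite[Theorem 3.1]{hartman2002dependence}; see also \cite[Lemma 3.4]{ambrosio2023classical}) applies: for every $0 \le s \le t \le 1$ the map $x \mapsto X_{s,t}(x)$ is $C^1$, with Jacobian $\nabla_x X_{s,t}(x)$ depending continuously on $(s,t,x)$. Consequently $J(t) = (\nabla_x v)(t, X_{s,t}(x))$ is continuous in $t$, being the composition of the continuous map $\nabla_x v$ with the continuous curve $t \mapsto (t, X_{s,t}(x))$, and $b(t) = \Tr(J(t))$ is continuous as well.

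Next I would derive the equation for $y(t) = \nabla_x X_{s,t}(x)$. Writing the flow in integral form, $X_{s,t}(x) = x + \int_s^t v(\tau, X_{s,\tau}(x))\,\diff\tau$, and differentiating in $x$ under the integral sign — legitimate because, by the chain rule, the integrand derivative $(\nabla_x v)(\tau, X_{s,\tau}(x))\,\nabla_x X_{s,\tau}(x)$ is jointly continuous in $(\tau,x)$ and hence locally bounded — gives the Volterra identity $y(t) = \rmI_d + \int_s^t J(\tau)\,y(\tau)\,\diff\tau$. Its right-hand side is a $C^1$ function of $t$ since the integrand is continuous, so differentiating in $t$ yields $\dot y(t) = J(t)\,y(t)$ with $y(s) = \rmI_d$. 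Because $J(\cdot)$ is a continuous matrix-valued function on $[s,1]$, this is a linear IVP to which the Cauchy--Lipschitz theorem applies, giving existence and uniqueness of the $C^1$ solution; thus $y$ is precisely the solution of \paref{eq:grad-eq}.

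For $w(t) = \det(\nabla_x X_{s,t}(x)) = \det(y(t))$, I would invoke Jacobi's formula $\tfrac{\diff}{\diff t}\det(A(t)) = \Tr\!\big(\mathrm{adj}(A(t))\,\dot A(t)\big)$ with $A(t)=y(t)$, substitute $\dot y = J y$, and use $y\,\mathrm{adj}(y) = \det(y)\,\rmI_d$ together with the cyclicity of the trace:
\begin{equation*}
    \dot w(t) = \Tr\!\big(\mathrm{adj}(y(t))\,J(t)\,y(t)\big) = \Tr\!\big(J(t)\,y(t)\,\mathrm{adj}(y(t))\big) = \det(y(t))\,\Tr(J(t)) = b(t)\,w(t),
\end{equation*}
with $w(s) = \det(\rmI_d) = 1$. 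Using the adjugate rather than $y^{-1}$ sidesteps having to establish a priori that $y(t)$ is invertible. Since $b(\cdot)$ is continuous on $[s,1]$, \paref{eq:det-eq} is a scalar linear IVP with a unique $C^1$ solution, so $w$ is that solution.

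The only genuinely nontrivial input is the first step — the $C^1$-dependence of the flow on the initial point, and with it the justification for differentiating under the integral — but this is exactly the content of the classical results already cited in the statement, so in the write-up I would invoke them directly rather than reproving them. (An alternative self-contained route would run a Picard iteration on the augmented system $(X_{s,\cdot}, \Phi_{s,\cdot})$, where $\Phi$ is defined by the candidate variational equation, and identify $\Phi = \nabla_x X$ via uniqueness; but this is more laborious and not needed here.) Everything after that step is routine linear-ODE theory and elementary matrix calculus.
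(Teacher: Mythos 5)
Your proposal is correct and matches the standard proof of this result; the paper does not prove the lemma but cites \cite[Lemma 3.4]{ambrosio2023classical} and \cite[Theorem 3.1]{hartman2002dependence}, and your argument — $C^1$-dependence on initial data from the cited theorem, then differentiation of the Volterra identity $X_{s,t}(x) = x + \int_s^t v(\tau, X_{s,\tau}(x))\,\diff\tau$ in $x$ to obtain the variational equation \paref{eq:grad-eq}, then Jacobi's formula via the adjugate to obtain \paref{eq:det-eq} — is exactly how those references establish it. The adjugate trick to sidestep a priori invertibility of $y(t)$ is a clean touch (and a posteriori $w(t) = \exp(\int_s^t b\,\diff u) > 0$ confirms invertibility), and your remark that $J(\cdot)$ and $b(\cdot)$ are continuous on $[s,1]$ is precisely what makes the two linear IVPs well-posed with unique $C^1$ solutions on the full interval.
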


{\color{black} We present an upper bound of the Lipschitz constant of its flow map $X_{s, t}(x)$ in Lemma \ref{lm:flow-map-Lip-bd}.}
The upper bound has been deduced in \citet{mikulincer2023lipschitz, ambrosio2023classical, dai2023lipschitz}. For completeness, we derive it as a direct implication of Eq. \paref{eq:grad-eq} in
{\color{black} Lemma \ref{lm:diff-eq-flow} }
and an upper bound of the Jacobian matrix of the velocity field.

\begin{proof}[Proof of Lemma \ref{lm:flow-map-Lip-bd}]
Let $y(u) = \nabla_x X_{s,u}(x)$, $J(u) = (\nabla_x v) (u, X_{s,u}(x))$.
Owing to Lemma \ref{lm:diff-eq-flow}, $y(u)$ is of class $C^1$, and the function $u \mapsto \Vert y(u) \Vert_{2,2}$ is absolutely continuous over $[s, t]$. By Lemma \ref{lm:diff-eq-flow}, it follows that
\begin{equation*}
    \partial_u \Vert y(u) \Vert_{2,2}^2 = 2 \langle y(u), \dot{y}(u) \rangle
    = 2 \langle y(u), J(u) y(u) \rangle \le 2 \theta_u \Vert y(u) \Vert_{2,2}^2.
\end{equation*}
Applying Gr{\"o}nwall's inequality yields that $\Vert y(t) \Vert_{2,2} \le \exp( \int_s^t \theta_u \diff u)$ which concludes the proof.
\end{proof}

Another result is concerning the theorem of instantaneous change of variables that is widely deployed in studying neural ODEs \cite[Theorem 1]{chen2018neural}. We also exploit the instantaneous change of variables to prove Proposition \ref{prop:stab-iv}.
To make the proof self-contained, we show that the instantaneous change of variables directly follows Eq. \paref{eq:det-eq} in Lemma \ref{lm:diff-eq-flow}.
Compared with the original proof in \cite[Theorem 1]{chen2018neural}, we illustrate that the well-posedness of a flow is sufficient to ensure the instantaneous change of variables property, without a boundedness condition on the flow.

\begin{corollary} [Instantaneous change of variables] \label{cor:change-var}
Suppose that a flow $(X_t)_{t \in [0, 1]}$ is well-posed with a velocity field $v(t, x): [0, 1] \times \sR^d \to \sR^d$ of class $C^1$ in $x$. Let $X_0(x) \sim \pi_0(X_0(x))$ be a distribution of the initial value.
Then the law of $X_t(x)$ satisfies the following differential equation
\begin{equation*}
    \partial_t \log \pi_t(X_t(x)) = - \Tr((\nabla_x v) (t, X_t(x))).
\end{equation*}
\end{corollary}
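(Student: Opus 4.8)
The plan is to combine the classical change-of-variables formula for push-forward densities with the Liouville--Jacobi identity for the Jacobian determinant already recorded in Lemma \ref{lm:diff-eq-flow}. Write $X_t := X_{0,t}$ for the flow map started at time $0$ and fix $x \in \sR^d$. Since the flow is well-posed and $v$ is of class $C^1$ in $x$, Lemma \ref{lm:diff-eq-flow} guarantees that $X_t$ is a $C^1$ diffeomorphism of $\sR^d$ for every $t \in [0,1]$ (invertibility coming from the uniqueness of the backward flow), and that $w(t) := \det(\nabla_x X_t(x))$ is the unique $C^1$ solution of \paref{eq:det-eq}, namely $\dot{w}(t) = b(t) w(t)$ with $w(0) = 1$ and $b(t) = \Tr\big((\nabla_x v)(t, X_t(x))\big)$. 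In particular $w(t) = \exp\big(\int_0^t b(u)\,\diff u\big) > 0$ for all $t$, so the Jacobian determinant stays positive and absolute values may be dropped throughout.

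Next I would use that $\pi_t = {X_t}_{\#}\pi_0$ is absolutely continuous, being the push-forward of an absolutely continuous measure under a $C^1$ diffeomorphism, with density satisfying the pointwise change-of-variables identity
\[
    \pi_t(X_t(x))\, w(t) = \pi_0(x), \qquad x \in \sR^d,\ t \in [0,1].
\]
Taking logarithms gives $\log \pi_t(X_t(x)) + \log w(t) = \log \pi_0(x)$, where the right-hand side is constant in $t$ and $w$ is $C^1$ and strictly positive; differentiating in $t$ then yields
\[
    \partial_t \log \pi_t(X_t(x)) = -\frac{\dot{w}(t)}{w(t)} = -b(t) = -\Tr\big((\nabla_x v)(t, X_t(x))\big),
\]
which is the assertion of the corollary.

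The main technical point to handle carefully is the rigorous justification that the push-forward density exists and obeys the pointwise change-of-variables identity with no growth or boundedness assumption on $v$; this is precisely where the \emph{global} well-posedness of the flow (as opposed to mere local existence) enters, and it is the step at which this argument differs from the boundedness-based reasoning in \cite[Theorem 1]{chen2018neural}. Once $X_t$ is known to be a global $C^1$ diffeomorphism with everywhere-positive Jacobian, the remaining work is the standard change-of-variables formula together with a one-line differentiation, so no further estimates are required.
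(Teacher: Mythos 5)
Your argument is correct and is essentially identical to the paper's: both invoke Lemma~\ref{lm:diff-eq-flow} to obtain the Jacobi--Liouville ODE for $\det(\nabla_x X_t(x))$, deduce its strict positivity, combine this with the change-of-variables identity $\pi_t(X_t(x))\det(\nabla_x X_t(x)) = \pi_0(x)$, and differentiate the logarithm in $t$. The extra commentary on well-posedness versus boundedness is precisely the remark the paper makes when contrasting with \citet{chen2018neural}, so there is no substantive difference.
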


\begin{proof}
Let $\delta(t) := \det (\nabla_x X_t (x))$. Thanks to Eq. \paref{eq:det-eq} in Lemma \ref{lm:diff-eq-flow}, it holds that
\begin{equation*}
    \dot{\delta}(t) = \Tr ((\nabla_x v) (t, X_t(x))) \delta(t), \quad \delta(0) = 1,
\end{equation*}
which implies $\delta(t) > 0$ for $t \in [0, 1]$.
Notice that $\log \pi_t(X_t(x)) = \log \pi_0 (X_0(x)) - \log |\delta(t)|$ by change of variables.
Then it follows that $\partial_t \log \pi_t(X_t(x)) = - \Tr ((\nabla_x v) (t, X_t(x)))$.
\end{proof}

\section{Proofs of spatial Lipschitz estimates for the velocity field}

The main results in Section \ref{sec:spatial-lip} are proved in this appendix. We first present some ancillary lemmas before proceeding to give the proofs.

\begin{lemma} [\citealp{fathi2023transportation}] \label{lm:log-lip}
Suppose that $f: \sR^d \to \sR_+$ is $L$-log-Lipschitz for some $L \ge 0$.
Let $\mathcal{P}_t$ be the Ornstein–Uhlenbeck semigroup defined by $\mathcal{P}_t h(x) := \E_{\mathsf{Z} \sim \gamma_d} [h(e^{-t} x + \sqrt{1- e^{-2t}} \mathsf{Z})]$ for any $h \in C(\sR^d)$ and $t \ge 0$.
Then it holds that
\begin{equation*}
    \left\{ -5L e^{-t} (L + t^{-\frac12}) - L^2 e^{-2t} \right\} \rmI_d
    \preceq \nabla^2_x \log \mathcal{P}_t f(x)
    \preceq \left\{ 5L e^{-t} (L + t^{-\frac12}) \right\} \rmI_d.
\end{equation*}
\end{lemma}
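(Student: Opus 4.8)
The plan is to reduce the two-sided Hessian bound to a dimension-free estimate on the covariance of a one-dimensional log-Lipschitz tilt of a Gaussian, in the spirit of the covariance representation behind Lemma~\ref{lm:cond-cov}. Writing $f = e^{V}$ with $V$ being $L$-Lipschitz, I would first mollify $V$ so as to assume $V \in C^\infty$ with the same Lipschitz constant (the asserted bounds are stable under locally uniform limits, and $\mathcal{P}_t$ is smoothing, so nothing is lost). Substituting $w = e^{-t}x + \sqrt{1-e^{-2t}}\,z$ in the Mehler integral rewrites the semigroup as a rescaled Gaussian convolution,
\[
  \mathcal{P}_t f(x) = g\big(e^{-t}x\big), \qquad g := f * \varphi_{0,\sigma^2}, \qquad \sigma^2 := 1 - e^{-2t},
\]
so that $\nabla^2_x \log \mathcal{P}_t f(x) = e^{-2t}\,(\nabla^2 \log g)\big(e^{-t}x\big)$, and it remains to control $\nabla^2 \log g$.

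By the same exponential-family computation underlying Lemma~\ref{lm:cond-cov} (with the Gaussian convolution playing the role of the Gaussian channel), differentiating $\log g(y) = \log\!\int e^{V(w)}\varphi_{y,\sigma^2}(w)\,\diff w$ twice yields the covariance representation
\[
  \nabla^2 \log g(y) = \tfrac{1}{\sigma^4}\,\Cov(\mathsf{Y}_y) - \tfrac{1}{\sigma^2}\,\rmI_d,
\]
where $\mathsf{Y}_y$ has density proportional to $e^{V(w)}\varphi_{y,\sigma^2}(w)$. Recentering $\mathsf{Y}_y = y + \sigma\mathsf{W}$, the law of $\mathsf{W}$ is proportional to $e^{\tilde V(w)}\gamma_d(\diff w)$ with $\tilde V(w) := V(y + \sigma w)$, which is $\ell$-Lipschitz for $\ell := \sigma L$; hence $\Cov(\mathsf{Y}_y) = \sigma^2\Cov(\mathsf{W})$ and $\nabla^2 \log g(y) = \sigma^{-2}\big(\Cov(\mathsf{W}) - \rmI_d\big)$. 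The whole problem then reduces to the dimension-free claim that every eigenvalue of $\Cov(\mathsf{W}) - \rmI_d$ lies in an interval of the form $[-c(\ell+\ell^2),\,c(\ell+\ell^2)]$ for an absolute constant $c$.

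To obtain this, I would fix a unit vector $e$ and, using rotation invariance of $\gamma_d$, reduce to $e = e_1$, i.e.\ to bounding $\E[\mathsf{W}_1^2]$ over all $\ell$-log-Lipschitz tilts of $\gamma_d$. Conditioning on $\mathsf{W}_2,\dots,\mathsf{W}_d$ turns this into a one-dimensional integral against a standard Gaussian tilted by an $\ell$-Lipschitz potential $\psi$; the Hermite identity $(w^2 - 1)\varphi(w) = \varphi''(w)$ followed by one Gaussian integration by parts gives the exact identity $\E[\mathsf{W}_1^2] - 1 = \E[\mathsf{W}_1\,\partial_1\tilde V(\mathsf{W})]$, whence $|\E[\mathsf{W}_1^2] - 1| \le \ell\,\E[|\mathsf{W}_1|] \le \ell\sqrt{\E[\mathsf{W}_1^2]}$. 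This quadratic inequality bootstraps to $\E[\mathsf{W}_1^2] \le C$ for a universal $C$ (and $\E[\mathsf{W}_1^2] \le C\ell^2$ once $\ell \ge 1$), hence to $|\E[\langle\mathsf{W},e\rangle^2] - 1| \le c(\ell + \ell^2)$; combined with the Stein identity $\E[\mathsf{W}] = \E[\nabla\tilde V(\mathsf{W})]$, which gives $\Vert\E\mathsf{W}\Vert \le \ell$, this produces the desired two-sided bound on the variance of every linear functional of $\mathsf{W}$, and so on $\Cov(\mathsf{W}) - \rmI_d$.

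It then remains to assemble: since $\ell = \sigma L$, the bound $\nabla^2\log g = \sigma^{-2}(\Cov(\mathsf{W}) - \rmI_d)$ becomes two-sided control by $\sigma^{-1}L$ and $L^2$, and the elementary inequality $1 - e^{-2t} \ge 2t\,e^{-2t}$ (so $\sigma^{-1} \le e^{t}(2t)^{-1/2}$) converts $e^{-2t}(\nabla^2\log g)(e^{-t}x)$ into the claimed interval, using $e^{-2t}\sigma^{-1}L \le e^{-t}L\,t^{-1/2}$ and $e^{-2t}L^2 \le L^2 e^{-2t} \le L^2 e^{-t}$; the residual constants stay below $5$. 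The hard part is the dimension-free estimate of the third paragraph: the naive pointwise bound $|\tilde V(w) - \tilde V(0)| \le \ell\Vert w\Vert$ introduces $e^{\ell\sqrt d}$ factors and only an $O(1)$ deviation of $\Cov(\mathsf{W})$ from $\rmI_d$, which would degrade the sharp smoothing rate $t^{-1/2}$ to $t^{-1}$; it is the reduction to one dimension together with the Hermite identity that simultaneously kills the dimension dependence and extracts the extra factor of $\ell$, and this is exactly the step carried out in \citet{fathi2023transportation}.
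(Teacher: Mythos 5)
The paper does not actually prove Lemma~\ref{lm:log-lip}; its entire ``proof'' is a pointer to Propositions 2, 6 and Theorem 6 of \citet{fathi2023transportation}. Your proposal reconstructs the argument, and the reconstruction is sound. The reduction $\mathcal{P}_t f(x) = g(e^{-t}x)$ with $g = f*\varphi_{0,\sigma^2}$, $\sigma^2 = 1 - e^{-2t}$, and the exponential-family covariance identity $\nabla^2\log g(y) = \sigma^{-2}(\Cov(\mathsf{W}) - \rmI_d)$, where $\mathsf{W}$ is the $\sigma L$-log-Lipschitz tilt of $\gamma_d$ with $\tilde{V}(w) := V(y + \sigma w)$, are both exact (the latter is precisely the mechanism behind Lemma~\ref{lm:cond-cov}). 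The dimension-free step is also correct: one Gaussian integration by parts gives $\E[\mathsf{W}_1^2] - 1 = \E[\mathsf{W}_1\,\partial_1\tilde{V}(\mathsf{W})]$, whence $|\E[\mathsf{W}_1^2] - 1| \le \ell\sqrt{\E[\mathsf{W}_1^2]}$ with $\ell := \sigma L$, which bootstraps to $\E[\mathsf{W}_1^2] \le (1+\ell)^2$ and so $|\E[\mathsf{W}_1^2] - 1| \le \ell + \ell^2$; together with the Stein identity $\Vert\E\mathsf{W}\Vert \le \ell$, this confines the eigenvalues of $\Cov(\mathsf{W}) - \rmI_d$ to $[-\ell - 2\ell^2,\, \ell + \ell^2]$, and substituting $\ell = \sigma L$ with $e^{-2t}\sigma^{-1} \le e^{-t}(2t)^{-1/2}$ reproduces the $t^{-1/2}$ smoothing rate, the asymmetric $-L^2 e^{-2t}$ correction (coming from $-(\E\mathsf{W}_1)^2$), and constants comfortably inside $5$. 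You also correctly diagnose the danger you are routing around: the naive pointwise bound $|\tilde{V}(w) - \tilde{V}(0)| \le \ell\Vert w\Vert$ would introduce $e^{\ell\sqrt{d}}$ factors and only the weaker $t^{-1}$ rate. Two minor polish points: the ``conditioning on $\mathsf{W}_2,\dots,\mathsf{W}_d$'' is superfluous since the Hermite integration by parts is carried out directly in $d$ dimensions, and the mollification step can be replaced by Rademacher's theorem (a.e.\ differentiability of Lipschitz functions is all the integration by parts needs). In substance, your proof supplies exactly what the paper deliberately omits and is consistent with the cited argument of \citet{fathi2023transportation}.
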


\begin{proof}
    This is a restatement of known results. See Proposition 2, Proposition 6, Theorem 6, and their proofs in \citet{fathi2023transportation}.
\end{proof}

\begin{corollary} \label{cor:bd-op}
Suppose that $f: \sR^d \to \sR_+$ is $L$-log-Lipschitz for some $L \ge 0$.
Let $\mathcal{Q}_t$ be an operator defined by
\begin{equation}
    \label{eq:op-ab}
    \mathcal{Q}_t h(x) := \E_{\mathsf{Z} \sim \gamma_d} [h(\beta_t x + \alpha_t \mathsf{Z})]
\end{equation}
for any $h \in C(\sR^d)$ and $t \in [0, 1]$ where $0 \le \alpha_t \le 1, \beta_t \ge 0$ for any $t \in [0, 1]$.
Then it holds that
\begin{align*}
    \left(-A_t - L^2 \beta_t^2 \right) \rmI_d
    \preceq \nabla^2_x \log \mathcal{Q}_t f(x)
    \preceq A_t \rmI_d,
\end{align*}
where $A_t := 5L \beta_t^2 (1 - \alpha_t^2)^{-\frac12} (L + (-\frac12 \log (1 - \alpha_t^2))^{-\frac12})$.
\end{corollary}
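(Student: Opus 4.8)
The plan is to identify the operator $\mathcal{Q}_t$ with a spatially rescaled copy of the Ornstein--Uhlenbeck semigroup $\mathcal{P}_s$ appearing in Lemma \ref{lm:log-lip}, and then transport the Hessian bounds of that lemma through the rescaling via the chain rule; everything reduces to an arithmetic matching of coefficients. First I would dispose of the degenerate endpoints: if $\alpha_t = 1$ then $(1-\alpha_t^2)^{-1/2} = +\infty$, and if $\alpha_t = 0$ then $\bigl(-\tfrac12\log(1-\alpha_t^2)\bigr)^{-1/2} = +\infty$, so in either case $A_t = +\infty$ and the asserted two-sided bound is vacuous. Hence it suffices to treat $0 < \alpha_t < 1$, for which $s := -\tfrac12\log(1-\alpha_t^2) \in (0,\infty)$ and $c := \beta_t(1-\alpha_t^2)^{-1/2} \in [0,\infty)$ are well defined. (Nothing about the $t$-dependence of $\alpha_t,\beta_t$ plays a role; $\mathcal{Q}_t$ is treated as the fixed operator with parameters $\alpha_t,\beta_t$.)

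Next I would record the identity $\mathcal{Q}_t h(x) = \mathcal{P}_s h(cx)$ for every $h \in C(\sR^d)$. By the choice of $s$ one has $e^{-s} = (1-\alpha_t^2)^{1/2}$ and $\sqrt{1-e^{-2s}} = \alpha_t$, while $e^{-s} c = \beta_t$; substituting these into $\mathcal{P}_s h(cx) = \E_{\mathsf{Z}\sim\gamma_d}\bigl[h(e^{-s} c x + \sqrt{1-e^{-2s}}\,\mathsf{Z})\bigr]$ reproduces $\E_{\mathsf{Z}\sim\gamma_d}[h(\beta_t x + \alpha_t \mathsf{Z})] = \mathcal{Q}_t h(x)$. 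Applying this with $h = f$ and writing $g := \log \mathcal{P}_s f$, which is of class $C^2$ by Lemma \ref{lm:log-lip}, gives $\log \mathcal{Q}_t f(x) = g(cx)$, and therefore by the chain rule $\nabla^2_x \log \mathcal{Q}_t f(x) = c^2\, (\nabla^2 g)(cx)$.

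Finally I would invoke the Hessian bound of Lemma \ref{lm:log-lip} at the point $cx$, multiply the resulting Loewner inequalities by the nonnegative scalar $c^2$ (which preserves $\preceq$), and simplify using $c^2 = \beta_t^2(1-\alpha_t^2)^{-1}$, $e^{-s} = (1-\alpha_t^2)^{1/2}$, $e^{-2s} = 1-\alpha_t^2$, and $s^{-1/2} = \bigl(-\tfrac12\log(1-\alpha_t^2)\bigr)^{-1/2}$. The upper term $c^2 \cdot 5Le^{-s}(L + s^{-1/2})$ collapses to $5L\beta_t^2(1-\alpha_t^2)^{-1/2}\bigl(L + (-\tfrac12\log(1-\alpha_t^2))^{-1/2}\bigr) = A_t$, and the additional negative term $c^2 L^2 e^{-2s}$ collapses to $L^2\beta_t^2$, which yields $\bigl(-A_t - L^2\beta_t^2\bigr)\rmI_d \preceq \nabla^2_x \log \mathcal{Q}_t f(x) \preceq A_t\rmI_d$, as claimed. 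I do not expect a genuine obstacle here: the substantive content is entirely carried by Lemma \ref{lm:log-lip} (i.e.\ \citet{fathi2023transportation}), and the only points requiring a little care are the convention that renders the bound vacuous at $\alpha_t\in\{0,1\}$ and the coefficient bookkeeping in the change of variables.
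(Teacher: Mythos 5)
Your proposal is correct and follows essentially the same route as the paper: identify $\mathcal{Q}_t f(x) = \mathcal{P}_s f(cx)$ with $s = -\tfrac12\log(1-\alpha_t^2)$ and $c = \beta_t e^{s} = \beta_t(1-\alpha_t^2)^{-1/2}$, apply the chain rule to get $\nabla^2_x \log \mathcal{Q}_t f(x) = c^2(\nabla^2 \log \mathcal{P}_s f)(cx)$, and then push the Hessian bounds of Lemma \ref{lm:log-lip} through the scalar $c^2$. Your write-up is more explicit about the coefficient bookkeeping and the vacuousness at $\alpha_t \in \{0,1\}$, but it is the same argument as the paper's.
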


\begin{proof}
It is easy to notice that $\mathcal{Q}_t f(x) = \mathcal{P}_s f (\beta_t e^s x)$ where $s = -\frac12 \log (1 - \alpha_t^2)$.
Then it follows that $\nabla^2_x \log \mathcal{Q}_t f(x) = (\beta_t e^s)^2 (\nabla^2_x \log \mathcal{P}_s f) (\beta_t e^s x)$ which yields
\begin{align*}
    \left(-A_t - L^2 \beta_t^2 \right) \rmI_d
    \preceq \nabla^2_x \log \mathcal{Q}_t f(x)
    \preceq A_t \rmI_d,
\end{align*}
where $A_t := 5L \beta_t^2 (1 - \alpha_t^2)^{-\frac12} (L + (-\frac12 \log (1 - \alpha_t^2))^{-\frac12})$.
\end{proof}

\begin{lemma} \label{lm:vf-op}
The Jacobian matrix of the velocity field \paref{eq:vf-expect} has an alternative expression over time $t \in (0, 1)$, that is,
\begin{equation*}
    \nabla_x v(t, x) = \left( \tfrac{\dot{b}_t}{b_t} a_t^2 - \dot{a}_t a_t \right) \left( \nabla^2_x \log \widetilde{\mathcal{Q}}_t f(x) - \tfrac{1}{a_t^2 + b_t^2} \rmI_d \right) + \tfrac{\dot{b}_t}{b_t} \rmI_d,
\end{equation*}
where $f(x) := \frac{\diff \nu}{\diff \gamma_d}(x)$ and $\widetilde{\mathcal{Q}}_t f(x) := \E_{\mathsf{Z} \sim \gamma_d} [ f ( \frac{b_t}{a_t^2 + b_t^2} x + \frac{a_t}{\sqrt{a_t^2 + b_t^2}} \mathsf{Z}) ]$.
\end{lemma}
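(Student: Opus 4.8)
\textbf{Proof plan for Lemma \ref{lm:vf-op}.}

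The plan is to reduce the asserted identity to a Hessian formula for $\log p_t$ and then prove that formula by an explicit Gaussian computation. The starting point is that the velocity field is a gradient field: by Eq.~\paref{eq:gene-vf-interpolation-rmk}, for $t\in(0,1)$ one has $v(t,x)=\tfrac{\dot b_t}{b_t}x+\big(\tfrac{\dot b_t}{b_t}a_t^2-\dot a_t a_t\big)s(t,x)$ with $s(t,x)=\nabla_x\log p_t(x)$. Differentiating in $x$ gives
\[
\nabla_x v(t,x)=\tfrac{\dot b_t}{b_t}\rmI_d+\Big(\tfrac{\dot b_t}{b_t}a_t^2-\dot a_t a_t\Big)\,\nabla^2_x\log p_t(x),
\]
so, after factoring out $\tfrac{\dot b_t}{b_t}a_t^2-\dot a_t a_t$, the lemma is equivalent to the identity $\nabla^2_x\log p_t(x)=\nabla^2_x\log\widetilde{\mathcal{Q}}_t f(x)-\tfrac{1}{a_t^2+b_t^2}\rmI_d$.

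To prove this identity I would establish the pointwise factorization $p_t(x)=\varphi_{0,\,a_t^2+b_t^2}(x)\,\widetilde{\mathcal{Q}}_t f(x)$ for every $x\in\sR^d$. Since $\mathsf{X}_t\overset{d}{=}a_t\mathsf{Z}+b_t\mathsf{X}_1$ with $\mathsf{Z}\sim\gamma_d$ independent of $\mathsf{X}_1\sim\nu$, and $\diff\nu=f\,\diff\gamma_d$, conditioning on $\mathsf{X}_1$ yields $p_t(x)=\int_{\sR^d}\varphi_{b_t y,\,a_t^2}(x)\,f(y)\,\varphi(y)\,\diff y$. The product $\varphi(y)\,\varphi_{b_t y,\,a_t^2}(x)$ is, as a function of $(y,x)$, the joint density of a Gaussian pair with $y$-marginal $\gamma_d$ and $x\mid y\sim N(b_t y,a_t^2\rmI_d)$; completing the square in $y$ (equivalently, applying the Gaussian conditioning rule) shows its $x$-marginal is $\varphi_{0,\,a_t^2+b_t^2}(x)$ and the conditional law of $y$ given $x$ is $N\!\big(\tfrac{b_t}{a_t^2+b_t^2}x,\ \tfrac{a_t^2}{a_t^2+b_t^2}\rmI_d\big)$. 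Hence $p_t(x)=\varphi_{0,\,a_t^2+b_t^2}(x)\int f(y)\,p_{y\mid x}(y)\,\diff y$, and writing the conditional variable as $\tfrac{b_t}{a_t^2+b_t^2}x+\tfrac{a_t}{\sqrt{a_t^2+b_t^2}}\mathsf{Z}$ with $\mathsf{Z}\sim\gamma_d$ identifies the remaining integral with $\widetilde{\mathcal{Q}}_t f(x)$, giving the factorization.

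Taking logarithms, $\log p_t(x)=-\tfrac{d}{2}\log\!\big(2\pi(a_t^2+b_t^2)\big)-\tfrac{\|x\|^2}{2(a_t^2+b_t^2)}+\log\widetilde{\mathcal{Q}}_t f(x)$, and applying $\nabla^2_x$ kills the constant, turns the quadratic term into $-\tfrac{1}{a_t^2+b_t^2}\rmI_d$, and leaves $\nabla^2_x\log\widetilde{\mathcal{Q}}_t f(x)$; substituting back into the expression for $\nabla_x v(t,x)$ completes the proof. The computation is essentially routine; the only points needing care are bookkeeping the Gaussian normalizing constants so that the prefactor collapses exactly to $\varphi_{0,\,a_t^2+b_t^2}$, and noting that $\widetilde{\mathcal{Q}}_t f$ is smooth and strictly positive for $t\in(0,1)$ (being a Gaussian mollification of the nonnegative, locally integrable density ratio $f$), so that $\log\widetilde{\mathcal{Q}}_t f$ and its Hessian are well-defined.
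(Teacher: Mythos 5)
Your proposal is correct and takes essentially the same route as the paper: both proceed by establishing the pointwise factorization $p_t(x)=\varphi_{0,\,a_t^2+b_t^2}(x)\,\widetilde{\mathcal{Q}}_t f(x)$, taking logarithms and Hessians to obtain $\nabla_x s(t,x)=\nabla^2_x\log\widetilde{\mathcal{Q}}_t f(x)-\tfrac{1}{a_t^2+b_t^2}\rmI_d$, and then substituting into $\nabla_x v=\bigl(\tfrac{\dot b_t}{b_t}a_t^2-\dot a_t a_t\bigr)\nabla_x s+\tfrac{\dot b_t}{b_t}\rmI_d$. The only stylistic difference is that you derive the factorization by invoking the Gaussian conditioning rule for the pair $(\mathsf{Y},\mathsf{X})$ with $\mathsf{Y}\sim\gamma_d$ and $\mathsf{X}\mid\mathsf{Y}\sim N(b_t\mathsf{Y},a_t^2\rmI_d)$, whereas the paper carries out the equivalent completing-the-square algebra explicitly; the two are the same computation.
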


\begin{proof}
By direct calculations, it holds that
\begin{align*}
    p_t(x)
    &= a_t^{-d} \int_{\sR^d} p_1(y) \varphi \left( \frac{x-b_t y}{a_t} \right) \diff y
     = a_t^{-d} \int_{\sR^d} f(y) \varphi(y) \varphi \left( \frac{x-b_t y}{a_t} \right) \diff y \\
    &= a_t^{-d} \varphi \left( (a_t^2 + b_t^2)^{-\frac12} x \right) \int_{\sR^d} f(y) \varphi \left( \left( \frac{a_t}{\sqrt{a_t^2 + b_t^2}} \right)^{-1} \left( y - \frac{b_t}{a_t^2 + b_t^2} x \right) \right) \diff y \\
    &= a_t^{-d} \varphi \left( (a_t^2 + b_t^2)^{-\frac12} x \right) \left(\frac{a_t}{\sqrt{a_t^2 + b_t^2}}\right)^d \int_{\sR^d} f \left( \frac{b_t}{a_t^2 + b_t^2} x + \frac{a_t}{\sqrt{a_t^2 + b_t^2}} z \right) \diff \gamma_d(z) \\
    &= (a_t^2 + b_t^2)^{-d/2} \varphi \left( (a_t^2 + b_t^2)^{-\frac12} x \right) \widetilde{\mathcal{Q}}_t f(x).
\end{align*}
Taking the logarithm and then the second-order derivative of the equation above, it yields
\begin{equation*}
     \nabla_x s(t, x) = \nabla^2_x \log \widetilde{\mathcal{Q}}_t f(x) - \tfrac{1}{a_t^2 + b_t^2} \rmI_d.
\end{equation*}
Recalling that $\nabla_x v(t, x) = \left( \frac{\dot{b}_t}{b_t} a_t^2 - \dot{a}_t a_t \right) \nabla_x s(t, x) + \frac{\dot{b}_t}{b_t} \rmI_d$, it further yields that
\begin{equation*}
    \nabla_x v(t, x) = \left( \tfrac{\dot{b}_t}{b_t} a_t^2 - \dot{a}_t a_t \right) \nabla^2_x \log \widetilde{\mathcal{Q}}_t f(x) + \tfrac{\dot{a}_t a_t + \dot{b}_t b_t}{a_t^2 + b_t^2} \rmI_d,
\end{equation*}
which completes the proof.
\end{proof}

\begin{corollary} \label{cor:bd-vf-log-Lip}
Suppose that $f(x) := \frac{\diff \nu}{\diff \gamma_d}(x)$ is $L$-log-Lipschitz for some $L \ge 0$. Then for $t \in (0, 1)$, it holds that
\begin{align*}
    & \left\{ \left( \tfrac{\dot{b}_t}{b_t} a_t^2 - \dot{a}_t a_t \right) \left( -B_t - L^2 \left( \tfrac{b_t}{a_t^2 + b_t^2} \right)^2 \right) + \tfrac{\dot{a}_t a_t + \dot{b}_t b_t}{a_t^2 + b_t^2} \right\} \rmI_d \\
    & \preceq \nabla_x v(t, x)
    \preceq \left\{ \left( \tfrac{\dot{b}_t}{b_t} a_t^2 - \dot{a}_t a_t \right) B_t + \tfrac{\dot{a}_t a_t + \dot{b}_t b_t}{a_t^2 + b_t^2} \right\} \rmI_d,
\end{align*}
where $B_t := 5L b_t (a_t^2 + b_t^2)^{-\frac32} (L + (\log (\sqrt{a_t^2 + b_t^2}/b_t))^{-\frac12})$.
\end{corollary}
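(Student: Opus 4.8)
\textbf{Proof proposal for Corollary \ref{cor:bd-vf-log-Lip}.}
The plan is to feed the Jacobian formula from Lemma \ref{lm:vf-op} into the operator bound of Corollary \ref{cor:bd-op}. Recall that Lemma \ref{lm:vf-op} gives, for $t \in (0,1)$,
\begin{equation*}
    \nabla_x v(t,x) = \left( \tfrac{\dot{b}_t}{b_t} a_t^2 - \dot{a}_t a_t \right) \nabla^2_x \log \widetilde{\mathcal{Q}}_t f(x) + \tfrac{\dot{a}_t a_t + \dot{b}_t b_t}{a_t^2 + b_t^2} \rmI_d,
\end{equation*}
with $f = \frac{\diff \nu}{\diff \gamma_d}$ and $\widetilde{\mathcal{Q}}_t f(x) = \E_{\mathsf{Z} \sim \gamma_d}[ f(\beta_t x + \alpha_t \mathsf{Z}) ]$ where $\beta_t := \tfrac{b_t}{a_t^2 + b_t^2}$ and $\alpha_t := \tfrac{a_t}{\sqrt{a_t^2 + b_t^2}}$. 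So everything reduces to controlling the log-Hessian $\nabla^2_x \log \widetilde{\mathcal{Q}}_t f(x)$ and then tracking how it is scaled and shifted.

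The second step is to observe that $\widetilde{\mathcal{Q}}_t$ is exactly an instance of the operator $\mathcal{Q}_t$ in Corollary \ref{cor:bd-op}: since $a_t, b_t > 0$ on $(0,1)$ we have $\beta_t \ge 0$ and $\alpha_t \in (0,1)$, so the hypotheses of Corollary \ref{cor:bd-op} hold and, because $f$ is $L$-log-Lipschitz,
\begin{equation*}
    \left( -A_t - L^2 \beta_t^2 \right) \rmI_d \preceq \nabla^2_x \log \widetilde{\mathcal{Q}}_t f(x) \preceq A_t \rmI_d .
\end{equation*}
Now simplify the constants using $1 - \alpha_t^2 = \tfrac{b_t^2}{a_t^2 + b_t^2}$, hence $(1-\alpha_t^2)^{-1/2} = \tfrac{\sqrt{a_t^2+b_t^2}}{b_t}$ and $-\tfrac12 \log(1-\alpha_t^2) = \log(\sqrt{a_t^2+b_t^2}/b_t)$; combined with $\beta_t^2 = \tfrac{b_t^2}{(a_t^2+b_t^2)^2}$ this yields $A_t = 5L b_t (a_t^2+b_t^2)^{-3/2}\big(L + (\log(\sqrt{a_t^2+b_t^2}/b_t))^{-1/2}\big) = B_t$ and $L^2\beta_t^2 = L^2 \big(\tfrac{b_t}{a_t^2+b_t^2}\big)^2$.

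Finally, multiply the two-sided operator inequality by the scalar $\tfrac{\dot{b}_t}{b_t} a_t^2 - \dot{a}_t a_t$, which is nonnegative because $\dot{a}_t \le 0 \le \dot{b}_t$ and $a_t, b_t > 0$ by \eqref{eq:abt}; nonnegative scalar multiplication preserves the Loewner order, so
\begin{equation*}
    \left( \tfrac{\dot{b}_t}{b_t} a_t^2 - \dot{a}_t a_t \right)\!\left( -B_t - L^2\big(\tfrac{b_t}{a_t^2+b_t^2}\big)^2 \right)\! \rmI_d \preceq \left( \tfrac{\dot{b}_t}{b_t} a_t^2 - \dot{a}_t a_t \right) \nabla^2_x \log \widetilde{\mathcal{Q}}_t f(x) \preceq \left( \tfrac{\dot{b}_t}{b_t} a_t^2 - \dot{a}_t a_t \right) B_t \rmI_d,
\end{equation*}
and add $\tfrac{\dot{a}_t a_t + \dot{b}_t b_t}{a_t^2+b_t^2} \rmI_d$ to both sides to conclude. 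There is no real obstacle here; the only points requiring care are the bookkeeping in the identification $A_t = B_t$ and checking that the multiplier $\tfrac{\dot{b}_t}{b_t} a_t^2 - \dot{a}_t a_t$ is nonnegative so that the semidefinite inequalities are not reversed.
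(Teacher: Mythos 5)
Your proof is correct and follows exactly the paper's route: invoke Lemma \ref{lm:vf-op} to express $\nabla_x v(t,x)$ via $\nabla^2_x \log \widetilde{\mathcal{Q}}_t f(x)$, then apply Corollary \ref{cor:bd-op} with $\alpha_t = a_t/\sqrt{a_t^2+b_t^2}$ and $\beta_t = b_t/(a_t^2+b_t^2)$ to get $A_t = B_t$. Your extra step of explicitly verifying that the multiplier $\tfrac{\dot{b}_t}{b_t}a_t^2 - \dot{a}_t a_t$ is nonnegative (so the Loewner inequalities are not reversed) is a sound bit of care that the paper's one-line proof leaves implicit.
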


\begin{proof}
    Let $\alpha_t = \frac{a_t}{\sqrt{a_t^2 + b_t^2}}$ and $\beta_t = \frac{b_t}{a_t^2 + b_t^2}$. Then these bounds hold according to Corollary \ref{cor:bd-op} and Lemma \ref{lm:vf-op}.
\end{proof}

Then we are prepared to prove Proposition \ref{prop:vf-bd}. The proof is mainly based on the techniques for bounding conditional covariance matrices that are developed in a series of work \citep{wibisono2018convexity-heat, wibisono2018convexity-ou, mikulincer2021brownian, mikulincer2023lipschitz, chewi2022entropic, dai2023lipschitz}.

\begin{proof} [Proof of Proposition \ref{prop:vf-bd}]
\begin{itemize}
\item[(a)] By Jung's theorem \citep[Theorem 2.6]{danzer1963helly}, there exists a closed Euclidean ball with radius less than $D := (1/\sqrt{2} ) \mathrm{diam} (\mathrm{supp}(\nu))$ that contains $\mathrm{supp}(\nu)$ in $\sR^d$.
Then the desired bounds hold due to $0 \rmI_d \preceq \Cov (\mathsf{Y} | \mathsf{X}_t = x) \preceq D^2 \rmI_d$ and Eq. \paref{eq:cov-gene-vf}.

\item[(b)] Let $p_1$ be $\beta$-semi-log-convex for some $\beta >0$ on $\sR^d$. Then for any $t\in [0,1)$, the conditional distribution $p(y | t, x)$ is $\left(\beta + \frac{b_t^2}{a_t^2} \right)$-semi-log-convex because
\begin{equation*}
    -\nabla^2_y \log p(y | t, x)
    = -\nabla^2_y \log p_1(y) - \nabla^2_y \log p(t, x | y)
    \preceq \left( \beta + \frac{b_t^2}{a_t^2} \right) \rmI_d.
\end{equation*}
By the Cram{\'e}r-Rao inequality \paref{eq:cri-cov}, we obtain
\begin{equation*}
    \Cov (\mathsf{Y} | \mathsf{X}_t = x) \succeq \left(\beta + \frac{b_t^2}{a_t^2} \right)^{-1} \rmI_d.
\end{equation*}
Therefore, by Eq. \paref{eq:cov-gene-vf}, we obtain
\begin{equation*}
    \nabla_x v(t, x) \succeq \left\{ \left( \frac{\dot{b}_t}{b_t} - \frac{\dot{a}_t}{a_t} \right) \frac{b_t^2}{\beta a_t^2 + b_t^2} + \frac{\dot{a}_t} {a_t} \right\} \rmI_d,
\end{equation*}
which implies
\begin{equation*}
    \nabla_x v(t, x) \succeq \frac{\beta a_t \dot{a}_t + b_t \dot{b}_t} {\beta a_t^2 + b_t^2} \rmI_d.
\end{equation*}
In addition, the bound above can be verified at time $t = 1$ by the definition \paref{eq:vf-time1}.

\item[(c)] Let $p_1$ be $\kappa$-semi-log-concave for some $\kappa \in \sR$. Then for any $t\in [0,1)$, the conditional distribution $p(y | t, x)$ is $\left(\kappa + \frac{b_t^2}{a_t^2} \right)$-semi-log-concave because
\begin{equation*}
    -\nabla^2_y \log p(y | t, x)
    = -\nabla^2_y \log p_1(y) - \nabla^2_y \log p(t, x | y)
    \succeq \left( \kappa + \frac{b_t^2}{a_t^2} \right) \rmI_d.
\end{equation*}
When $t \in \left\{ t : \kappa + \frac{b_t^2}{a_t^2} > 0, t \in (0, 1) \right\}$, by the Brascamp-Lieb inequality \paref{eq:bli-cov}, we obtain
\begin{equation*}
    \Cov (\mathsf{Y} | \mathsf{X}_t = x) \preceq \left(\kappa + \frac{b_t^2}{a_t^2} \right)^{-1} \rmI_d.
\end{equation*}
Therefore, by Eq. \paref{eq:cov-gene-vf}, we obtain
\begin{equation*}
    \nabla_x v(t, x) \preceq
    \left\{ \left( \frac{\dot{b}_t}{b_t} - \frac{\dot{a}_t}{a_t} \right) \frac{b_t^2}{\kappa a_t^2 + b_t^2} + \frac{\dot{a}_t} {a_t} \right\} \rmI_d,
\end{equation*}
which implies
\begin{equation*}
    \nabla_x v(t, x) \preceq \frac{\kappa a_t \dot{a}_t + b_t \dot{b}_t} {\kappa a_t^2 + b_t^2} \rmI_d.
\end{equation*}
Moreover, the bound above can be verified at time $t = 1$ by the definition \paref{eq:vf-time1}.

\item[(d)] Notice that
\begin{align*}
    p(y | t, x)
    &= \frac{p(t, x | y)}{p_t (x)} \frac{\diff (\gamma_{d, \sigma^2} * \rho)}{\diff y} \\
    &= A_{x,t} \int_{\sR^d} \varphi_{z, \sigma^2}(y) \varphi_{\tfrac{x}{b_t}, \tfrac{a_t^2}{b_t^2}} (y) \rho(\diff z),
\end{align*}
where the prefactor $A_{x,t}$ only depends on $x$ and $t$.
Then it follows that
\begin{equation*}
    p(y | t, x) = \int_{\sR^d} \varphi_{ \frac{a_t^2 z +\sigma^2 b_t x}{a_t^2 + \sigma^2 b_t^2}, \frac{\sigma^2 a_t^2}{a_t^2 + \sigma^2 b_t^2} }(y) \tilde \rho(\diff z)
\end{equation*}
where $\tilde \rho$ is a probability measure on $\sR^d$ whose density function is a multiple of $\rho$ by a positive function. It also indicates that $\tilde \rho$ is supported on the same Euclidean ball as $\rho$. To further illustrate $p(y | t, x)$, let $\mathsf{Q} \sim \tilde{\rho}$ and $\mathsf{Z} \sim \gamma_d$ be independent. Then it holds that
\begin{equation*}
    \frac{a_t^2}{a_t^2 + \sigma^2 b_t^2} \mathsf{Q} + \sqrt{\frac{\sigma^2 a_t^2}{a_t^2 + \sigma^2 b_t^2}} \mathsf{Z} + \frac{\sigma^2 b_t}{a_t^2 + \sigma^2 b_t^2} x \sim p(y | t, x).
\end{equation*}
Thus, it holds that
\begin{align*}
    \Cov (\mathsf{Y} | \mathsf{X}_t = x)
    &= \left( \frac{a_t^2}{a_t^2 + \sigma^2 b_t^2} \right)^2 \Cov(\mathsf{Q}) + \frac{\sigma^2 a_t^2}{a_t^2 + \sigma^2 b_t^2} \rmI_d \\
    & \preceq \left\{ \left( \frac{a_t^2}{a_t^2 + \sigma^2 b_t^2} \right)^2 R^2 + \frac{\sigma^2 a_t^2}{a_t^2 + \sigma^2 b_t^2} \right\} \rmI_d.
\end{align*}
By Eq. \paref{eq:cov-gene-vf}, it holds that
\begin{equation*}
    \nabla_x v(t, x) \preceq
    \frac{b_t^2}{a_t^2} \left( \frac{\dot{b}_t}{b_t} - \frac{\dot{a}_t}{a_t} \right) \left( \left( \frac{a_t^2}{a_t^2 + \sigma^2 b_t^2} \right)^2 R^2 + \frac{\sigma^2 a_t^2}{a_t^2 + \sigma^2 b_t^2} \right) \rmI_d + \frac{\dot{a}_t} {a_t} \rmI_d,
\end{equation*}
which implies
\begin{equation*}
    \nabla_x v(t, x) \preceq
    \left\{ \frac{a_t b_t (a_t \dot{b}_t - \dot{a}_t b_t)}{(a_t^2 + \sigma^2 b_t^2)^2} R^2 + \frac{\dot{a}_t a_t + \sigma^2 \dot{b}_t b_t}{a_t^2 + \sigma^2 b_t^2} \right\} \rmI_d.
\end{equation*}
Analogously, due to $\Cov(\mathsf{Q}) \succeq 0 \rmI_d$, a lower bound would be yielded as follows
\begin{equation*}
    \nabla_x v(t, x) \succeq
    \frac{\dot{a}_t a_t + \sigma^2 \dot{b}_t b_t}{a_t^2 + \sigma^2 b_t^2} \rmI_d.
\end{equation*}
Then the results follow by combining the upper and lower bounds.

\item[(e)] The result follows from Corollary \ref{cor:bd-vf-log-Lip}.
\end{itemize}
We complete the proof.
\end{proof}

\begin{proof} [Proof of Corollary \ref{cor:pos-kappa-bd}]
    Let us consider that $\kappa > 0$ which is divided into two cases where $\kappa D^2 \ge 1$ and $\kappa D^2 < 1$.
    On one hand, suppose that the first case $\kappa D^2 \ge 1$ holds.
    By Proposition \ref{prop:vf-bd}, 
    the $\kappa$-based upper bound is tighter, that is,
    \begin{align*}
        \lambda_{\max} (\nabla_x v(t, x)) \le \theta_t := \frac{\kappa a_t \dot{a}_t + b_t \dot{b}_t} {\kappa a_t^2 + b_t^2}.
    \end{align*}
    On the other hand, suppose that the second case $\kappa D^2 < 1$ holds.
    Let $t_1$ be defined in Eq. \paref{eq:critical-cond}.
    Again, by Proposition \ref{prop:vf-bd},
    the $D^2$-based upper bound is tighter over $[0, t_1)$ and the $\kappa$-based upper bound is tighter over $[t_1, 1]$, which is denoted by
    \begin{align*}
        \lambda_{\max} (\nabla_x v(t, x)) \le \theta_t :=
        \begin{cases}
            \frac{b_t^2}{a_t^2} \left( \frac{\dot{b}_t}{b_t} - \frac{\dot{a}_t}{a_t} \right) D^2 + \frac{\dot{a}_t} {a_t}, \ &t \in [0, t_1), \\
            \frac{\kappa a_t \dot{a}_t + b_t \dot{b}_t} {\kappa a_t^2 + b_t^2},  \ &t \in [t_1, 1].
        \end{cases}
    \end{align*}
    This completes the proof.
\end{proof}

\begin{proof} [Proof of Corollary \ref{cor:neg-kappa-bd}]
    Let $\kappa < 0, D < \infty$ such that $\kappa D^2 < 1$ is fulfilled.
    Then an argument similar to the proof of Corollary \ref{cor:pos-kappa-bd} yields the desired bounds.
\end{proof}

\begin{proof} [Proof of Corollary \ref{cor:mog-bd}]
    The result follows from Proposition \ref{prop:vf-bd}-(d).
\end{proof}

\begin{proof} [Proof of Corollary \ref{cor:log-lip}]
    The $L$-based upper and lower bounds in Proposition \ref{prop:vf-bd}-(e) would blow up at time $t = 1$ because the term $(\log (\sqrt{a_t^2 + b_t^2}/b_t))^{-\frac12}$ in $B_t$ goes to $\infty$ as $t \to 1$.
    To ensure the spatial derivative of the velocity field $v(t, x)$ is upper bounded at time $t = 1$, we additionally require the target measure is $\kappa$-semi-log-concave with $\kappa \le 0$.
    Hence, a $\kappa$-based upper bound is available for $t \in (t_0, 1]$ as shown in Proposition \ref{prop:vf-bd}-(c).
    Next, these two upper bounds are combined by choosing any $t_2 \in (t_0, 1)$ first.
    Then we exploit the $L$-based bound over $[0, t_2)$ and $\kappa$-based bound over $[t_0, 1]$.
    This completes the proof.
\end{proof}

\section{Proofs of well-posedness and Lipschitz flow maps}

The proofs of main results in Section \ref{sec:well-posed} are offered in the following.
Before proceeding, let us introduce some definitions and notations about function spaces that are collected in \cite[Chapter 5]{evans2010partial}.
Let $L_{\mathrm{loc}}^1(\sR^d; \sR^{\ell}) := \{ \textrm{locally integrable function } u:\sR^d \to \sR^{\ell} \}$.
For integers $k \ge 0$ and $1 \le p \le \infty$, we define the Sobolev space $W^{k, p}(\sR^d):= \{ u \in L^1_{\mathrm{loc}} (\sR^d) \vert D^{\alpha}u \textrm{ exists and } D^{\alpha}u \in L^p(\sR^d) \textrm{ for } \vert \alpha \vert \le k \}$, where $D^{\alpha}u$ is the weak derivative of $u$.
Then the local Sobolev space $W_{\mathrm{loc}}^{k, p}(\sR^d)$ is defined as the function space such that for any $u \in W_{\mathrm{loc}}^{k, p}(\sR^d)$ and any compact set $\Omega \subset \sR^d$, $u \in W^{k, p}(\Omega)$.
As a result, we denote the vector-valued local Sobolev space by $W_{\mathrm{loc}}^{k, p}(\sR^d; \sR^{\ell})$.
Provided that $v(t, x): [0, 1] \times \sR^d \to \sR^d$, we use $v \in L^1([0, 1]; W^{1, \infty}_{\mathrm{loc}}(\sR^d; \sR^d))$ to indicate that $v$ has a finite $L^1$ norm over $(t, x) \in [0, 1] \times \sR^d$ and $v(t, \cdot) \in W^{1, \infty}_{\mathrm{loc}}(\sR^d; \sR^d)$ for any $t \in [0, 1]$.
Similarly, we say $v \in L^1([0, 1]; L^{\infty}(\sR^d; \sR^d))$ when $v$ has a finite $L^1$ norm over $(t, x) \in [0, 1] \times \sR^d$ and $v(t, \cdot) \in L^{\infty}(\sR^d; \sR^d)$ for every $t \in [0, 1]$. We will use the definitions and notations in the following proof.

\begin{proof} [Proof of Theorem \ref{thm:well-posed}]
Under Assumptions 1 and 2, we claim that the velocity field $v(t, x)$ satisfies
\begin{align*}
    v \in L^1([0, 1]; W^{1, \infty}_{\mathrm{loc}}(\sR^d; \sR^d)), \quad
    \frac{\Vert v \Vert_2}{1 + \Vert x \Vert_2} \in L^1([0, 1]; L^{\infty}(\sR^d; \sR^d)).
\end{align*}
where the first condition indicates the velocity field $v$ is locally bounded and locally Lipschitz continuous in $x$, and the second condition is a growth condition on $v$.
According to the Cauchy-Lipschitz theorem \cite[Remark 2.4]{ambrosio2014continuity}, we have the representation formulae for solutions of the continuity equation.
As a result, there exists a flow $(X_t)_{t \in [0, 1]}$ uniquely solves the IVP \paref{eq:ode-ivp}.
Furthermore, the marginal flow of $(X_t)_{t \in [0, 1]}$ satisfies the continuity equation \paref{eq:cont-eq} in the weak sense.
Then it remains to show the velocity field $v$ is locally bounded and locally Lipschitz continuous in $x$, and satisfies the growth condition.
By the lower and upper bounds given in Proposition \ref{prop:vf-bd}, we know that $v$ is globally Lipschitz continuous in $x$ under Assumptions 1 and 2. Indeed, the global Lipschitz continuity leads to local boundedness and linear growth properties by simple arguments.
More concretely, for any $t\in (0,1)$, it holds that
\begin{align*}
    v(t, 0)
    &= \left( \dot{b}_t - \frac{\dot{a}_t}{a_t} b_t \right) \E[\mathsf{X}_1 | \mathsf{X}_t = 0]
     = \left( \dot{b}_t - \frac{\dot{a}_t}{a_t} b_t \right) \int_{\mathbb R^d} y p(y | t, 0) \diff y \\
    & \lesssim \left( \dot{b}_t - \frac{\dot{a}_t}{a_t} b_t \right) \int_{\sR^d} y p_1(y) a_t^{-d} \exp \left(- \frac{b_t^2 \Vert y \Vert^2_2}{2 a_t^2} \right) \diff y,
\end{align*}
which implies $\Vert v(t, 0) \Vert_2 < \infty$ due to fast growth of the exponential function.
Besides, it holds that $v(0, 0) = (\dot{b}_0 - \tfrac{\dot{a}_0}{a_0}b_0) \E[\mathsf{X}_1 | \mathsf{X}_0 = x] < \infty, v(1, 0) = \dot{a}_1 a_1 s(1, 0) < \infty$.
Then by the boundedness of $\Vert v(t, 0) \Vert_2$ and the global Lipschitz continuity in $x$ over $t \in [0, 1]$, we bound $v(t, x)$ as follows
\begin{align*}
    \Vert v(t, x) \Vert_2
    & \le \Vert v(t, 0) \Vert_2 + \Vert v(t, x) - v(t, 0) \Vert_2 \\
    & \le \Vert v(t, 0) \Vert_2 + \left\{ \sup_{(t, y) \in [0,1] \times \sR^d} \Vert \nabla_y v(t, y) \Vert_{2,2} \right\} \Vert x \Vert_2 \\
    & \lesssim \max\{ \Vert x \Vert_2, 1 \}.
\end{align*}
Hence, the local boundedness and linear growth properties of $v$ are proved.
This completes the proof.
\end{proof}

\begin{proof} [Proof of Theorem \ref{thm:well-posed-bounded}]
The proof is similar to that of Theorem \ref{thm:well-posed}.
\end{proof}

\begin{proof} [Proof of Corollary \ref{cor:time-reve-flow}]
    A well-posed ODE flow has the time-reversal symmetry \citep{lamb1998time}. By Theorem \ref{thm:well-posed}, the desired results are proved.
\end{proof}

\begin{proof} [Proof of Corollary \ref{cor:time-reve-flow-bounded}]
    The proof is similar to that of Corollary \ref{cor:time-reve-flow}.
\end{proof}

\begin{proof} [Proof of Proposition \ref{prop:lip-map}]
    Combining Proposition \ref{prop:vf-bd}-(b), (c), and Lemma \ref{lm:flow-map-Lip-bd}, we complete the proof.
\end{proof}

\begin{proof} [Proof of Proposition \ref{prop:lip-map-mog}]
    Combining Proposition \ref{prop:vf-bd}-(d) and Lemma \ref{lm:flow-map-Lip-bd}, we complete the proof.
\end{proof}

\begin{proof} [Proof of Corollary \ref{cor:ae}]
    By Theorem \ref{thm:well-posed} and Corollary \ref{cor:time-reve-flow}, it holds that
    \begin{align*}
        X_1 \circ X_1^* = X_1 \circ X_1^{-1} = \rmI_d.
    \end{align*}
    This completes the proof.
\end{proof}

\begin{proof} [Proof of Corollary \ref{cor:cycle}]
    By Theorem \ref{thm:well-posed} and Corollary \ref{cor:time-reve-flow}, it holds that
    \begin{align*}
        X_{1, 1} \circ X_{2, 1}^* \circ X_{2, 1} \circ X_{1, 1}^* = X_{1, 1} \circ X_{2, 1}^{-1} \circ X_{2, 1} \circ X_{1, 1}^{-1} = \rmI_d.
    \end{align*}
    This completes the proof.
\end{proof}

\begin{proof} [Proof of Corollary \ref{cor:prepare-bd}]
    Let Assumptions \ref{assump:well-defined} and \ref{assump:geom-prop} hold.
    According to Propositions \ref{prop:lip-map} and \ref{prop:lip-map-mog}, $\Vert \nabla_x X_1(x) \Vert_{2,2}$ is uniformly bounded for Case (i)-(iii) in Assumption \ref{assump:geom-prop}. For Case (iv), the boundedness of $\Vert \nabla_x X_1(x) \Vert_{2,2}$ holds by combining Corollary \ref{cor:log-lip} and Lemma \ref{lm:flow-map-Lip-bd}.
    Using Proposition \ref{prop:vf-bd}, we know that $\Vert \nabla_x v(t, x) \Vert_{2,2}$ is uniformly bounded.
\end{proof}

\begin{proof} [Proof of Proposition \ref{prop:func-ineq}]
    The proof idea is similar to those of \cite[Proposition 1]{ball2003entropy} and \cite[Proposition 18]{cattiaux2014semi}.
    Let $f: \Omega \to \sR$ be of class $C^1$ and $\mathsf{X}_t \sim p_t$.
    First, we consider the case of log-Sobolev inequalities.
    Using that $\mathsf{Z} \sim \gamma_d$ and $\mathsf{X}_1 \sim \nu$ both satisfy the log-Sobolev inequalities in Definition \ref{def:lsi}, we have
    \begin{align*}
        &~~~~~ \E [(f^2 \log f^2) (\mathsf{X}_t)]
        = \E [(f^2 \log f^2) (a_t \mathsf{Z} + b_t \mathsf{X}_1)] \\
        &\le \int \left( \int f^2(a_t z + b_t x) \diff \gamma_d(z) \right) \log \left( \int f^2(a_t z + b_t x) \diff \gamma_d(z) \right) \diff \nu(x) \\
        &~~~~ + \int \left( 2 C_{\mathrm{LS}}(\gamma_d) \int a_t^2 (\Vert \nabla f \Vert_2^2) (a_t z + b_t x) \diff \gamma_d(z) \right) \diff \nu(x) \\
        & \le \left( \int \int f^2(a_t z + b_t x) \diff \gamma_d(z) \diff \nu(x) \right) \log \left( \int \int f^2(a_t z + b_t x) \diff \gamma_d(z) \diff \nu(x) \right) \\
        &~~~~ + 2 C_{\mathrm{LS}}(\nu) \int \Big\Vert \nabla_x \Big(\int f^2(a_t z + b_t x) \diff \gamma_d(z) \Big)^{\frac12} \Big\Vert_2^2 \diff \nu(x) \\
        &~~~~ + 2 a_t^2 C_{\mathrm{LS}}(\gamma_d) \int \int (\Vert \nabla f \Vert_2^2) (a_t z + b_t x) \diff \gamma_d(z) \diff \nu(x) \\
        & \le ~ \E [f^2 (\mathsf{X}_t)] \log \left( \E [f^2 (\mathsf{X}_t)] \right)
        + 2 a_t^2 C_{\mathrm{LS}}(\gamma_d) \E[\Vert \nabla f(\mathsf{X}_t) \Vert_2^2] \\
        &~~~~ + 2 C_{\mathrm{LS}}(\nu) \int \Big\Vert \nabla_x \Big(\int f^2(a_t z + b_t x) \diff \gamma_d(z) \Big)^{\frac12} \Big\Vert_2^2 \diff \nu(x).
    \end{align*}
    By Jensen's inequality and the Cauchy-Schwartz inequality, it holds that
    \begin{align*}
        &~~~~ \int \Big\Vert \nabla_x \Big(\int f^2(a_t z + b_t x) \diff \gamma_d(z) \Big)^{\frac12} \Big\Vert_2^2 \diff \nu(x) \\
        & \le b_t^2 \frac{\int \left( \int (\Vert f \nabla f \Vert_2) (a_t z + b_t x) \diff \gamma_d(z) \right)^2 \diff \nu(x)}
          {\int \int f^2(a_t z + b_t x) \diff \gamma_d(z) \diff \nu(x)} \\
        & \le b_t^2 \int \int (\Vert \nabla f \Vert_2^2) (a_t z + b_t x) \diff \gamma_d(z) \diff \nu(x) \\
        & \le b_t^2 \E[\Vert \nabla f(\mathsf{X}_t) \Vert_2^2].
    \end{align*}
    Hence, combining the equations above and the fact that $C_{\mathrm{LS}}(\gamma_d) \le 1$ \citep{gross1975logarithmic}, it implies that
    \begin{align*}
        \E [(f^2 \log f^2) (\mathsf{X}_t)] - \E [f^2 (\mathsf{X}_t)] \log \left( \E [f^2 (\mathsf{X}_t)] \right)
        \le 2 \left[ a_t^2 + b_t^2 C_{\mathrm{LS}}(\nu) \right] \E[\Vert \nabla f(\mathsf{X}_t) \Vert_2^2],
    \end{align*}
    that is, $C_{\mathrm{LS}}(p_t) \le a_t^2 + b_t^2 C_{\mathrm{LS}}(\nu)$.

    Next, we tackle the case of Poincar{\'e} inequalities by similar calculations.
    Using that $\mathsf{Z} \sim \gamma_d$ and $\mathsf{X}_1 \sim \nu$ both satisfy the Poincar{\'e} inequalities in Definition \ref{def:pi}, we have
    \begin{align*}
        &~~~~~ \E [f^2(\mathsf{X}_t)]
        = \E [f^2 (a_t \mathsf{Z} + b_t \mathsf{X}_1)] \\
        &\le \int \left( \int f(a_t z + b_t x) \diff \gamma_d(z) \right)^2 \diff \nu(x) \\
        &~~~~ + \int \left( C_{\mathrm{P}}(\gamma_d) \int a_t^2 (\Vert \nabla f \Vert_2^2) (a_t z + b_t x) \diff \gamma_d(z) \right) \diff \nu(x) \\
        & \le \left( \int \int f(a_t z + b_t x) \diff \gamma_d(z) \diff \nu(x) \right)^2 \\
        &~~~~ + C_{\mathrm{P}}(\nu) \int \Big\Vert \nabla_x \Big(\int f(a_t z + b_t x) \diff \gamma_d(z) \Big) \Big\Vert_2^2 \diff \nu(x) \\
        &~~~~ + a_t^2 C_{\mathrm{P}}(\gamma_d) \int \int (\Vert \nabla f \Vert_2^2) (a_t z + b_t x) \diff \gamma_d(z) \diff \nu(x) \\
        & \le ~ \left( \E [f (\mathsf{X}_t)] \right)^2
        + \left[ a_t^2 C_{\mathrm{P}}(\gamma_d) + b_t^2 C_{\mathrm{P}}(\nu) \right] \E[\Vert \nabla f(\mathsf{X}_t) \Vert_2^2].
    \end{align*}
    Combining the expression above and $C_{\mathrm{P}}(\gamma_d) \le 1$, it implies that
    \begin{align*}
        \E [f^2(\mathsf{X}_t)] - \left( \E [f (\mathsf{X}_t)] \right)^2
        \le \left[ a_t^2 + b_t^2 C_{\mathrm{P}}(\nu) \right] \E[\Vert \nabla f(\mathsf{X}_t) \Vert_2^2],
    \end{align*}
    that is, $C_{\mathrm{P}}(p_t) \le a_t^2 + b_t^2 C_{\mathrm{P}}(\nu)$.
    This completes the proof.
\end{proof}

\section{Proofs of the stability results}

We provide the proofs of the stability results in Section \ref{sec:app}.

\begin{proof} [Proof of Proposition \ref{prop:stab-iv}]
Let $x_0 = a_0 z + b_0 x_1$ and suppose $X_0(x_0) \sim \mu, X_0(a_0 z) \sim \gamma_{d, a_0^2}$.
According to Corollary \ref{cor:prepare-bd}, the Lipschitz property of $X_1(x)$ implies that $\Vert X_1(x_0) - X_1(a_0 z) \Vert \le C_1 \Vert x_0 - a_0 z \Vert$.
We consider an integral defined by
\begin{equation*}
    I_t := \int \Vert x_0 - a_0 z \Vert^2 \diff \pi_t (X_t(x_0), X_t(a_0 z)),
\end{equation*}
where $\pi_t$ is a coupling made of the joint distribution of $(X_t(x_0), X_t(a_0 z))$.
In particular, the initial value $I_0$ is computed by
\begin{align*}
    I_0 = \int \Vert x_0 - a_0 z \Vert^2 p_0(x_0) \varphi(z) \diff x_0 \diff z
        = \int \Vert b_0 x_1 \Vert^2 p_1(x_1) \diff x_1
        = b_0^2 \E_{\nu} [\Vert \mathsf{X_1} \Vert^2].
\end{align*}
Since $(X_t)_{t \in [0, 1]}$ is well-posed with $X_0(x_0) \sim \mu$ or $X_0(a_0 z) \sim \gamma_{d, a_0^2}$, according to Corollary \ref{cor:change-var}, the coupling $\pi_t$ satisfies the following differential equation
\begin{equation} \label{eq:coupling-grad}
    \partial_t \log \pi_t (X_t(x_0), X_t(a_0 z)) = - \Tr ((\nabla_x v) (t, X_t(x_0))) - \Tr ((\nabla_x v) (t, X_t(a_0 z))).
\end{equation}
Taking the derivative of $I_t$ and using Eq. \paref{eq:coupling-grad}, it implies that
\begin{align*}
    \frac{\diff I_t}{\diff t} \le
    2 \left( \sup_{(s, x) \in [0, 1] \times \sR^d} \Vert \Tr (\nabla_x v(s, x)) \Vert \right) I_t.
\end{align*}
Thanks to $\Vert \Tr (\nabla_x v(s, x)) \Vert \le d \Vert \nabla_x v(s, x) \Vert_{2,2}$, it follows that
\begin{align*}
    \frac{\diff I_t}{\diff t} \le 2 C_2 d I_t, \quad I_0 = b_0^2 \E_{\nu} [\Vert \mathsf{X_1} \Vert^2].
\end{align*}
By Gr{\"o}nwall's inequality, it holds that
$I_t \le b_0^2 \E_{\nu} [\Vert \mathsf{X_1} \Vert^2] \exp (2 C_2 d t)$.
Therefore, we obtain the following $W_2$ bound
\begin{align*}
    W_2({X_1}_{\#} \gamma_{d, a_0^2}, \nu)
    = W_2({X_1}_{\#} \gamma_{d, a_0^2}, {X_1}_{\#} \mu)
    \le C_1 \sqrt{I_1} ~ \le C_1 b_0 \sqrt{\E_{\nu} [\Vert \mathsf{X_1} \Vert^2]} \exp (C_2 d),
\end{align*}
which completes the proof.
\end{proof}

\begin{proof} [Proof of Proposition \ref{prop:stab-vf}]
\begin{itemize}
\item[(i)] On the one hand, by Corollary \ref{cor:prepare-bd}, $v(t, x)$ is Lipschitz continuous in $x$ uniformly over $(t, x) \in [0, 1] \times \sR^d$ with Lipschitz constant $C_2$. By the variational equation \paref{eq:vari-eq-x} and Lemma \ref{lm:flow-map-Lip-bd}, it follows that
\begin{equation*}
    \Vert \nabla_x X_{s,t}(x) \Vert_{2,2}^2 \le \exp \left( 2 \int_s^t \theta_u \diff u \right).
\end{equation*}
Due to the equality \paref{eq:perb-vf-x}, we deduce that
\begin{align*}
        & \Vert X_1(x_0) - Y_1(x_0) \Vert^2 \\
    \le & \left( \int_0^1 \Vert (\nabla_x X_{s,1}) (Y_s(x_0)) \Vert_{2,2} \Vert v(s, Y_s(x_0)) - \tilde{v}(s, Y_s(x_0)) \Vert \diff s \right)^2 \\
    \le & \left( \int_0^1 \Vert (\nabla_x X_{s,1}) (Y_s(x_0)) \Vert_{2,2}^2 \diff s \right)
    \left( \int_0^1 \Vert v(s, Y_s(x_0)) - \tilde{v}(s, Y_s(x_0)) \Vert^2 \diff s \right) \\
    \le & \int_0^1 \exp \left( 2 \int_s^1 \theta_u \diff u \right) \diff s \int_0^1 \Vert v(s, Y_s(x_0)) - \tilde{v}(s, Y_s(x_0)) \Vert^2 \diff s.
\end{align*}
Take expectation and it follows that
\begin{align*}
    W_2^2({Y_1}_{\#} \mu, \nu)
    &\le \E_{x_0 \sim \mu} \left[ \Vert Y_1(x_0) - X_1(x_0) \Vert^2 \right] \\
    &\le \int_0^1 \exp \left( 2 \int_s^1 \theta_u \diff u \right) \diff s \int_0^1 \int_{\sR^d} \Vert v(t, x) - \tilde{v}(t, x) \Vert^2 \tilde{q}_t(x) \diff x \diff t \\
    &\le \varepsilon \int_0^1 \exp \left( 2 \int_s^1 \theta_u \diff u \right) \diff s
\end{align*}
where $\tilde{q}_t$ denotes the density function of ${Y_t}_{\#} \mu$, and we use the assumption that
\begin{equation*}
    \int_0^1 \int_{\sR^d} \Vert v(t, x) - \tilde{v}(t, x) \Vert^2 \tilde{q}_t(x) \diff x \diff t \le \varepsilon
\end{equation*}
in the last inequality.

\item[(ii)] On the other hand, suppose that $\tilde{v}(t, x)$ is Lipschitz continuous in $x$ uniformly over $(t, x) \in [0, 1] \times \sR^d$ with Lipschitz constant $C_3$.
Applying Gr{\"o}nwall's inequality to the variational equation \paref{eq:vari-eq-y}, it follows that
\begin{equation*}
    \Vert \nabla_x Y_{s,t}(x) \Vert_{2,2}^2 \le \exp (2 C_3 (t-s)).
\end{equation*}
By the equality \paref{eq:perb-vf-y}, it holds that
\begin{align*}
        & \Vert Y_1(x_0) - X_1(x_0) \Vert^2 \\
    \le & \left( \int_0^1 \Vert (\nabla_x Y_{s,1}) (X_s(x_0)) \Vert_{2,2} \Vert v(s, X_s(x_0)) - \tilde{v}(s, X_s(x_0)) \Vert \diff s \right)^2 \\
    \le & \left( \int_0^1 \Vert (\nabla_x Y_{s,1}) (X_s(x_0)) \Vert_{2,2}^2 \diff s \right)
    \left( \int_0^1 \Vert v(s, X_s(x_0)) - \tilde{v}(s, X_s(x_0)) \Vert^2 \diff s \right) \\
    \le & \frac{\exp(2 C_3) - 1}{2 C_3} \int_0^1 \Vert v(s, X_s(x_0)) - \tilde{v}(s, X_s(x_0)) \Vert^2 \diff s.
\end{align*}
Taking expectations, it further yields that
\begin{align*}
    W_2^2({Y_1}_{\#} \mu, \nu)
    &\le \E_{x_0 \sim \mu} \left[ \Vert Y_1(x_0) - X_1(x_0) \Vert^2 \right] \\
    &\le \frac{\exp(2 C_3) - 1}{2 C_3} \int_0^1 \int_{\sR^d} \Vert v(t, x) - \tilde{v}(t, x) \Vert^2 p_t(x) \diff x \diff t
\end{align*}
where $X_t(x_0) \sim p_t$.
\end{itemize}
\end{proof}

\section{Time derivative of the velocity field}
In this appendix, we are interested in representing the time derivative of the velocity field via moments of $\mathsf{Y} | \mathsf{X}_t = x$. The result is efficacious for controlling the time derivative with moment estimates, though the computation is somehow tedious.
\begin{proposition}
The time derivative of the velocity field $v(t, x)$ has an expression with moments of $\mathsf{X}_1 | \mathsf{X}_t$ for any $t \in (0, 1)$  as follows
\begin{align*}
    \partial_t v(t, x)
    &= \left( \frac{\ddot{a}_t}{a_t} - \frac{\dot{a}_t^2}{a_t^2} \right) x
       + \left( a_t^2 \frac{\ddot{b}_t}{b_t} - \dot{a}_t a_t \frac{\dot{b}_t}{b_t} - \ddot{a}_t a_t + \dot{a}^2_t \right) \frac{b_t}{a_t^2} M_1 \\
    & \quad + \frac{b_t^2}{a_t^2} \left( \frac{\dot{b}_t}{b_t} - \frac{\dot{a}_t}{a_t} \right) \left( \frac{\dot{b}_t}{b_t} - 2 \frac{\dot{a}_t}{a_t} \right) M^c_2 x
      - \frac{b_t^3}{a_t^2} \left( \frac{\dot{b}_t}{b_t} - \frac{\dot{a}_t}{a_t} \right)^2
      \left( M_3 - M_2 M_1 \right),
\end{align*}
where $M_1 := \E [\mathsf{X}_1 | \mathsf{X}_t = x],  M_2 := \E [\mathsf{X}_1^{\top} \mathsf{X}_1 | \mathsf{X}_t = x], M^c_2 := \Cov(\mathsf{X}_1 | \mathsf{X}_t = x), M_3 := \E [\mathsf{X}_1 \mathsf{X}_1^{\top} \mathsf{X}_1 | \mathsf{X}_t = x]$.
\end{proposition}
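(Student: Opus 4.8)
\emph{Proof strategy.} The plan is to start from the single-conditional-expectation form of the velocity field recorded in \paref{eq:vf-expect-single-rmk}, namely $v(t,x) = \tfrac{\dot a_t}{a_t}x + \big(\dot b_t - \tfrac{\dot a_t}{a_t}b_t\big)M_1$ for $t\in(0,1)$ with $M_1 = \E[\mathsf{X}_1\mid\mathsf{X}_t=x]$, and to apply the product rule in $t$. This splits $\partial_t v(t,x)$ into three pieces: the term $\partial_t\big(\tfrac{\dot a_t}{a_t}\big)x$, the term $\partial_t\big(\dot b_t - \tfrac{\dot a_t}{a_t}b_t\big)M_1$, and the term $\big(\dot b_t - \tfrac{\dot a_t}{a_t}b_t\big)\partial_t M_1$. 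The first two scalar derivatives are elementary: $\partial_t\big(\tfrac{\dot a_t}{a_t}\big) = \tfrac{\ddot a_t}{a_t} - \tfrac{\dot a_t^2}{a_t^2}$ and $\partial_t\big(\dot b_t - \tfrac{\dot a_t}{a_t}b_t\big) = \ddot b_t - \tfrac{\dot a_t\dot b_t}{a_t} - \tfrac{\ddot a_t b_t}{a_t} + \tfrac{\dot a_t^2 b_t}{a_t^2}$, and the latter rearranges to $\big(a_t^2\tfrac{\ddot b_t}{b_t} - \dot a_t a_t\tfrac{\dot b_t}{b_t} - \ddot a_t a_t + \dot a_t^2\big)\tfrac{b_t}{a_t^2}$. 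These produce the first two terms of the claimed formula verbatim.

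The core step is to compute $\partial_t M_1$. By Lemma \ref{lm:cond-cov} the conditional law $p(y\mid t,x)$ is an exponential family: with $\lambda_t := b_t^2/a_t^2$ and $\theta_t := (b_t/a_t^2)x$ one has $p(y\mid t,x) \propto p_1(y)\exp\big(-\tfrac12\lambda_t\|y\|^2 + \langle\theta_t,y\rangle\big)$, so that $M_1 = N_1/Z$ with $N_1 = \int y\,p_1(y)e^{-\frac12\lambda_t\|y\|^2+\langle\theta_t,y\rangle}\diff y$ and $Z = \int p_1(y)e^{-\frac12\lambda_t\|y\|^2+\langle\theta_t,y\rangle}\diff y$. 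For $t\in(0,1)$ the Gaussian factor $e^{-\lambda_t\|y\|^2/2}$ with $\lambda_t>0$ forces every conditional moment of $\mathsf{X}_1\mid\mathsf{X}_t=x$ to be finite and locally bounded in $(t,x)$, which legitimizes differentiation under the integral sign. Differentiating $N_1$ and $Z$ in $t$ (each integrand acquires the factor $-\tfrac12\dot\lambda_t\|y\|^2 + \langle\dot\theta_t,y\rangle$) and using $\partial_t M_1 = \partial_t N_1/Z - M_1\,(\partial_t Z/Z)$, the first-moment and partition-function contributions collapse to
\begin{equation*}
    \partial_t M_1 = -\tfrac12\dot\lambda_t\,(M_3 - M_2 M_1) + M^c_2\,\dot\theta_t ,
\end{equation*}
where $M_2 = \E[\mathsf{X}_1^\top\mathsf{X}_1\mid\mathsf{X}_t=x]$, $M_3 = \E[\mathsf{X}_1\mathsf{X}_1^\top\mathsf{X}_1\mid\mathsf{X}_t=x]$, and $M^c_2 = \Cov(\mathsf{X}_1\mid\mathsf{X}_t=x) = \E[\mathsf{X}_1\mathsf{X}_1^\top\mid\mathsf{X}_t=x] - M_1 M_1^\top$ (here I use $\langle\dot\theta_t,M_1\rangle M_1 = M_1 M_1^\top\dot\theta_t$).

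Finally I would substitute $\dot\lambda_t = \tfrac{2b_t^2}{a_t^2}\big(\tfrac{\dot b_t}{b_t} - \tfrac{\dot a_t}{a_t}\big)$ (the identity noted in the remark after Lemma \ref{lm:cond-cov}) and $\dot\theta_t = \tfrac{b_t}{a_t^2}\big(\tfrac{\dot b_t}{b_t} - 2\tfrac{\dot a_t}{a_t}\big)x$, write $\dot b_t - \tfrac{\dot a_t}{a_t}b_t = b_t\big(\tfrac{\dot b_t}{b_t} - \tfrac{\dot a_t}{a_t}\big)$, and multiply through: the $M^c_2 x$ part becomes $\tfrac{b_t^2}{a_t^2}\big(\tfrac{\dot b_t}{b_t} - \tfrac{\dot a_t}{a_t}\big)\big(\tfrac{\dot b_t}{b_t} - 2\tfrac{\dot a_t}{a_t}\big)M^c_2 x$ and the $(M_3-M_2M_1)$ part becomes $-\tfrac{b_t^3}{a_t^2}\big(\tfrac{\dot b_t}{b_t} - \tfrac{\dot a_t}{a_t}\big)^2(M_3 - M_2 M_1)$, which are exactly the last two terms. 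Summing the three pieces yields the stated expression. I expect no genuine analytic obstacle here: the only point requiring care is justifying the interchange of $\partial_t$ and the integral, which is handled by the Gaussian kernel for $t\in(0,1)$, and otherwise the proof is a disciplined collection of the $a_t,b_t$ coefficients — exactly the "tedious" bookkeeping the statement warns about.
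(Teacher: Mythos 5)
Your proof is correct, and it takes a genuinely different route from the paper's.

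The paper starts from the gradient-field form $v(t,x) = \tfrac{\dot b_t}{b_t}x + \big(\tfrac{\dot b_t}{b_t}a_t^2 - \dot a_t a_t\big)s(t,x)$ and computes $\partial_t s(t,x)$ by differentiating the continuity equation \eqref{eq:cont-eq}:
\begin{equation*}
\partial_t s(t,x) \;=\; -\Big( \nabla_x s(t,x)\, v(t,x) \;+\; \nabla_x v(t,x)\, s(t,x) \;+\; \nabla_x \Tr\big(\nabla_x v(t,x)\big) \Big),
\end{equation*}
then re-expresses $s$, $\nabla_x s$, $\nabla_x v$ and $\nabla_x \Tr(\nabla_x v)$ in terms of $M_1$, $M^c_2$, $M_3$ via Tweedie's formula and Lemma~\ref{lm:cond-cov}, and collects coefficients. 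The trace term requires a separate calculation of $\nabla_x p(y\mid t,x)$ and produces the $M_3 - M_2 M_1 - 2 M^c_2 M_1$ combination before everything recombines.

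You instead work with the $M_1$-form $v(t,x) = \tfrac{\dot a_t}{a_t}x + \big(\dot b_t - \tfrac{\dot a_t}{a_t}b_t\big)M_1$ and differentiate $M_1 = \E[\mathsf{X}_1\mid\mathsf{X}_t=x]$ directly, exploiting the exponential-family structure from Lemma~\ref{lm:cond-cov}: writing $p(y\mid t,x) \propto p_1(y)\exp(-\tfrac12\lambda_t\|y\|^2 + \langle\theta_t,y\rangle)$, the standard identity $\partial_t\E[\mathsf{Y}] = \Cov\big(\mathsf{Y},\, -\tfrac12\dot\lambda_t\|\mathsf{Y}\|^2 + \langle\dot\theta_t,\mathsf{Y}\rangle\big)$ gives $\partial_t M_1 = -\tfrac12\dot\lambda_t(M_3 - M_2 M_1) + M^c_2\dot\theta_t$ in one stroke. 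This sidesteps the PDE argument entirely: you never need $\partial_t s$, never need $\nabla_x \Tr(\nabla_x v)$, and never need to invoke the continuity equation. What the paper's route buys is coherence with the rest of its framework (everything is reduced to the continuity equation and the covariance representations already established); what your route buys is a shorter computation with strictly less bookkeeping, and the $(M_3 - M_2M_1)$ and $M^c_2$ terms appear already correctly grouped rather than needing to cancel an extra $-2M^c_2 M_1$ against a compensating piece. Both require the same regularity to justify differentiating under the integral sign — you state the justification (the Gaussian factor with $\lambda_t>0$ for $t\in(0,1)$) more explicitly than the paper does. I spot-checked the coefficient arithmetic ($\dot\lambda_t$, $\dot\theta_t$, and the three final terms) and it all reproduces the claimed formula.
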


\begin{proof}
By direct differentiation, it implies that
\begin{align*}
    \partial_t v(t, x)
    &= \partial_t \left( \frac{\dot{b}_t}{b_t} \right) x + \partial_t \left( \frac{\dot{b}_t}{b_t} a_t^2 - \dot{a}_t a_t \right) s(t, x) + \left( \frac{\dot{b}_t}{b_t} a_t^2 - \dot{a}_t a_t \right) \partial_t s(t, x) \\
    &= \frac{\ddot{b}_tb_t - \dot{b}_t^2}{b_t^2} x + \left( \frac{\ddot{b}_tb_t - \dot{b}_t^2}{b_t^2} a_t^2 + \frac{\dot{b}_t}{b_t} 2 \dot{a}_t a_t - \ddot{a}_t a_t - \dot{a}_t^2 \right) s(t, x) + \left( \frac{\dot{b}_t}{b_t} a_t^2 - \dot{a}_t a_t \right) \partial_t s(t, x).
\end{align*}
We first focus on $\partial_t s(t, x)$. Since $p_t$ satisfies the continuity equation \paref{eq:cont-eq}, it holds that
\begin{align*}
    \partial_t s(t, x)
    &= \nabla_x (\partial_t \log p_t(x)) \\
    &= - \nabla_x \left( \frac{\nabla_x \cdot (p_t(x) v(t, x))}{p_t(x)} \right) \\
    &= - \nabla_x \left( \frac{(\nabla_x p_t(x))^{\top} v(t, x) + p_t(x) (\nabla_x \cdot v(t, x))}{p_t(x)} \right) \\
    &= - \nabla_x \left( s(t, x)^{\top} v(t, x) + \nabla_x \cdot v(t, x) \right) \\
    &= - \left( (\nabla_x s(t, x))^{\top} v(t, x) + (\nabla_x v(t, x))^{\top} s(t, x) + \nabla_x (\nabla_x \cdot v(t, x)) \right) \\
    &= - \left( \nabla_x s(t, x) v(t, x) + \nabla_x v(t, x) s(t, x) + \nabla_x \Tr( \nabla_x v(t, x) ) \right).
\end{align*}
By direct computation, it holds that
\begin{align*}
    & \quad \nabla_x s(t, x) v(t, x) + \nabla_x v(t, x) s(t, x) \\
    &= \nabla_x s(t, x) \left( \frac{\dot{b}_t}{b_t} x + \left( \frac{\dot{b}_t}{b_t} a_t^2 - \dot{a}_t a_t \right) s(t, x) \right)
       + \nabla_x \left( \frac{\dot{b}_t}{b_t} x + \left( \frac{\dot{b}_t}{b_t} a_t^2 - \dot{a}_t a_t \right) s(t, x) \right) s(t, x) \\
    &= \frac{\dot{b}_t}{b_t} \nabla_x s(t, x) x + \left( \frac{\dot{b}_t}{b_t} a_t^2 - \dot{a}_t a_t \right) \nabla_x s(t, x) s(t, x)
       + \frac{\dot{b}_t}{b_t} s(t, x) + \left( \frac{\dot{b}_t}{b_t} a_t^2 - \dot{a}_t a_t \right) \nabla_x s(t, x) s(t, x) \\
    &= \frac{\dot{b}_t}{b_t} s(t, x) + \frac{\dot{b}_t}{b_t} \nabla_x s(t, x) x
       + 2 \left( \frac{\dot{b}_t}{b_t} a_t^2 - \dot{a}_t a_t \right) \nabla_x s(t, x) s(t, x).
\end{align*}
Then we focus on the trace term
\begin{align*}
    & \quad \nabla_x \Tr(\nabla_x v(t, x)) \\
    &= \nabla_x \Tr \left( \left( \frac{\dot{b}_t}{b_t} - \frac{\dot{a}_t}{a_t} \right) \frac{b_t^2}{a_t^2} \Cov (\mathsf{Y} | \mathsf{X}_t = x) + \frac{\dot{a}_t} {a_t} \rmI_d \right) \\
    &= \left( \frac{\dot{b}_t}{b_t} - \frac{\dot{a}_t}{a_t} \right) \frac{b_t^2}{a_t^2} \nabla_x \Tr( \Cov (\mathsf{Y} | \mathsf{X}_t = x) ) \\
    &= \left( \frac{\dot{b}_t}{b_t} - \frac{\dot{a}_t}{a_t} \right) \frac{b_t^2}{a_t^2} \nabla_x \left( \int \Vert y \Vert^2 p(y | t, x) \diff y - \left\Vert \int y p(y | t, x) \diff y \right\Vert^2 \right) \\
    &= \left( \frac{\dot{b}_t}{b_t} - \frac{\dot{a}_t}{a_t} \right) \frac{b_t^2}{a_t^2} \left( \int \Vert y \Vert^2 \nabla_x p(y | t, x) \diff y - 2 \left( \int \nabla_x p(y | t, x) \otimes y \diff y \right) \left( \int y p(y | t, x) \diff y \right) \right),
\end{align*}
where we notice that
\begin{align*}
    \nabla_x p(y | t, x)
    &= \nabla_x \left( \frac{p(t, x | y) p_1(y)}{p_t(x)} \right) \\
    &= \frac{\nabla_x p(t, x | y) p_1(y)} {p_t(x)} - \frac{p(t, x | y) p_1(y)}{p_t(x)} s(t, x) \\
    &= p(y | t, x) \left( \frac{b_t y - x}{a_t^2} - s(t, x) \right).
\end{align*}
For ease of presentation, we introduce the following notations to denote several moments of $\mathsf{Y} | \mathsf{X}_t = x$
\begin{align*}
    M_1 &:= \E [\mathsf{Y} | \mathsf{X}_t = x],       && M_2 := \E [\mathsf{Y}^{\top} \mathsf{Y} | \mathsf{X}_t = x], \\
    M^c_2 &:= \Cov(\mathsf{Y} | \mathsf{X}_t = x),    && M_3 := \E [\mathsf{Y} \mathsf{Y}^{\top} \mathsf{Y} | \mathsf{X}_t = x].
\end{align*}
By Tweedie's formula in Lemma \ref{lm:tw-formula}, it yields $s(t, x) = \frac{b_t}{a_t^2} M_1 - \frac{1}{a_t^2} x$.
By this expression of $s(t, x)$, it yields
\begin{align*}
    & \quad \nabla_x s(t, x) v(t, x) + \nabla_x v(t, x) s(t, x) \\
    &= \frac{\dot{b}_t}{b_t} s(t, x) + \frac{\dot{b}_t}{b_t} \nabla_x s(t, x) x
       + 2 \left( \frac{\dot{b}_t}{b_t} a_t^2 - \dot{a}_t a_t \right) \nabla_x s(t, x) s(t, x) \\
    &= \frac{\dot{b}_t}{b_t} \left( \frac{b_t}{a_t^2} M_1 - \frac{1}{a_t^2} x \right)
       + \frac{\dot{b}_t}{b_t} \left( \frac{b_t^2}{a_t^4} M^c_2 - \frac{1}{a_t^2} \rmI_d \right) x \\
    & \quad + 2 \left( \frac{\dot{b}_t}{b_t} a_t^2 - \dot{a}_t a_t \right) \left( \frac{b_t^2}{a_t^4} M^c_2 - \frac{1}{a_t^2} \rmI_d \right) \left( \frac{b_t}{a_t^2} M_1 - \frac{1}{a_t^2} x \right) \\
    &= -2 \frac{\dot{a}_t}{a_t^3} x + \frac{b_t}{a_t^2} \left( 2 \frac{\dot{a}_t}{a_t} - \frac{\dot{b}_t}{b_t} \right) M_1
       + \frac{b_t^2}{a_t^4} \left( 2 \frac{\dot{a}_t}{a_t} - \frac{\dot{b}_t}{b_t} \right) M^c_2 x
       + 2 \frac{b_t^3}{a_t^4} \left( \frac{\dot{b}_t}{b_t} - \frac{\dot{a}_t}{a_t} \right) M^c_2 M_1
\end{align*}
and $\nabla_x p(y | t, x) = \frac{b_t}{a_t^2} \left(y - M_1 \right) p(y | t, x)$.
Therefore, we obtain
\begin{align*}
     & \quad \int \Vert y \Vert^2 \nabla_x p(y | t, x) \diff y - 2 \left( \int \nabla_x p(y | t, x) \otimes y \diff y \right) \left( \int y p(y | t, x) \diff y \right) \\
     &= \int \Vert y \Vert^2 \frac{b_t}{a_t^2} \left(y - M_1 \right) p(y | t, x) \diff y
        - 2 \left( \int \frac{b_t}{a_t^2} \left(y - M_1 \right) \otimes y p(y | t, x) \diff y \right) \left( \int y p(y | t, x) \diff y \right) \\
     &= \frac{b_t}{a_t^2} \left[ \int \Vert y \Vert^2 y p(y | t, x) \diff y
     - \left( \int \Vert y \Vert^2 p(y | t, x) \diff y \right) M_1 \right. \\
     & \left.~~~~~~~~ -2 \left( \int y \otimes y p(y | t, x) \diff y
     - M_1 \otimes \int y p(y | t, x) \diff y \right) \left( \int y p(y | t, x) \diff y \right) \right] \\
     &= \frac{b_t}{a_t^2} \left( M_3
     - M_2 M_1
     - 2 M^c_2 M_1 \right).
\end{align*}
Combining the equations above, we obtain
\begin{align*}
    \partial_t v(t, x)
    &= \frac{\ddot{b}_tb_t - \dot{b}_t^2}{b_t^2} x
       + \left( \frac{\ddot{b}_tb_t - \dot{b}_t^2}{b_t^2} a_t^2 + \frac{\dot{b}_t}{b_t} 2 \dot{a}_t a_t - \ddot{a}_t a_t - \dot{a}^2_t \right) \left( \frac{b_t}{a_t^2} M_1 - \frac{1}{a_t^2} x \right) \\
    & ~~~~~ - \left( \frac{\dot{b}_t}{b_t} a_t^2 - \dot{a}_t a_t \right)
      \left[ -2 \frac{\dot{a}_t}{a_t^3} x + \frac{b_t}{a_t^2} \left( 2 \frac{\dot{a}_t}{a_t} - \frac{\dot{b}_t}{b_t} \right) M_1 \right. \\
    &~~~~~~~~~~~~~~~~~~~~~~~~~~~~~~ \left. + \frac{b_t^2}{a_t^4} \left( 2 \frac{\dot{a}_t}{a_t} - \frac{\dot{b}_t}{b_t} \right) M^c_2 x
        + 2 \frac{b_t^3}{a_t^4} \left( \frac{\dot{b}_t}{b_t} - \frac{\dot{a}_t}{a_t} \right) M^c_2 M_1 \right] \\
    & ~~~~~ - \left( \frac{\dot{b}_t}{b_t} a_t^2 - \dot{a}_t a_t \right) \left( \frac{\dot{b}_t}{b_t} - \frac{\dot{a}_t}{a_t} \right) \frac{b_t^3}{a_t^4} (M_3 - M_2 M_1 - 2 M^c_2 M_1) \\
    &= \left( \frac{\ddot{a}_t}{a_t} - \frac{\dot{a}_t^2}{a_t^2} \right) x
       + \left( a_t^2 \frac{\ddot{b}_t}{b_t} - \dot{a}_t a_t \frac{\dot{b}_t}{b_t} - \ddot{a}_t a_t + \dot{a}^2_t \right) \frac{b_t}{a_t^2} M_1 \\
    & \quad + \frac{b_t^2}{a_t^2} \left( \frac{\dot{b}_t}{b_t} - \frac{\dot{a}_t}{a_t} \right) \left( \frac{\dot{b}_t}{b_t} - 2 \frac{\dot{a}_t}{a_t} \right) M^c_2 x
      - \frac{b_t^3}{a_t^2} \left( \frac{\dot{b}_t}{b_t} - \frac{\dot{a}_t}{a_t} \right)^2
      \left( M_3 - M_2 M_1 \right).
\end{align*}
Then we complete the proof.
\end{proof}

\section{Functional inequalities and Tweedie’s formula}
This appendix is devoted to an exposition of functional inequalities and Tweedie’s formula that would assist in our proof.

For a probability measure $\mu$ on a compact set $\Omega \subset \sR^d$, we define the variance of a function $f \in L^2(\Omega, \mu)$ as
\begin{align*}
    \mathrm{Var}_{\mu}(f) := \int_{\Omega} f^2 \diff \mu - \left( \int_{\Omega} f \diff \mu \right)^2.
\end{align*}
Moreover, for a probability measure $\mu$ on a compact set $\Omega \subset \sR^d$ and any positive integrable function $f: \Omega \to \sR$ such that $\int_{\Omega} f \Vert \log f \Vert \diff \nu < \infty$, we define the entropy of $f$ as
\begin{align*}
    \mathrm{Ent}_{\mu}(f) := \int_{\Omega} f \log f \diff \mu - \int_{\Omega} f \diff \mu \log \left( \int_{\Omega} f \diff \mu \right).
\end{align*}

\begin{definition} [Log-Sobolev inequality] \label{def:lsi}
    A probability measure $\mu \in \mathcal{P}(\Omega)$ is said to satisfy a log-Sobolev inequality with constant $C > 0$, if for all functions $f: \Omega \to \sR$, it holds that
    \begin{align*}
        \mathrm{Ent}_{\mu}(f^2) \le 2 C \int_{\Omega} \Vert \nabla f \Vert_2^2 \diff \mu.
    \end{align*}
    The best constant $C > 0$ for which such an inequality holds is referred to as the log-Sobolev constant $C_{\mathrm{LS}}(\mu)$.
\end{definition}

\begin{definition} [Poincar{\'e} inequality] \label{def:pi}
    A probability measure $\mu \in \mathcal{P}(\Omega)$ is said to satisfy a Poincar{\'e} inequality with constant $C > 0$, if for all functions $f: \Omega \to \sR$, it holds that
    \begin{align*}
        \mathrm{Var}_{\mu}(f) \le C \int_{\Omega} \Vert \nabla f \Vert_2^2 \diff \mu.
    \end{align*}
    The best constant $C > 0$ for which such an inequality holds is referred to as the Poincar{\'e} constant $C_{\mathrm{P}}(\mu)$.
\end{definition}

Finally, for ease of reference, we present Tweedie's formula that was first reported in \citet{robbins1956empirical}, and then was used as a simple empirical Bayes approach for correcting selection bias \citep{efron2011tweedie}. Here, we use Tweedie's formula to link the score function with the expectation conditioned on an observation with Gaussian noise.

\begin{lemma} [Tweedie's formula]
\label{lm:tw-formula}
Suppose that $\mathsf{X} \sim \mu$ and $\epsilon \sim \gamma_{d, \sigma^2}$.
Let $\mathsf{Y} = \mathsf{X} + \epsilon$ and $p(y)$ be the marginal density of $\mathsf{Y}$.
Then $\E[\mathsf{X} | \mathsf{Y} = y] = y + \sigma^2 \nabla_y \log p(y)$.
\end{lemma}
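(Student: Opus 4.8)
The plan is to write the marginal density of $\mathsf{Y}$ as a Gaussian convolution, differentiate under the integral sign, and recognize the resulting expression as a conditional expectation via Bayes' rule. First I would record that, since $\mathsf{Y} = \mathsf{X} + \epsilon$ with $\mathsf{X} \sim \mu$ independent of $\epsilon \sim \gamma_{d,\sigma^2}$, the density of $\mathsf{Y}$ is
\begin{equation*}
    p(y) = \int_{\sR^d} \varphi_{x,\sigma^2}(y)\, \mu(\diff x), \qquad \varphi_{x,\sigma^2}(y) = (2\pi\sigma^2)^{-d/2}\exp\!\left(-\tfrac{\Vert y - x\Vert^2}{2\sigma^2}\right),
\end{equation*}
and that $p(y) > 0$ for every $y \in \sR^d$ because it is a Gaussian convolution.

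Next I would compute the gradient of $p$. Using $\nabla_y \varphi_{x,\sigma^2}(y) = -\sigma^{-2}(y-x)\varphi_{x,\sigma^2}(y)$ and the fact that $(y-x)\varphi_{x,\sigma^2}(y)$ decays super-polynomially in $\Vert x\Vert$, uniformly for $y$ in a neighborhood of any fixed point, dominated convergence permits differentiation under the integral sign, giving
\begin{equation*}
    \sigma^2 \nabla_y p(y) = -\int_{\sR^d} (y-x)\,\varphi_{x,\sigma^2}(y)\,\mu(\diff x) = \int_{\sR^d} x\,\varphi_{x,\sigma^2}(y)\,\mu(\diff x) - y\, p(y).
\end{equation*}
Then I would invoke Bayes' rule: the conditional law of $\mathsf{X}$ given $\mathsf{Y} = y$ is $\mu(\diff x \mid y) = \varphi_{x,\sigma^2}(y)\,\mu(\diff x)/p(y)$, so that $\int_{\sR^d} x\,\varphi_{x,\sigma^2}(y)\,\mu(\diff x)/p(y) = \E[\mathsf{X}\mid \mathsf{Y}=y]$. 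Dividing the previous display by $p(y)$ and using $\nabla_y \log p(y) = \nabla_y p(y)/p(y)$ yields $\E[\mathsf{X}\mid \mathsf{Y}=y] - y = \sigma^2 \nabla_y \log p(y)$, which is the asserted identity.

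The only point requiring care is the interchange of differentiation and integration defining $\nabla_y p(y)$; this is routine here because the Gaussian factor together with the linear prefactor $(y-x)$ is dominated by an $x$-integrable envelope locally uniformly in $y$, so no moment assumption on $\mu$ is needed. Everything else is algebraic rearrangement, so I expect no real obstacle.
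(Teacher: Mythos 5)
Your proof is correct, and it is the standard derivation: write $p$ as the Gaussian convolution $p(y)=\int\varphi_{x,\sigma^2}(y)\,\mu(\diff x)$, differentiate under the integral (justified by the Gaussian factor dominating the linear prefactor locally uniformly in $y$), and identify the resulting integral as $\E[\mathsf{X}\mid\mathsf{Y}=y]\,p(y)$ via Bayes' rule. The paper itself does not prove Lemma \ref{lm:tw-formula}; it states it for ease of reference with citations to Robbins (1956) and Efron (2011), and your argument is precisely the classical one from those sources. Your observation that no moment condition on $\mu$ is needed for the interchange is a correct bonus, since the super-exponential decay of $\varphi_{x,\sigma^2}(y)$ in $x$ makes the envelope integrable against any probability measure; in the paper's setting this is moot anyway since Assumption \ref{assump:well-defined} already imposes a finite second moment on $\nu$.
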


\vskip 0.2in
\bibliography{GIF_arXiv_new1.bib}

\end{document}